\documentclass{article}

\usepackage{microtype}
\usepackage{graphicx}
\usepackage{subcaption}
\usepackage{booktabs} 

\usepackage{amsmath}
\usepackage{amssymb}
\usepackage{mathtools}
\usepackage{amsthm}
\usepackage{hyperref}
\usepackage{cleveref}
\usepackage{url}
\usepackage[utf8]{inputenc} 
\usepackage[T1]{fontenc}    
\usepackage{amsmath,bm}
\usepackage{xr-hyper} 
\usepackage{amsfonts}       
\usepackage{nicefrac}       
\usepackage{xcolor}         
\usepackage{tikz-cd}
\usepackage{amsthm}
\usepackage{quiver}
\usepackage{comment}
\usepackage{makecell}
\usepackage{thm-restate}
\usepackage{soul}
\usepackage[normalem]{ulem}

\newtheorem{theorem}{Theorem}[section]
\newtheorem{lemma}{Lemma}[section]
\newtheorem{corollary}{Corollary}[section]
\newtheorem{conjecture}{Conjecture}[section]

\crefname{conjecture}{conjecture}{conjecture}
\Crefname{conjecture}{Conjecture}{Conjectures}

\definecolor{BetterBlue}{rgb}{0.38, 0.44, 1.0}
\definecolor{OSUOrange}{rgb}{1,0.3,0.5}

\newcommand{\R}{\mathbb{R} }
\newcommand{\multiset}[1]{\{\!\!\{#1\}\!\!\}}
\newcommand{\rescaledmultiset}[1]{\left\{\!\!\left\{#1\right\}\!\!\right\}}
\newcommand{\wlcolor}[2]{\chi_{{#1}}^{({#2})}}
\DeclareMathOperator{\hash}{hash}
\DeclareMathOperator{\poly}{poly}

\usepackage{hyperref}

\usepackage[accepted]{icml2026}

\usepackage[textsize=tiny]{todonotes}

\icmltitlerunning{Understanding Truncated Positional Encodings}

\begin{document}

\twocolumn[
  \icmltitle{Understanding Truncated Positional Encodings for Graph Neural Networks}



  \icmlsetsymbol{equal}{*}

  \begin{icmlauthorlist}
    \icmlauthor{James Flora}{equal,osu}
    \icmlauthor{Mitchell Black}{equal,ucsd}
    \icmlauthor{Weng-Keen Wong}{osu}
    \icmlauthor{Amir Nayyeri}{osu}
  \end{icmlauthorlist}

  \icmlaffiliation{osu}{School of Electrical Engineering and Computer Science, Oregon State University, Corvallis, Oregon, USA}
  \icmlaffiliation{ucsd}{Hal\i{}c\i{}o\v{g}lu Data Science Institute, University of California San Diego, San Diego, California, USA}

  \icmlcorrespondingauthor{James Flora}{floraj@oregonstate.edu}

  \icmlkeywords{Machine Learning, ICML}

  \vskip 0.3in
]



\printAffiliationsAndNotice{\icmlEqualContribution}

\begin{abstract}
  Positional encodings (PEs) enhance the power of graph neural networks (GNNs), both theoretically and empirically. Two of the most popular families of PEs---spectral (e.g., Laplacian eigenspaces, effective resistance) and walk-based (polynomials of the adjacency matrix)---are theoretically equivalent in expressive power, with expressivity between the 1-WL and 3-WL tests. However, this equivalence assumes the GNN uses the ``complete'' version of these PEs, which requires $O(n^3)$ time and space complexity. Instead, practitioners commonly use truncated variants of these encodings, such as the first $k$ eigenspaces or powers of the adjacency matrix. However, the theoretical properties of these truncated PEs are unknown. In this work, we initiate the study of these truncated PEs. Theoretically, we show that, under truncation, several families of PEs are fundamentally different in expressive power. As a corollary, we show that truncated spectral PEs are no longer stronger than the 1-WL test. We also study a family of spectral PEs, the $k$-harmonic distances, to highlight the differences in expressive power of even closely related truncated PEs. Finally, we experimentally show that a mix of truncated PEs is preferable to any single family on real-world datasets.
\end{abstract}

\section{Introduction}

Graph Neural Networks (GNNs) have been extensively studied in the machine learning community. GNNs have demonstrated strong performance on a variety of tasks, including node classification, graph classification, and regression. However, GNNs are fundamentally limited in their ability to capture global structural information, which is essential for many graph learning problems.
\par 
This limitation has been formalized \cite{xu2018powerful, morris2019weisfeiler} by showing that message-passing neural networks (MPNNs), perhaps the dominant class of GNNs, are no more powerful than the Weisfeiler-Lehman (WL) test \cite{weisfeiler1968reduction}. 
\par
It is important to situate such expressivity results in the broader context of graph learning. In many real-world benchmarks, most graphs are distinguishable by the WL test \cite{zopf20221wlexpressivenessalmostneed}. That said, this does not mean that MPNNs can perfectly learn these benchmarks, as downstream tasks do not necessarily reduce to deciding graph isomorphism. Rather, expressivity results serve as a principled proxy for the \emph{structural sensitivity} of a model: they characterize which structural properties of the graph can computed by the model. If two graphs differ in some property but are indistinguishable by a family of GNNs, then this family of GNN cannot compute this property. In practice, these properties do not simply determine whether two graphs are distinguishable, but may instead correspond to task-relevant signal such as centrality, motif counts, or long-range dependencies that correlate with the prediction task.
\par
As an alternative to MPNNs, graph transformers (GTs) have recently been proposed. GTs have shown potential to both increase the expressive power of GNNs beyond MPNNs and address issues like oversmoothing~\citep{Oono2020Graph, cai2020noteoversmoothinggraphneural} and oversquashing~\citep{alon2021on}. However, GTs rely on the use of \textit{positional encodings} (PEs) to input the graph to the network. PEs can provide the missing global structural information in the form of node or edge features, and can curb the issue of oversmoothing by requiring fewer layers of message passing or attention for this global structural information to propagate through the graph. However, like MPNNs, PEs also have bounded expressivity, and no known polynomial-time-computable and polynomially-sized PE captures all information about the graph. 
\par 
Two of the most popular families of PEs are spectral~\citep{lim2023sign,huang2024on,zhang2024on,zhang2023rethinking} and walk-based encodings~\citep{ma2023inductive,gai2025homomorphism}. Spectral PEs are derived from the eigenvalues and eigenvectors of the graph Laplacian. 
Walk PEs concatenate powers of the adjacency matrix. Spectral and walk PEs are known to have equivalent expressive power when $\Omega(n)$ powers of the adjacency matrix are used~\citep{black2024comparing,gai2025homomorphism}, and both are known to be lie between the 1-WL and 3-WL (2-FWL) tests in the WL expressivity hierarchy.

While spectral and walk positional encodings are equivalent to one another in theory, these PEs require a sizable amount of computation for most real world tasks, requiring a $O(n^3)$ time pre-processing step for every graph in the training data, and $O(n^3)$ time and space complexity for graph transformers. As such, practitioners often use truncated versions of these encodings (e.g., using the first $k$ eigenvectors/eigenspaces of the Laplacian or first $k$ powers of the adjacency matrix) to see some of the benefit from using the encoding while avoiding the prohibitive overhead cost. However, how these truncated representations compare in practice has not been studied. 


\subsection{Our Contributions}

We aim to fill this gap in the research by comparing fixed-size spectral and walk PEs both theoretically and empirically. First, we show that truncated spectral and walk encodings have very different expressive powers. Specifically, we show that constant-sized versions of one can distinguish graphs that size $\Omega(n)$ versions of the other cannot. Moreover, we show the surprising result that $\Omega(n)$-size spectral encodings can be less expressive than 1-WL test. In short, while spectral and polynomial positional encodings are equivalent in their complete form, they carry different information about the graph in their truncated form.
\par 
Another common and well-studied positional encoding for graph transformers is the \textit{resistance distance}~\citep{zhang2023rethinking} (aka effective resistance.) We show that the effective resistance is a weaker PE than the eigenspace projections and walk PE on weighted graphs. Additionally, we show that the effective resistance is not stronger than the weighted analog of the 1-WL test, so GTs using effective resistance are not stronger than message-passing network that use the edge weights.
\par 
As the effective resistance alone is not as powerful as the eigenspace projection and walk PEs, we consider a family of spectral positional encodings called the $k$-harmonic distances~\citep{black2024biharmonic} that generalize the resistance distance.\footnote{The squared 1-harmonic distance is the effective resistance.} We argue that the $k$-harmonic distances are a natural bridge between spectral and polynomial positional encodings. First, we prove that using $\Theta(n)$ $k$-harmonic distances has equivalent expressive power to using the complete spectral or walk positional encodings, showing that spectral and walk encodings are only two of potentially many positional encodings with the same expressive power, opening the door for alternative positional encodings (\Cref{thm:kharmonics_as_strong_as_spectral}). We show that even within the family of $k$-harmonic distances, fixed-size encodings can have different expressive power. For MPNNs, the 2-harmonic (biharmonic) distance can distinguish pairs of graphs in a constant number of layers while the resistance distance needs $O(n)$ layers. While this theorem is less general than previous results, it illustrates the trade-offs between positional encodings in the same family. Conversely, we prove that if some $k$-harmonic distance is able to distinguish a pair of graphs, then most other $k$-harmonic distances can distinguish these graphs as well. 
\par 
As we show the different truncated PEs can have very different expresssive power, we propose the following rule-of-thumb: if you are going to use truncated PEs, mix PEs from different families. We confirm this principle on real-world benchmarks, where we show that using a mix PEs from different families achieves superior performance than using PEs from a single family. 

\section{Related Work}
\paragraph{Positional Encodings in GNNs} While we consider a fine-grained approach in using truncated positional encodings, there have been many previously proposed methods and theory for the full versions of these positional encodings. The first graph transformers used Laplacian eigenvectors as positional encodings \cite{dwivedi2021generalization} with subsequent works following suit \cite{kreuzer2021rethinking, rampasek2022recipe, zhou2024towards}. However, Laplacian eigenvectors suffer from sign and basis ambiguities, so \citet{lim2023sign} proposed to use the projection onto the eigenspaces, rather than the eigenvectors themselves, to avoid this ambiguity. \citet{huang2024on} and \citet{zhang2024on} proposed alternative techniques using the projections onto the eigenspaces, which we make use of as a type of "truncated" encoding.

\paragraph{\texorpdfstring{$\boldsymbol{k}$}{k}-harmonic distances} Though the $k$-harmonic distances are a natural extension of the effective resistance and represent a type of truncated spectral information, they were only recently proposed \citep{black2024biharmonic} and has not been fully explored. Its efficacy has been shown for clustering and as a centrality measure, but little else is known.

The effective resistance is the best studied $k$-harmonic distance in the literature, and it has countless applications across graph theory. For GNNs, effective resistance has been shown to be a useful positional encoding for MPNNs~\citep{velingker2023affinity} and GTs~\cite{zhang2023rethinking}. Further, it has been shown that MPNNs that make use of effective resistance are strictly more expressive than the WL test \cite{velingker2023affinity}.

The biharmonic distance has been used in the study of consensus networks \cite{yi2018consensus} and has been shown to be a measure of edge centrality \cite{yi2018biharmonic, li2018kirchhoff, black2024biharmonic}. However, it has not been studied as a positional encoding in GNNs.

\paragraph{Expressivity of Positional Encodings} Understanding the expressive power of PEs to distinguish non-isomorphic graphs has recently been an active area of research as it relates to graph learning, specifically in graph transformers \cite{zhang2023rethinking, zhang2024on, black2024comparing}. Importantly, it is generally known that the expressivity of most positional encodings (including walks, effective resistance, shortest path distance, and eigenspace projections) lies between between the 1-WL test and the 3-WL tests.

\section{Background}
\label{sec:background}

Let $G = (V,E)$ be an undirected, unweighted graph. Denote the number of vertices and edges as $n = |V|$ and $m = |E|$. Additionally, let the graph have a set of node features, $\{x_v \in \mathbb{R}^d : v\in V\}$, and a set of edge features, $\{ e_{uv} \in \mathbb{R}^f : (u,v)\in E\}$. The \textbf{\textit{adjacency matrix}} of $G$ is the matrix $A \in \mathbb{R}^{n \times n}$ where $A_{i,j} = 1$ if $(i,j) \in E$ and $0$ otherwise. The \textbf{\textit{degree matrix}} is the diagonal matrix $D \in \mathbb{R}^{n \times n}$ where $D_{i,i} = \deg(i)$. The \textbf{\textit{Laplacian matrix}} is $L = D - A$ and is the central structure of study in spectral graph theory. L is positive semidefinite. The eigenvalues of the Laplacian are the \textbf{\textit{spectrum}} of the graph. Throughout this paper, multisets are denoted with the double curly bracket notation $\multiset{}$.

\paragraph{Graph Neural Networks} \textit{\textbf{Graph neural networks (GNNs)}} are functions that take as input a graph $G=(V,E)$, a set of node features, $\{x_v \in \mathbb{R}^d : v\in V\}$, and (optionally), a set of edge features $\{ e_{uv} \in \mathbb{R}^f : (u,v)\in E\}$. The GNN iteratively updates the nodes features, where $h^{(t)}_v$ denotes the node features at layer $t$. Initially, $h^{(0)}_v = x_v$. We use the terminology graph neural network to encompass both message-passing neural networks and graph transformers.

\paragraph{Message Passing Neural Networks}

 The most common type of graph neural network are \textit{\textbf{message passing neural network (MPNNs)}}~\citep{gilmer2017neural}. Each layer of an MPNN updates the feature of a node $h^{(t)}_v$ by aggregating the feature from its neighbors $h^{(t)}_u$ and (optionally) the features of the incident edges $e_{uv}$. For each layer $t \in \{1,...,T\}$, a message passing layer updates the node features using the following formula:

\begin{equation*} 
    h_v^{(t)} = \phi^{(t)}\left( h_v^{(t-1)}, \psi^{(t)}\left(\multiset{ (e_{uv},\, h_u^{(t-1)}) : (u,v)\in E }\right)  \right),
\end{equation*} 

where $\phi^{(t)}$ and $\psi^{(t)}$ are learnable functions and $\psi^{(t)}$ is invariant on multisets, e.g. mean, sum, max.

\paragraph{Positional Encodings}

Positional encodings are functions defined on a graph that describe its topology. In this work, we are primarily concerned with relative positional encodings: encodings that associate values with each pair of nodes. A \textit{\textbf{relative positional encoding}} (RPE) is a map $\psi$ that associates to any graph $G$ and map $\psi_G:V_G\times V_G\to\R^d$ such that for any two isomorphic graphs $G$ and $H$ and isomorphism $\sigma:V_G\to V_H$, it follows that $\psi_G = \sigma\circ\psi_H.$ 

\paragraph{Graph Transformers}
\label{sec:graphormer}

More recently, \textit{\textbf{graph transformers}} have been proposed as an alternative to message passing by replacing neighborhood aggregation with global self-attention over all nodes. While there have been may proposed ways of adapting the transformer for graphs, we use the \textit{\textbf{Graphormer-GD}} architecture, as it is a strong way of incorporating RPEs into a transformer. In a Graphormer-GD layer, each node feature $h_v^{(t)}$ is updated by attending to the features of all nodes $u \in V$. Specifically, for each head $h \in \{1,...,H\}$, queries, keys, and values are computed as linear projections $Q_v = h_v^{(t-1)}W_Q^{h}$, $K_u = h_u^{(t-1)}W_K^{(h)}$, $V_u = h_u^{(t-1)}W_V^{(h)}$, and attention weights are given by

\begin{equation*} \label{eq:attention}
    A_{vu}^h = f_1(\psi_G(u,v))\text{softmax}_u \left( \frac{Q_vK_u^T}{\sqrt{d_h}} + f_2(\psi_G(v,u))\right)
\end{equation*}

where $f_1$ and $f_2$ are two functions (typically MLPs) and $\psi_G(v,u)$ denotes some positional encoding between nodes $v$ and $u$. The node features are then updated as 

\begin{equation*} \label{eq:transformer}
    h_v^{(t)} = h_v^{(t-1)} + \sum_{h=1}^H \sum_{u \in V} A_{vu}^{(h)}V_uW_O^{(h)}
\end{equation*}

followed by a node-wise feed-forward transformation. 

\begin{figure*}[ht]
    \centering
    \includegraphics[width=\linewidth]{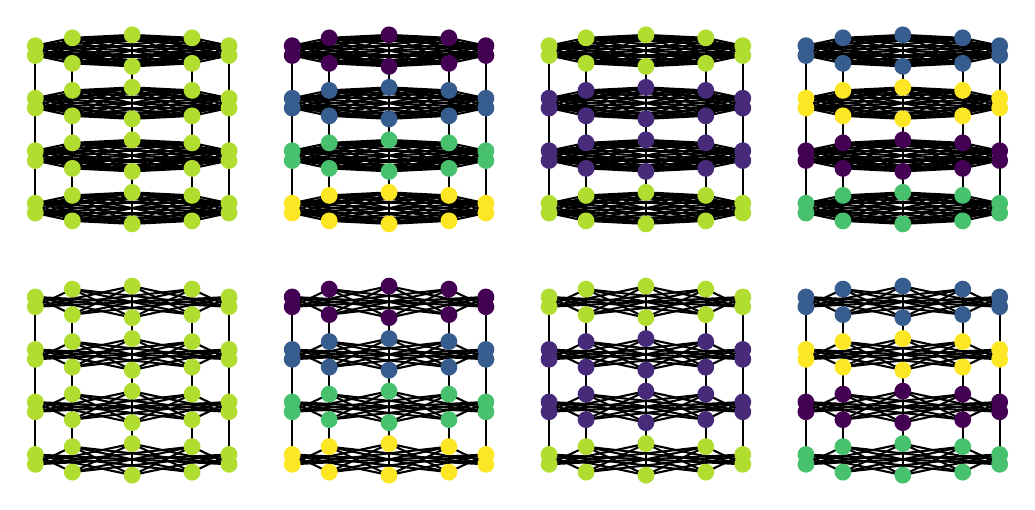}
    \caption{Top row: The four smallest eigenvectors of the graph $P_{4}\times K_{10}$. Bottom row: The four smallest eigenvectors of the graph $P_{4}\times K_{5,5}$. Comparing the two rows shows there is a bijection between the nodes of $P_{4}\times K_{10}$ and $P_{4}\times K_{5,5}$ that preserves the values of the eigenvectors. These two graphs are indistinguishable by 4-EP-WL.}
    \label{fig:path cross complete eigenvectors}
\end{figure*}

\paragraph{WL Tests}
The \textit{\textbf{Weisfeiler-Lehman (1-WL) test}} is an iterative algorithm that assigns labels to nodes in order to deduce whether or not two graphs are isomorphic. Specifically, the 1-WL test assigns each vertex $v\in V$ a color $\chi^{(t)}(v)$ for all $t\geq 0$. The labels are initialized to some arbitrary constant in the $0$\textsuperscript{th} iteration, e.g., $\chi^{(0)}(v)=1$ for all $v\in V$. For $t\geq 1$, the 1-WL color of a vertex $v$ is defined
\begin{equation*} \label{eqn:wl}
    \chi^{(t)}(v) = \hash\left(  \chi^{(t-1)}(v) , \multiset{\chi^{(t)}(u) : (u,v) \in E}  \right)
\end{equation*}
where $\hash$ is an injective hash function.
\par 
Two graphs $G$ and $H$ are \textit{\textbf{indistinguishable}} by the WL test if they have the multisets of colors for all $t\geq 0$, or formally,
\begin{align*}
    \multiset{\chi^{(t)}(v) : v \in V_G} =& \multiset{\chi^{(t)}(v) : v \in V_H}  \tag{$\forall t\geq 0$}.
\end{align*}

The 1-WL test and MPNNs are known to be equal in their power to distinguish non-isomorphic graphs \cite{xu2018powerful,morris2019weisfeiler}.

\begin{theorem}[{\citep{xu2018powerful,morris2019weisfeiler}}]
    Two graphs are indistinguishable the WL test iff they are indistinguishable by all MPNNs.
\end{theorem}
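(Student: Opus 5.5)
The plan is to prove the two directions separately, each by induction on the number of rounds/layers. Throughout, I assume a uniform initial feature $x_v$ (matching $\chi^{(0)}\equiv 1$) and no edge features, so the comparison is with the plain 1-WL test. I read the neighbour multiset in the WL update as using the colors $\chi^{(t-1)}(u)$ from the previous round. For "indistinguishable by all MPNNs" I use the standard meaning: for every MPNN with a permutation-invariant readout, the two graphs receive equal outputs.

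\textbf{WL-indistinguishable implies MPNN-indistinguishable.} Fix an arbitrary MPNN with update maps $\phi^{(t)},\psi^{(t)}$. I would show by induction on $t$ that for all vertices $u,v$, possibly in different graphs, $\chi^{(t)}(u)=\chi^{(t)}(v)$ implies $h^{(t)}_u=h^{(t)}_v$.
\begin{itemize}
\item At $t=0$ this holds because all features are equal.
\item For the inductive step, equal $\chi^{(t)}$ colors mean, by injectivity of $\hash$, equal previous colors and equal multisets of neighbour colors.
\item By the induction hypothesis these give equal $h^{(t-1)}$ and equal multisets of neighbour features.
\item Since $\phi^{(t)}$ and $\psi^{(t)}$ are functions, the resulting $h^{(t)}$ are equal.
\end{itemize}
So each $h^{(t)}$ factors through $\chi^{(t)}$. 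Equal color multisets then give equal feature multisets, and hence equal readouts.

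\textbf{MPNN-indistinguishable implies WL-indistinguishable.} I prove the contrapositive: if WL separates $G$ and $H$ at some round $T$, I construct one MPNN whose features at layer $T$ are an injective function of $\chi^{(T)}$ on the finite set of colors arising in $G\cup H$. I would choose $\psi^{(t)}$ to be sum aggregation after an injective encoding, and $\phi^{(t)}$ to injectively combine the previous feature with the aggregated neighbour sum. The standard device of Xu et al.\ works here. The set of colors present at each round is finite, so enumerate it as $c_1,\dots,c_N$ and encode $c_i$ as $N^{-i}$, or as a one-hot vector. With bounded multiset sizes (at most $n$), sums of such encodings determine the multiset uniquely. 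Any injective map of the pair (current code, sum) then serves as the next $\phi$.

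Induction on $t$ shows that $h^{(t)}_v$ determines $\chi^{(t)}(v)$. A sum readout of one-hot codes at layer $T$ therefore recovers the color histogram, which differs between $G$ and $H$. Hence this MPNN separates them.

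The main obstacle is this second direction: the aggregation must be injective on multisets. Sum over a one-hot or exponential encoding handles it, because only finitely many colors and bounded multiset sizes occur for the fixed pair of graphs. Mean or max aggregation would fail, so the statement should be read as quantifying over all MPNNs, including sum aggregators. Since the theorem is cited from \citet{xu2018powerful,morris2019weisfeiler}, I would present this argument only as a sketch and defer to those works.
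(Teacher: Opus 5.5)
The paper gives no proof of this statement; it only cites \citet{xu2018powerful,morris2019weisfeiler}. Your two inductions are the standard argument from those works, and the overall approach is sound.

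One detail in the converse direction is wrong as written. If $N$ is the number of colors, the code $c_i\mapsto N^{-i}$ is \emph{not} injective on multisets of size up to $n$. Multiplicities can reach $N$, and then the digits carry. For example, with $N=3$, the multisets $\multiset{c_1,c_3,c_3,c_3}$ and $\multiset{c_2,c_2,c_2,c_2}$ both sum to $4/9$. In Xu et al.'s construction the base is a bound on the multiset size, not the number of colors. You can repair this in either of two ways:
\begin{itemize}
\item use base $n+1$, or any integer exceeding the maximum degree;
\item use your one-hot alternative, where the sum is exactly the color histogram and so trivially determines the multiset.
\end{itemize}

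It also helps to note that in the paper's MPNN definition, $\psi^{(t)}$ is an arbitrary multiset-invariant function. You may therefore take $\psi^{(t)}$ to be injective on multisets and $\phi^{(t)}=\hash$, so that the MPNN simulates WL verbatim. Your caveat about mean or max aggregation only matters if the aggregator class is restricted.
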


Further, there are higher order variants of the 1-WL test called the \textit{\textbf{k-WL tests}} that assign colors to tuples of $k$ nodes rather than to single nodes; see~\citep{huang2021short}.

\paragraph{GD-WL}

The 1-WL provides a bound on the expressive power of an MPNN. We would like to have similar tests that provide an upper bound on other types of GNNs, especially those that use PEs. Given an RPE $\psi$, we can characterize the power of a transformer using $\psi$ as a positional encoding using a variant of the WL test. The \textit{\textbf{GD-WL test with RPE $\boldsymbol{\psi}$}}~\citep{zhang2023rethinking}~(or \textit{\textbf{$\boldsymbol{\psi}$-WL}} for short) assigns a color to each vertex $v\in V_G$ according to the rule 
\begin{align*}
\wlcolor{\psi}{t}(v) =&\, 0 \tag{$t=0$}\\ 
\wlcolor{\psi}{t}(v) =&\, \rescaledmultiset{{ \left(\wlcolor{\psi}{t-1}(u),\,\psi(u,v)\right)\::\: u\in V_G} \tag{$t\geq 1$}}
\end{align*}

When Graphormer-GD uses $\psi$ as an RPE, it is equally powerful at distinguishing graphs as the $\psi$-WL test.\footnote{\citet[Theorem 1]{stoll2026generalizable} recently proved that a simplified version of Graphormer-GD that does not use the functions $f_1$ has equivalent expressive power to the original Graphormer-GD. Our experimental work was started before the publication of this paper so uses the older Graphormer-GD architecture.}

\begin{theorem}[{\citep[Theorem E.3]{zhang2023rethinking}}]
    Let $\psi$ be an RPE. Then two graphs are indistinguishable by $\psi$-WL iff they are indistinguishable by all Graphormer-GDs using $\psi$ as an RPE. 
\end{theorem}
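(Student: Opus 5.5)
We prove the two directions separately. First, if all Graphormer-GDs using $\psi$ fail to distinguish $G$ and $H$, then $\psi$-WL fails as well. Second, any Graphormer-GD output is a function of the $\psi$-WL colors.

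\emph{Upper bound (GD-WL refines Graphormer-GD).} We show by induction on $t$ that $h_v^{(t)}$ is a function of $\wlcolor{\psi}{t}(v)$, with the same function for both graphs. The base case holds because all initial features are equal. For the inductive step, look at the layer update. Each head computes
\[
\sum_{u} f_1(\psi_G(u,v))\,\mathrm{softmax}_u\big(Q_vK_u^T/\sqrt{d_h} + f_2(\psi_G(v,u))\big)V_u W_O^{(h)}.
\]
This is a function of $h_v^{(t-1)}$ together with the multiset $\multiset{(h_u^{(t-1)},\psi(u,v),\psi(v,u))}$. Softmax normalisation is a sum over this same multiset, so it is covered too. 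We need symmetry of $\psi$, or else we include $\psi(v,u)$ in the GD-WL pair; under the symmetric convention the paper uses this is automatic. By induction each $h_u^{(t-1)}$ is determined by $\wlcolor{\psi}{t-1}(u)$. Hence $h_v^{(t)}$ is determined by the multiset defining $\wlcolor{\psi}{t}(v)$. The term $h_v^{(t-1)}$ is recovered from the entry $u=v$, whose PE value $\psi(v,v)$ we may assume is distinguishable, or we simply note that GD-WL colors refine the previous round. The graph readout is a multiset function of the node features, so equal GD-WL color histograms force equal outputs.

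\emph{Lower bound (Graphormer-GD simulates GD-WL).} We build, for fixed graphs $G,H$ (finitely many PE values and colors), a Graphormer-GD whose layer-$t$ features injectively encode $\wlcolor{\psi}{t}$. The plan is as follows. First, encode the finitely many current colors as distinct vectors. Then choose $W_Q=W_K=0$, so that softmax is uniform over all $n$ nodes. Finally, let $f_1$ be an MLP approximating an injective map on the finite set of PE values, for instance into one-hot coordinates. The head then outputs
\[
\tfrac1n\sum_u f_1(\psi(u,v))\otimes V_u.
\]
To realise the tensor product we use multiple heads, one for each PE value class. With $V_u$ a one-hot color encoding, this output is $\tfrac1n$ times the count vector of pairs (color, PE value), which is injective on multisets of fixed size $n$. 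The residual plus feed-forward layer then maps this injectively to a fresh one-hot code, which an MLP can do on finitely many points. If $G$ and $H$ have different sizes, a distinguishing readout is trivial anyway.

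The main obstacle is the injectivity in the lower bound. The mean aggregation induced by softmax loses multiset size, and $f_1$ multiplies rather than concatenates. We handle this with the finiteness argument above: one head per PE value realises the product, and it suffices to treat $n$ as fixed. This mirrors the argument of \citet[Theorem E.3]{zhang2023rethinking}.
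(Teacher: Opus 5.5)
The paper does not prove this statement; it quotes it from Zhang et al.\ (Theorem E.3). Your two-direction argument is the standard proof from that reference, and it is sound.

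\emph{Upper bound.} The induction showing that Graphormer-GD features are a fixed function of the $\psi$-WL colors works. To recover $h_v^{(t-1)}$, the right route is the fact that $\wlcolor{\psi}{t}$ determines $\wlcolor{\psi}{t-1}$, which follows by a one-line induction. Your other option does not work in general, since $\psi(v,v)$ need not be distinguishable.

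\emph{Lower bound.} For a fixed pair of equal-size graphs, uniform attention with one head per PE value, and $f_1$ the indicator of that value, does produce $\tfrac1n$ times the count vector of (color, PE value) pairs. One small fix: $W_Q=W_K=0$ makes the softmax uniform only if you also take $f_2$ constant.

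\emph{Symmetry.} Your symmetry caveat is genuinely needed, and it is not a convention of the paper. The paper's RPE definition does not require $\psi(u,v)=\psi(v,u)$, although every PE it actually uses is symmetric. Its attention formula feeds $\psi(v,u)$ into $f_2$, while $\psi$-WL only sees $\psi(u,v)$. As a result, the direction ``$\psi$-WL-indistinguishable $\Rightarrow$ Graphormer-GD-indistinguishable'' fails for asymmetric $\psi$.

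Here is a counterexample.
\begin{itemize}
\item Let $G_1=P_3\sqcup K_1$ (path $y_1\,x\,y_2$ plus isolated $z$), with $\psi=1$ exactly on $(x,y_1),(x,y_2)$.
\item Let $G_2$ be the path $a\,b\,c\,d$, with $\psi=1$ exactly on $(b,a),(c,d)$.
\item Set $\psi=0$ elsewhere and on all other graphs. This is automorphism-invariant, hence an RPE.
\end{itemize}
In both graphs, two vertices receive no arc, and the other two each receive exactly one arc from the first class. So $\psi$-WL stabilizes with identical histograms.

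Now take a two-layer Graphormer-GD. The first layer computes in-degrees, using uniform softmax and $f_1(p)=p$. The second layer uses $f_1\equiv 1$, $e^{f_2(p)}=2^p$ and $V_u=\mathrm{indeg}(u)$. It outputs $2/3,1/2,1/2,1/2$ on $G_1$ and $3/5,3/5,1/2,1/2$ on $G_2$, and a sum readout separates these ($13/6$ versus $11/5$).

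So state symmetry of $\psi$ as an explicit hypothesis of the theorem, or add $\psi(v,u)$ to the $\psi$-WL tuple. Do not attribute the assumption to the paper.
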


\section{Truncated Positional Encodings}

In this section, we prove several properties of truncated positional encodings, including truncations of the eigenspace projections~\cite{lim2023sign,huang2024on,zhang2024on}, powers of the adjacency matrix~\citep{ma2023inductive}, and the effective resistance~\citep{zhang2023rethinking}. We generally show these truncated PEs have very different expressive power. In doing so, we are able to show that it is possible for these encodings to be weaker than natural extensions of the $1$-WL test, generally making them incomparable to the WL hierarchy.

\subsection{Eigenspace Projections}

The eigenvectors and eigenvalues of the graph Laplacian are powerful descriptors of a graph, encoding information like cluster structure~\citep{LeeGharanTrevisan2014MultiwaySpectral}. The original graph transformers using the Laplacian eigenvectors used them in their vector form, either adding them or concatenating them to the node features. However, this way of using Laplacian eigenvectors suffered from the fact that eigenvectors are not unique (they have sign and basis amiguity). Later,~\citet{lim2023sign} observed that the eigenvectors could be uniquely described by the projection matrices onto the eigenspaces.~\citet{zhang2024on} summarized the spectral information into a single invariant. For a graph Laplacian $L$ with $l$ distinct eigenvalues, each eigenvalue $\lambda_i$ is associated with a projection matrix onto its eigenspace $\Pi_i = \sum_{j=1}^{J_i} z_{i,j}z_{i,j}^T$, where $\{z_{i,j}\}_{j=1}^{J_i}$ is any orthonormal basis of the eigenspace. Importantly, these projections are basis-independent. The \textit{\textbf{eigenspace projection invariant}} is the RPE $\mathcal{P}:V\times V\to\R^{2l}$ defined 
$$
    \mathcal{P}(u,v) = \{(\lambda_i,\,\Pi_i(u,v))\::\: 1\leq i\leq l\} 
$$
The WL test using $\mathcal{P}$ as an encoding is called the \textit{\textbf{EP-WL test}}. EP-WL is known to be weaker than $3$-WL but stronger than the 1-WL test. It is also more powerful than any other spectral test, e.g., the GD-WL tests using spectral distance~\citep{zhang2024on}.

Importantly, while EP-WL has been shown to be powerful as a positional encoding, it scales with the size of the graph and as such can be prohibitively large in practice for both preprocessing and training time, incurring $O(n^3)$ space complexity for the full projections. As such, one popular option~\citep{lim2023sign,ma2023inductive} for employing eigenspace projections in practice is to simply choose the $k$ smallest eigenvalues $\lambda_i$ for the encoding $\mathcal{P}^k(u,v) = \{(\lambda_i,P_i(u,v))\::\:1\leq i\leq k\}$; we dub this variant the $k$-EP-WL invariant. Importantly, $k$-EP-WL only computes $k$ projections with space complexity $O(kn^2)$. We prove the following about this type of truncation. See~\Cref{fig:path cross complete eigenvectors}

\begin{restatable}{theorem}{kepwlweakerthanwl}
\label{thm: kEP-WL weaker than wl}
    There exist a pair of graphs $G$ and $H$, both with $n$ vertices, such that $G$ and $H$ are distinguishable by the 1-WL test, but are indistinguishable by the $k$-EP-WL test for $k\in\Omega(n)$.
\end{restatable}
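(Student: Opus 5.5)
We use the pair suggested by \Cref{fig:path cross complete eigenvectors}: $G = P_m \times K_{2r}$ and $H = P_m \times K_{r,r}$ (Cartesian products), with $m$ a fixed constant or small and $r$ large, so $n = 2rm$.

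\textbf{Step 1: 1-WL distinguishes them.} $K_{2r}$ is $(2r-1)$-regular and $K_{r,r}$ is $r$-regular. A vertex of $G$ in layer $i$ of the path has degree $\deg_{P_m}(i)+2r-1$, while the corresponding vertex of $H$ has degree $\deg_{P_m}(i)+r$. For $r\ge 2$ these degree multisets differ, so 1-WL separates $G$ and $H$ after one iteration.

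\textbf{Step 2: compute the low spectrum.} For a Cartesian product we have $L_{A\times B}=L_A\otimes I+I\otimes L_B$. Its eigenvectors are products $x\otimes y$ with eigenvalue $\mu+\nu$.
\begin{itemize}
\item The Laplacian eigenvalues of $K_{2r}$ are $0$ (constant vector) and $2r$ (multiplicity $2r-1$).
\item The Laplacian eigenvalues of $K_{r,r}$ are $0$, $r$ (multiplicity $2r-2$), and $2r$ (the signed vector $\pm 1$ on the two sides).
\item The path eigenvalues satisfy $\mu_j = 2-2\cos(\pi j/m) \le 4 < r$ once $r>4$.
\end{itemize}
Hence in both graphs the $m$ smallest distinct eigenvalues are exactly $\mu_0,\dots,\mu_{m-1}$. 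Their eigenspaces are $\{x_j\otimes \mathbf 1/\sqrt{2r}\}$, one-dimensional provided the $\mu_j$ are distinct and do not collide with $\nu+\mu_{j'}$; this is guaranteed by $r>4$. The corresponding projections are
$$\Pi_j((i,a),(i',b)) = x_j(i)x_j(i')/(2r)$$
in both graphs, independent of $a,b$. We take $k=m$, choosing e.g.\ $m=n/(2r)$ with $r=5$, so that $k = n/10\in\Omega(n)$.

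\textbf{Step 3: $k$-EP-WL cannot separate.} Fix any bijection $\sigma$ that preserves the path coordinate, mapping $(i,a)\mapsto(i,\tau(a))$. Under $\sigma$, $\mathcal P^k_G(u,v)=\mathcal P^k_H(\sigma u,\sigma v)$ for all pairs. Because the $\psi$-WL update aggregates over all $u\in V$ (not just neighbours), an induction on $t$ then shows $\wlcolor{\mathcal P^k}{t}(v)=\wlcolor{\mathcal P^k}{t}(\sigma v)$. Therefore the color multisets agree at every iteration, and the graphs are indistinguishable even though they are non-isomorphic.

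\textbf{Main obstacle.} The key difficulty is making $k$ genuinely $\Omega(n)$. This forces $r$ to be constant, so we must check carefully that $r$-level eigenvalues do not enter the first $k$ distinct eigenvalues. Since $\mu_j<4$, it suffices to take $r\ge5$; the cutoff is then eigenvalue $\mu_{m-1}<4<r$, so exactly $m$ eigenvalues lie below $r$. If $k$ exceeds $m$, the eigenvalue $r$ and eigenvalue sums in $H$ would differ from $G$. So we state the result with $k\le m=n/(2r)=\Theta(n)$.
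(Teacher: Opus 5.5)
Your proposal is correct and is essentially the paper's proof: the paper uses exactly $P_n\times K_{10}$ versus $P_n\times K_{5,5}$ (your choice $r=5$). It separates them by degree, notes that every path eigenvalue is below $5$, so the lowest $n$ eigenpairs of both graphs come from $x_{P_n,i}\otimes 1_{10}$, and concludes with the same path-coordinate-preserving bijection and induction. One small fix: drop your opening remark about taking $m$ small and $r$ large, since, as your final paragraph correctly observes, $k\in\Omega(n)$ forces $r$ to be constant and the path length to be linear in $n$.
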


Importantly, this proves that truncated EP-WL test is incomparable to the WL hierarchy, as there are pairs of graphs where it is both stronger and weaker than 1-WL. The proof of~\Cref{thm: kEP-WL weaker than wl} is in~\Cref{apx:kEP-WLweakerthanwl}. This theorem has important implications for the design of expressive graph transformers. Truncated eigenvectors are one of the most common PEs for graph transformers, but~\Cref{thm: poa wl weaker than EP-WL} proves that using these encodings alone is not enough to guarantee the transformer is more expressive than MPNNs. This suggests there is real utility in using some form of adjacency information like adjacency PE~\citep{black2024comparing}, shortest-path distance~\citep{ying2021transformers,zhang2023rethinking}, or message-passing layers~\citep{rampasek2022recipe} in addition to eigenvector encodings.

\subsection{Walk Encodings}

Powers of the adjacency matrix count the number of walks between two nodes, where $(A^k)_{uv}$ counts the number of length-$k$ walks between nodes $u$ and $v$. In graph transformers, these matrices are used as positional encodings to inject structural bias into the attention block.\footnotemark We can define $k$-Walk-WL, where the RPE is the first $k$ powers of the adjacency matrix. This generalizes using just the adjacency matrix as an RPE, which has equivalent power as the 1-WL test. It is known that $k$-Walk-WL with $k = \Theta(n)$ is as powerful as WP-EL, meaning it is stronger than $1$-WL but weaker than $3$-WL; see~\citep[Theorem 4.6]{black2024comparing} or ~\citep[Lemma 3.17]{gai2025homomorphism}.
\footnotetext{It is also common to take powers of other graph matrices, like the random walk matrix~\citep{ma2023inductive,stoll2026generalizable}.}

We show that there are graph pairs that cannot be distinguished by $k$-Walk-WL with a linear number of adjacency powers, but can be distinguished by $k$-EP-WL with $k = 1$

\begin{restatable}{theorem}{poaweakerthanepwl}
\label{thm: poa wl weaker than EP-WL}
    Let $n\geq 6$ even. There exist a pair of graphs $G$ and $H$ with $n$ vertices that are indistinguishable by $\Omega(n)$-Walk-WL but are distinguishable by 1-EP-WL.
\end{restatable}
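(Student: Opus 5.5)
We take $G = C_n$, the cycle on $n$ vertices, and $H = C_{n/2} \sqcup C_{n/2}$, two disjoint cycles on $n/2$ vertices each. The hypothesis that $n \geq 6$ is even guarantees $n/2 \geq 3$, so $H$ is a simple graph. Both graphs are $2$-regular and vertex-transitive. The idea is that walks of length less than about $n/4$ only see the local structure, which is identical (an infinite path) in both graphs. The $0$-eigenspace of the Laplacian, by contrast, sees global connectivity.

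\textbf{Walk-WL fails.} Set $k = \lceil n/4 \rceil - 1 \in \Omega(n)$. Take a cycle $C_m$ with $m \in \{n, n/2\}$ and vertices $u,v$ at cyclic distance $d$. We claim that for every $j \leq k$, $(A^j)_{uv}$ equals the number of length-$j$ walks between two vertices at distance $d$ in the infinite path $\mathbb{Z}$.

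To see this, lift walks to the universal cover $\mathbb{Z}$. A length-$j$ walk can reach a lift of $v$ other than the nearest one only if $j \geq m - d$. Since $d \leq j$, this forces $2j \geq m \geq n/2$, i.e.\ $j \geq n/4$, which contradicts $j \leq k$.

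Hence the vector $\psi(u,v) = ((A^1)_{uv}, \dots, (A^k)_{uv})$ depends only on $\min(d, k+1)$. It is the zero vector when $d > k$, and also when $u$ and $v$ lie in different components of $H$. For any vertex $v$, we count the multiset of values $\psi(u,v)$ over all $u$:
\begin{itemize}
\item $u = v$ contributes one fixed vector.
\item For each $1 \leq d \leq k$, exactly two vertices $u$ lie at distance $d$ and contribute the same vector. This needs $2k < n/2$, which holds by the choice of $k$.
\item All remaining $n - 1 - 2k$ vertices contribute the zero vector. In $G$ they are the far vertices of $C_n$; in $H$ they are the far vertices of $v$'s own component together with all $n/2$ vertices of the other component.
\end{itemize}
So this multiset is the same for every vertex of $G$ and of $H$.

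An induction on $t$ then shows that all vertices of both graphs receive the same $k$-Walk-WL colour at every iteration. The base case holds because all colours are initially $0$. For the inductive step, if all colours agree at step $t-1$, then the step-$t$ colour is a function of the common multiset above. Therefore the colour multisets of $G$ and $H$ coincide for all $t$.

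\textbf{1-EP-WL succeeds.} The smallest Laplacian eigenvalue is $\lambda_1 = 0$ in both graphs, and $\Pi_1 = \sum_C \frac{1}{|C|}\mathbf{1}_C\mathbf{1}_C^T$, where the sum runs over connected components $C$.
\begin{itemize}
\item In $G$, $\Pi_1(u,v) = 1/n$ for all pairs.
\item In $H$, $\Pi_1(u,v) = 2/n$ if $u$ and $v$ share a component and $0$ otherwise.
\end{itemize}
After one iteration, every vertex of $G$ receives the multiset $\multiset{(0, (0,1/n))}$ with multiplicity $n$. Every vertex of $H$ receives $n/2$ copies of $(0,(0,2/n))$ and $n/2$ copies of $(0,(0,0))$. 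These colours differ, so $G$ and $H$ are distinguished.

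\textbf{Main obstacle.} The only delicate step is the no-wraparound claim and the resulting threshold $k < n/4$, which must be checked against the smaller cycle $C_{n/2}$. The rest is bookkeeping.
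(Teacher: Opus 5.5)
Your proof is correct and follows the paper's approach: the same pair $C_n$ versus $C_{n/2}\sqcup C_{n/2}$, the same threshold $k<n/4$ below which walk counts see only local path structure, and the same zero-eigenspace projection (entries $1/n$ versus $2/n$ and $0$) to separate the graphs under 1-EP-WL. The only differences are cosmetic. You justify the walk counts by lifting to $\mathbb{Z}$ instead of using the paper's isomorphic $k$-hop neighborhood lemma, and you write out the full multiset of encoding values, including the zero vectors from far vertices and from the other component, which the paper leaves implicit.
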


\Cref{thm: kEP-WL weaker than wl,thm: poa wl weaker than EP-WL} show that the space of truncated PES is more nuanced than the complete PEs~\citep{zhang2024on}: no truncated PE is strictly more expressive on its own, and all that we have discussed are capable of being both stronger and weaker than the classic $1$-WL or $A$-WL test depending on the forms of truncation used. This motivates our experimental section (\Cref{sec:experiments}) where we use a fixed combination of different truncated positional encodings, to reinforce the idea that no single truncated encoding on its own is likely to yield the best results.

\subsection{Effective Resistance}
\label{sec:resistance}

The effective resistance (aka resistance distance) is another well-studied RPE for graph transformers~\citep{zhang2023rethinking} and MPNNs~\citep{velingker2023affinity}. Intuitively, effective resistance measures how well-connected $u$ and $v$ are, or when viewed as a circuit, how much resistance would exist between the nodes. It is also proportional to the commute time between $u$ and $v$~\citep{chandra1996resistance}.
\par 
The \textit{\textbf{effective resistance}} between two nodes $u$ and $v$ is.
\[
R(s,t) = (1_s - 1_t)^T L^+(1_s - 1_t)
\]
where $1_x$ is an indicator vector that is 1 at index $x$ and zero everywhere else, and $L^+$ is the pseudoinverse of the Laplacian. We call the GD-WL using effective resistance as the RPE the \textit{\textbf{Resistance-WL test}}.
\par 
We prove several results comparing Resistance-WL to two other WL tests: the adjacency-WL test and the EP-WL test. In both cases, we show there are graphs that are indistinguishable by the Resistance-WL test but that can be distinguished by the other tests. 

\begin{restatable}{theorem}{resistancewlweakerthanwl}
\label{thm: resistance wl weaker than wl}
    There exist a pair of weighted graphs $G$ and $H$ that are indistinguishable by the Resistance-WL test, but are distinguishable by the Adjacency-WL test.
\end{restatable}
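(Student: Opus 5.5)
The plan is to build the two weighted graphs by working backwards from their resistance matrices. Two facts drive this. First, the resistance matrix determines the weighted Laplacian: with $C = I - \tfrac1n J$ we have $L^+ = -\tfrac12 C R C$, and then $L = (L^+)^+$. Second, Resistance-WL is exactly 1-WL run on the complete graph whose edge $uv$ carries the label $R(u,v)$. So it suffices to find two matrices $R_G, R_H$ that are (i) indistinguishable by 1-WL as edge-labelled complete graphs, (ii) genuine resistance matrices of graphs with positive edge weights, and (iii) such that the resulting weighted adjacency matrices are separated by Adjacency-WL.

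For (i) I will use the standard 1-WL-equivalent pair on six vertices, $C_6$ versus two disjoint triangles $2K_3$. Fix constants $b > c > 0$. Let $R_G$ assign $c$ to the edges of a $6$-cycle and $b$ to all other pairs. Let $R_H$ assign $c$ to the edges of two disjoint triangles and $b$ to all other pairs. In both matrices every vertex sees the multiset $\multiset{c,c,b,b,b}$. By induction on $t$, every vertex of both graphs then has the same $\wlcolor{R}{t}$ color, so the two graphs are indistinguishable by Resistance-WL.

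For (ii), set $c = b - \varepsilon$. When $\varepsilon = 0$, both matrices equal $b(J-I)$, which is the resistance matrix of $K_6$ with uniform edge weight $2/(6b)$. The map $R \mapsto L = (-\tfrac12 C R C)^+$ is continuous near this point (the rank stays $n-1$). Hence for small $\varepsilon$ the recovered Laplacians have strictly negative off-diagonal entries, i.e.\ they are Laplacians of connected weighted graphs $G$ and $H$ with positive weights. I will also record the symmetry of the answer. $L_G$ commutes with the dihedral symmetry of $C_6$, so $G$ is a weighted $K_6$ with weights $\alpha, \beta, \gamma$ on pairs at cycle-distance $1, 2, 3$. $L_H$ lies in the two-class algebra of $2K_3$, so $H$ is a weighted $K_6$ with weight $p$ inside the triangles and $q$ across them. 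These can be computed explicitly. $R_G$ is circulant, so diagonalising with the characters of $\mathbb{Z}_6$ gives $\alpha, \beta, \gamma$ as explicit rational functions of $b$ and $c$, exactly as in the $4$-vertex circulant calculation one does for $C_4$. For $H$, restricting to the three eigenspaces $\mathbf 1$, the block-constant vectors, and the within-triangle mean-zero vectors gives $p$ and $q$.

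For (iii), in one round of Adjacency-WL each vertex of $G$ receives $\multiset{\alpha,\alpha,\beta,\beta,\gamma}$ and each vertex of $H$ receives $\multiset{p,p,q,q,q}$. It suffices that these multisets differ, which I will verify from the explicit formulas for a concrete choice such as $b = 1$, $c = 1 - \varepsilon$ with a small rational $\varepsilon$. A softer argument also works. $R_G$ and $R_H$ are non-isomorphic as labelled matrices, so $G$ and $H$ are non-isomorphic, since $R$ is determined by $L$. Moreover, the multisets could only coincide if $G$'s three weight classes collapsed into H's $2{+}3$ pattern, and a direct check of the formulas rules that out.

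The main obstacle is this last verification: it requires the actual closed-form eigen-computation, or at least a first-order expansion in $\varepsilon$, and a check that the weight multisets are not accidentally equal. To first order, $L$ changes by $-\tfrac12$ times the (suitably conjugated) perturbation $-\varepsilon(\text{pattern})$. For $G$ this perturbs only the distance-$1$ class strongly, and the distance-$2$ and distance-$3$ classes differently, giving three distinct weights. For $H$ there are only two weights. Hence for small $\varepsilon$ the multisets differ. I expect this to settle it, but I would confirm it numerically for a specific $\varepsilon$.
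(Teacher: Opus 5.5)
Your route differs from the paper's. The paper builds $10$-point Euclidean configurations whose unit-distance patterns are $C_{10}$ and $C_5\sqcup C_5$. It then certifies, by exact SymPy computation through Devriendt's characterization, both that these are resistance matrices and what the recovered edge weights are. Your steps (i) and (ii) are sound. The openness of the set of resistance matrices around $b(J-I)$ is a clean conceptual substitute for the computer check.

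\textbf{The gap is step (iii).} That step is the actual content of the theorem, and the first-order argument you offer for it is false. Let $A$ be the pattern adjacency ($A_{C_6}$ or $A_{2K_3}$). Then $-\tfrac{1}{2}CRC=\tfrac{b}{2}C+\tfrac{\varepsilon}{2}(A-\tfrac{1}{3}J)$, so $L=\sum_\mu \tfrac{2}{b+\varepsilon\mu}P_\mu$ over the eigenvalues $\mu$ of $A$ on $\mathbf{1}^\perp$. To first order, $L\approx\tfrac{2}{b}C-\tfrac{2\varepsilon}{b^2}(A-\tfrac{1}{3}J)$ for \emph{both} graphs.

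This depends only on whether $uv$ is a pattern edge, as your own description of $\delta L$ as a conjugated multiple of the pattern already implies. So distance-$2$ and distance-$3$ pairs of $C_6$ receive the same first-order weight. Both graphs then have the identical first-order multiset at every vertex: $\tfrac{1}{3b}+\tfrac{4\varepsilon}{3b^2}$ twice and $\tfrac{1}{3b}-\tfrac{2\varepsilon}{3b^2}$ three times. Off the diagonal, the first-order weighted adjacency is an injective affine function of the pattern adjacency. First-order Adjacency-WL is therefore just 1-WL on $C_6$ versus $2K_3$, and it cannot separate them.

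The ``softer argument'' does not help either. Non-isomorphism of $G$ and $H$ does not give Adjacency-WL distinguishability. Moreover, the collapse of $G$'s three weight classes into a $2{+}3$ pattern, which you want to exclude, is precisely what happens at first order.

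\textbf{How to close it.} The separation is a second-order effect. It enters through $A^2$, which is $1$ on distance-$2$ pairs and $0$ on distance-$3$ pairs of $C_6$. So the exact computation is indispensable. With $c=b-\varepsilon$ it gives
\begin{gather*}
\alpha=\tfrac{1}{3}\Bigl(\tfrac{1}{b-\varepsilon}+\tfrac{1}{b-2\varepsilon}-\tfrac{1}{b+\varepsilon}\Bigr),\qquad
\beta=\tfrac{1}{3}\Bigl(\tfrac{1}{b+\varepsilon}+\tfrac{1}{b-\varepsilon}-\tfrac{1}{b-2\varepsilon}\Bigr),\\
\gamma=\tfrac{1}{3}\Bigl(\tfrac{2}{b+\varepsilon}-\tfrac{2}{b-\varepsilon}+\tfrac{1}{b-2\varepsilon}\Bigr),\qquad
p=\tfrac{1}{3}\Bigl(\tfrac{2}{b-\varepsilon}-\tfrac{1}{b+2\varepsilon}\Bigr),\qquad q=\tfrac{1}{3(b+2\varepsilon)}.
\end{gather*}
Hence the differences
\begin{gather*}
\alpha-\gamma=\tfrac{2\varepsilon}{b^2-\varepsilon^2},\qquad \alpha-\beta=\tfrac{2\varepsilon}{(b+\varepsilon)(b-2\varepsilon)},\qquad \gamma-\beta=\tfrac{2\varepsilon^2}{(b+\varepsilon)(b-\varepsilon)(b-2\varepsilon)}
\end{gather*}
are all nonzero. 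Every vertex of $G$ sees three distinct weights, every vertex of $H$ sees at most two, and one round of Adjacency-WL separates them.

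``Small'' must also be quantified: $\beta>0$ forces $\varepsilon<(2-\sqrt{3})b$, and already $c=0.7b$ yields a negative weight. The choice $b=1$, $c=3/4$ works, with $(\alpha,\beta,\gamma)=(\tfrac{38}{45},\tfrac{2}{45},\tfrac{14}{45})$ and $(p,q)=(\tfrac{2}{3},\tfrac{2}{9})$. With this inserted, your argument becomes a complete six-vertex proof that can be checked by hand. That is a genuine advantage over the paper's computer-aided ten-vertex certificate.
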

The proof of these theorems can be found in~\Cref{apx:resistance}.
\par 
This result is notable for two reasons. First, while we know that the EP-WL test is strictly stronger than the adjacency-WL test, we do not know whether partial spectral invariants like the effective resistance are also stronger than the adjacency-WL test. Moreover, while these results are for weighted graphs, \citet[Proposition 4.10]{black2024comparing} prove that on unweighted graphs, the adjacency-WL test is equivalent to the the classical 1-WL test. This leads us to conjecture the Resistance-WL test is not stronger than the 1-WL test on unweighted graphs. If this is true, this would disprove a conjecture by~\citet[Section 6]{zhang2023rethinking} that the Resistance-WL test is stronger than the shortest-path-WL test. Second, as the matrix $A$ can be recovered by the EP-WL test, then we have the following corollary: 

\begin{restatable}{corollary}{resistancewlweakerthanepwl}
\label{thm: resistance wl weaker than EP-WL}
    There exists a pair of weighted graphs $G$ and $H$ that are indistinguishable by the Resistance-WL test but are distinguishable by the EP-WL test.
\end{restatable}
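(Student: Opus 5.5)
We derive the corollary from \Cref{thm: resistance wl weaker than wl}, taking the weighted graphs $G$ and $H$ it provides. By that theorem, $G$ and $H$ are indistinguishable by Resistance-WL and distinguishable by Adjacency-WL. What remains is to show that EP-WL is at least as strong as Adjacency-WL on weighted graphs. Concretely, we must show that whenever two graphs are indistinguishable by EP-WL, they are also indistinguishable by Adjacency-WL. The corollary then follows by contraposition applied to this same pair.

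\textbf{Step 1: the EP-WL colours refine the Adjacency-WL colours.} The key observation is that $A$ can be recovered pointwise from the encoding $\mathcal{P}$. By the spectral decomposition $L = \sum_i \lambda_i \Pi_i$, we have for each pair $(u,v)$
\[
L(u,v) = \sum_{i=1}^{l} \lambda_i \Pi_i(u,v),
\]
and this is a function of the multiset $\mathcal{P}(u,v) = \multiset{(\lambda_i,\Pi_i(u,v))}$. For $u\neq v$ it gives $A(u,v) = -L(u,v)$. For the diagonal, $L(v,v) = \deg(v)$ is the weighted degree. So there is a single function $g$ with $g(\mathcal{P}(u,v)) = A(u,v)$ for $u\neq v$. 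We extend it by $g(\mathcal{P}(v,v)) = 0$, using the convention that there are no self-loops; if Adjacency-WL is defined with self-loop weights, $A(v,v)$ is read off from $L$ directly in the same way.

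\textbf{Step 2: induction on rounds.} We prove by induction on $t$ that $\wlcolor{\mathcal{P}}{t}(v)$ determines $\wlcolor{A}{t}(v)$, via some map $f_t$ that is the same for all graphs. The case $t=0$ is trivial since all colours start equal. For the inductive step, we apply $(c,p)\mapsto (f_{t-1}(c), g(p))$ to each element of the multiset $\multiset{(\wlcolor{\mathcal{P}}{t-1}(u), \mathcal{P}(u,v))}$. This produces $\multiset{(\wlcolor{A}{t-1}(u), A(u,v))}$, which is $\wlcolor{A}{t}(v)$.

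One subtlety is whether Adjacency-WL aggregates over all $u\in V$ or only over neighbours. If it aggregates over neighbours only, we discard the entries with $A(u,v)=0$, and since $g$ is a fixed function this discarding is well defined. Because the maps are graph-independent, equal multisets of EP-WL colours imply equal multisets of Adjacency-WL colours in every round.

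\textbf{Main obstacle.} The only delicate point is Step 1: the recovery of $A(u,v)$ must be a function of the \emph{multiset} $\mathcal{P}(u,v)$ alone, uniformly across graphs. The formula $\sum_i \lambda_i\Pi_i(u,v)$ is symmetric in the index $i$, so it is invariant under reordering and this requirement holds. The diagonal is handled by the convention stated in Step 1. With these checks in place, combining Step 2 with \Cref{thm: resistance wl weaker than wl} proves the corollary.
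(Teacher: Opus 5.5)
Your proposal is correct and follows the paper's argument: take the pair from the Resistance-WL versus Adjacency-WL theorem, and use that EP-WL recovers $A$ entrywise via $L=\sum_i\lambda_i\Pi_i$ (the paper cites Zhang et al.\ for this, and uses the same identity in its appendix lemma on recovering $L^k$). One point deserves to be made explicit: your diagonal extension of $g$ is well defined because, with nonnegative weights, $L(v,v)\ge 0\ge L(x,y)$ for $x\neq y$, so $\mathcal{P}(v,v)=\mathcal{P}(x,y)$ forces $A(x,y)=0$; equivalently, you can simply set $g(p)=\max\{0,-\sum_i\lambda_i p_i\}$.
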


\Cref{thm: resistance wl weaker than EP-WL} disproves a conjecture by~\citet[Section 6]{zhang2023rethinking} that the Resistance-WL test encodes the entire spectrum of the graph. 
\par
In~\Cref{apx:resistance}, we discuss the possibility and implications of extending these results to unweighted graphs.

\subsection{\texorpdfstring{$\boldsymbol{k}$}{k}-Harmonic Distances}

The results in~\Cref{sec:resistance} proved that Resistance-WL does not encode the entire spectrum of the graph. Given this limitation, we explore a generalization of the effective resistance known as the $k$-harmonic distances~\citep{black2024biharmonic}. While the eigenspace projections and walk encodings have natural extensions that encode the entire spectrum of the graph, it is natural to explore this same question for the effective resistance (the $1$-harmonic distance).
\par 
Previous work has given us reason to believe the $k$-harmonic distances could be useful positional encodings. This work has shown that the biharmonic ($2$-harmonic) distance encodes structural information about that is different but complementary to the information the effective resistance encodes. While the effective resistance encodes how \textit{well-connected} nodes $s$ and $t$ are, the biharmonic distance encodes how \textit{central} the edge $(s,t)$ is to the graph ~\cite{black2024biharmonic}, making it a natural candidate to be a PE. Motivated by this intuitive explanation of the biharmonic distance for edges in the graph, we explore the $k$-harmonic distances and their power in the context of GTs and MPNNs.

The \textit{\textbf{k-harmonic distance}} between nodes $u$ and $v$ is\footnotemark
\footnotetext{The effective resistance is defined without the square root. Surprisingly, even without the square root, the effective resistance is a metric ~\citep{Klein1993resistance}.}
\[
H^k(u,v) = \sqrt{(1_u - 1_v)^T (L^{+})^k (1_u - 1_v)}
\]

As we explore the $k$-harmonic distance as a PE both for GTs and MPNNS, we define a \textit{\textbf{Sparse-$\boldsymbol{\psi}$-WL test}}, analogous to the $\psi$-WL test, in order to discuss the expressive power of these $k$-harmonic distances for MPNNs. Succinctly, with $\psi$ taken to be the $k$-harmonic distance(s), the sparse-$\psi$-WL test provides an upper bound on the expressive power of MPNNs that use the $k$-harmonic distance as edge features, denoted sparse-$k$-harmonic-WL. We provide a formal definition in \Cref{apx:k harmonic prelim}.

\paragraph{Bounds on \texorpdfstring{$\boldsymbol{k}$}{k}-Harmonic-WL} First, we establish the classic bounds that are common of any spectral distance. We show that any Sparse-$k$-Harmonic-WL test yields a PE that is stronger that $1$-WL; in short, any Sparse-WL test will be stronger than the 1-WL test as it incorporates edge information. The same cannot be said of the $k$-Harmonic-WL test. While we do not know whether or not this is the case for unweighted graphs,~\Cref{thm: resistance wl weaker than wl} shows the analogous theorem for weighted graph is not true. Additionally, we show that both the $k$-harmonic distance and Sparse-$k$-harmonic WL tests are weaker than $3$-WL.

\begin{restatable}{theorem}{kharmgwl}
\label{thm:k_harm_g_1wl}
Let $k\geq 1$. The Sparse-$k$-Harmonic-WL test is strictly stronger than the WL test.
\end{restatable}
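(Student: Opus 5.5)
The proof has two parts: Sparse-$k$-Harmonic-WL is at least as strong as 1-WL, and some pair of graphs separates them strictly. The Sparse-$\psi$-WL test is defined formally only in the appendix. I read it as the natural analogue of $\psi$-WL in which aggregation runs over graph neighbors $u$ with $(u,v)\in E$, and the pair $(\wlcolor{\psi}{t-1}(u),\psi(u,v))$ is used as the message, i.e.\ the refinement an MPNN with $\psi$ as edge features performs.

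\textbf{Step 1: Sparse-$k$-Harmonic-WL refines 1-WL.} I would show by induction on $t$ that $\wlcolor{\psi}{t}(v)=\wlcolor{\psi}{t}(w)$ implies $\chi^{(t)}(v)=\chi^{(t)}(w)$, for vertices in the same graph or in two different graphs. The base case holds because all colors are constant at $t=0$. For the inductive step, equal sparse colors at round $t$ mean equal multisets $\multiset{(\wlcolor{\psi}{t-1}(u),H^k(u,v)) : (u,v)\in E}$. Forgetting the second coordinate gives equal multisets of previous sparse colors over neighbors. By the induction hypothesis, these map to equal multisets of previous 1-WL colors, which is exactly the 1-WL update. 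If the sparse update does not include the vertex's own previous color, I recover it by the standard argument: the color at round $t$ determines the color at round $t-1$ by induction, since the degree and neighbor multisets are encoded. Summing over vertices then shows that 1-WL-distinguishable graphs are Sparse-$k$-Harmonic-WL-distinguishable.

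\textbf{Step 2: strictness.} I need two graphs that 1-WL cannot separate but the $k$-harmonic edge features can. I would take $G=C_6$ and $H=2C_3$. Both are 2-regular, so 1-WL cannot distinguish them. Because $H^k$ is defined via $L^+$ on the whole graph, I would compute the edge values directly.
\begin{itemize}
\item For $C_3$, $L$ has eigenvalue $3$ with multiplicity $2$ on the complement of $\mathbf{1}$. Hence $(H^k(u,v))^2=\|1_u-1_v\|^2/3^k=2/3^k$.
\item For $C_6$, the eigenvalues are $2-2\cos(2\pi j/6)\in\{1,3,4,3,1\}$ for $j=1,\dots,5$. Writing $1_u-1_v$ for adjacent $u,v$ in the Fourier basis, $(H^k)^2=\sum_{j=1}^5 \frac{|1-\omega^j|^2}{6}\,\lambda_j^{-k}$ with $|1-\omega^j|^2=\lambda_j$. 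This gives $(H^k)^2=\frac16\sum_j \lambda_j^{1-k}=\frac16\left(2+2\cdot 3^{1-k}+4^{1-k}\right)$.
\end{itemize}
I compare this with $2/3^k$. For $k=1$ the $C_6$ value is $5/6$ and the $C_3$ value is $2/3$, which differ. For $k\ge 2$ the $C_6$ value is larger than $1/3$, while $2/3^k\le 2/9$, so the values always differ.

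Given distinct edge values, one round of the sparse test gives different color multisets, so the test separates $G$ and $H$. The main obstacle is the dependence on the exact appendix definition: whether the test is run on disconnected graphs, and whether $L^+$ is taken blockwise. If disconnected graphs are an issue, I would fall back on a connected 1-WL-equivalent pair, such as two non-isomorphic connected 3-regular graphs on the same number of vertices. I would then argue via a similar spectral computation, or a direct numerical one, that their multisets of edge $k$-harmonic values differ for every $k$.
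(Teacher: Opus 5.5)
Your proposal is correct and follows essentially the paper's route. The paper uses $C_9$ versus three disjoint triangles instead of your $C_6$ versus $2C_3$, but relies on the same ingredients:
\begin{itemize}
\item 2-regularity, so that 1-WL cannot separate the graphs;
\item edge-transitivity, so each graph has a single edge value;
\item the block-diagonal form of $L^+$ for a disconnected graph, which is automatic for the Moore--Penrose pseudoinverse, so your connected fallback is unnecessary;
\item the cycle spectrum, used to compute and compare the edge value in each graph.
\end{itemize}
One small point: if $k$ is allowed to be real, you only treat $k=1$ and $k\geq 2$, but your exact formula already gives $(H^k_{C_6}(e))^2-(H^k_{C_3}(e))^2=\tfrac13-3^{-k}+\tfrac16\,4^{1-k}>0$ for every real $k\geq 1$, since $3^{-k}\leq\tfrac13$ there.
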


\begin{restatable}{theorem}{kharmlthreewl}
\label{thm:kharm_l_3wl}
The $3$-WL test is strictly stronger than the $k$-Harmonic-WL test and Sparse-$k$-Harmonic-WL tests for all $k\in\mathbb{R}$.
\end{restatable}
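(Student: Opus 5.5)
We prove the two halves separately: first that 3-WL is at least as strong as both $k$-Harmonic-WL and Sparse-$k$-Harmonic-WL, and then that the inequality is strict.

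\textbf{Step 1: 3-WL refines the $k$-harmonic distance.} We would rely on the known result that 2-FWL (equivalently 3-WL) refines EP-WL~\citep{zhang2024on}. The stable 2-FWL coloring of pairs $(u,v)$ determines $\Pi_i(u,v)$ together with $\lambda_i$ for every distinct eigenvalue $\lambda_i$ of $L$. We then write
\[
(L^+)^k = \sum_{i:\lambda_i\neq 0} \lambda_i^{-k}\,\Pi_i ,
\]
which makes sense for any real $k$, since only nonzero eigenvalues occur. This gives
\[
H^k(u,v)^2 = \sum_{i:\lambda_i\neq 0}\lambda_i^{-k}\bigl(\Pi_i(u,u)+\Pi_i(v,v)-2\Pi_i(u,v)\bigr).
\]
Hence $H^k(u,v)$ is a fixed function of the multisets $\mathcal{P}(u,v)$, $\mathcal{P}(u,u)$ and $\mathcal{P}(v,v)$.

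\textbf{Step 2: simulating the $k$-Harmonic-WL refinements inside 2-FWL.} The pair colors $\mathcal{P}(u,u)$ and $\mathcal{P}(v,v)$ are themselves determined by the 2-FWL color of $(u,v)$, because 2-FWL colors of pairs refine the colors of their diagonal pairs. So the stable 2-FWL color of $(u,v)$ determines $H^k(u,v)$.

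We then argue by induction on $t$ that the 2-FWL color of $(v,v)$ refines $\wlcolor{H^k}{t}(v)$. The aggregation $\multiset{(\wlcolor{H^k}{t-1}(u), H^k(u,v)) : u\in V}$ is obtained from the multiset of colors of pairs $(v,u)$. That multiset is recoverable from the color of $(v,v)$ via the 2-FWL update, which ranges over all $w$ for the pairs $(v,w),(w,v)$.

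For the sparse variant, we restrict the aggregation to $u$ with $A_{uv}=1$. Adjacency is part of the initial 2-FWL pair color, so the same induction goes through.

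Finally, graph-level indistinguishability under 2-FWL implies equal multisets of diagonal colors. That yields equal multisets of $k$-Harmonic colors, completing this half.

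\textbf{Step 3: strictness.} We need a pair of graphs that 3-WL separates but both $k$-harmonic tests cannot. The natural candidate is two non-isomorphic strongly regular graphs with the same parameters, for example the $4\times4$ rook's graph and the Shrikhande graph, both with parameters $(16,6,2,2)$. For strongly regular graphs, $L$ has exactly three distinct eigenvalues. Each $\Pi_i$ is then a fixed linear combination of $I$, $A$ and $J-I-A$, with coefficients depending only on the parameters. Consequently $H^k(u,v)$ takes one value on the diagonal, one on edges and one on non-edges, and these values are identical in both graphs. It follows that $k$-Harmonic-WL and its sparse version (which also sees only adjacency plus $H^k$ on edges) assign every vertex the same color at every iteration. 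Meanwhile, 3-WL distinguishes these two graphs (it counts 4-cliques, which the rook's graph has and Shrikhande lacks). The same-parameter collision works for every real $k$.

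\textbf{Main obstacle.} The delicate step is Step 1's appeal to the refinement result. Everything hinges on precisely citing that 2-FWL determines the eigenvalue/projection pairs $(\lambda_i,\Pi_i(u,v))$, and on the standard equivalence between 3-WL and 2-FWL. Checking the 3-WL separation of the rook's and Shrikhande graphs is standard and may simply be cited.
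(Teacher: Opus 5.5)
Your Steps 1--2 are correct, and they reach the upper bound by a different route than the paper. You simulate the $k$-harmonic refinement directly inside 2-FWL, using the pair-level fact that the stable 2-FWL color of $(u,v)$ determines $\mathcal{P}(u,v)$ and the diagonal colors. The paper instead shows that EP-WL itself refines the $k$-harmonic tests: $\wlcolor{\mathcal{P}}{t+1}(v)$ determines the $k$-harmonic color of $v$ at round $t$, with $\mathcal{P}(v,v)$ singled out in the multiset as the unique entry where $L(v,v)>0$, and isolated vertices handled separately. It then uses only the graph-level bound of EP-WL by 3-WL from \citet{zhang2024on}. Your version avoids the diagonal-recovery and isolated-vertex bookkeeping. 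The price is that it needs the pair-level 2-FWL result, and it never shows that the $k$-harmonic tests sit below EP-WL, which matters for strictness.

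Step 3 fails. With 3-WL $=$ 2-FWL (your convention and the paper's), 3-WL cannot distinguish any two strongly regular graphs with the same parameters. The partition of pairs into diagonal, edge and non-edge is already stable, because for every pair $(u,v)$ the number of $w$ with each combination of types for $(u,w)$ and $(w,v)$ is fixed by the parameters. So the $4\times 4$ rook's graph and the Shrikhande graph are 3-WL-indistinguishable; the paper's own description of BREC lists its strongly regular pairs as 3-WL-indistinguishable. Your claim that 3-WL counts $4$-cliques is also false. $K_4$ has treewidth $3$, and this very pair is the standard witness that 2-FWL cannot count $K_4$; 3-FWL, i.e.\ oblivious 4-WL, does separate them. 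Your own observation that every $\Pi_i$ is a combination of $I$, $A$, $J$ is exactly why 2-FWL sees nothing beyond adjacency here either.

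A separating pair must be visible to 2-FWL yet invisible to spectral invariants. \citet{zhang2024on} prove EP-WL is strictly weaker than 3-WL, so the clean fix is to show both $k$-harmonic tests are bounded by EP-WL and inherit strictness from that result. Your formula writing $H^k(u,v)$ as a function of $\mathcal{P}(u,v)$, $\mathcal{P}(u,u)$, $\mathcal{P}(v,v)$ does most of this. What remains is recovering $\mathcal{P}(v,v)$ and adjacency from EP-WL vertex colors, which is precisely what the paper's proof supplies.
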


Proofs are found in \Cref{apx:k_harm_g_1wl,apx:kharm_l_3wl}. 
\par 
The EP-WL and $\Theta(n)$-Walk WL tests have equivalent expressive power, and both lie between the 1-WL and 3-WL tests. To complement these results, we prove that concatenating multiple $k$-harmonic distances gives a RPE that is equivalent in power to these two encodings. For a set of values $S\subset\R$, let the $S$-harmonic WL test denote the $\psi$-WL test for $\psi:V\times V\to\mathbb{R}^{|S|}$ defined $\psi(u,v) = ( H^{(k)}(u,v) : k\in S )$, i.e., we concatenate the $k$-harmonic distances for all $k\in S$. We can prove that $[2n]$-harmonic distances are equal in power to the EP-WL test.

\begin{restatable}{theorem}{kharmonicequalsepwl}
\label{thm:kharmonics_as_strong_as_spectral}
    Let $[2n]=\{1,\ldots,2n\}$. The $[2n]$-harmonic WL test is as strong as the EP-WL test
\end{restatable}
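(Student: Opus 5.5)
We will show that the $[2n]$-harmonic WL test and the EP-WL test refine each other. The natural intermediate object is the family of squared distances
\[
H^k(u,v)^2=\sum_{i:\lambda_i>0}\lambda_i^{-k}\bigl(\Pi_i(u,u)+\Pi_i(v,v)-2\Pi_i(u,v)\bigr),
\]
which we get by writing $(L^+)^k=\sum_{\lambda_i>0}\lambda_i^{-k}\Pi_i$. Since the square root is injective on nonnegative reals, the colors produced by $H^k$ and by $(H^k)^2$ carry the same information. So we may work with $D_i(u,v):=\Pi_i(u,u)+\Pi_i(v,v)-2\Pi_i(u,v)$ weighted by $\lambda_i^{-k}$.

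\emph{Direction 1: EP-WL refines $[2n]$-harmonic WL.} The map $\mathcal{P}(u,v)\mapsto (H^k(u,v))_{k\in[2n]}$ is not pointwise, because $D_i(u,v)$ involves the diagonal entries $\Pi_i(u,u)$ and $\Pi_i(v,v)$. However, after one round of EP-WL the color of $u$ determines the multiset $\multiset{\mathcal{P}(u,w):w\in V}$. In particular it determines $\mathcal{P}(u,u)$, as the unique entry with $\sum_i\Pi_i(u,u)=1$ paired appropriately. Cleaner still, we can show EP-WL is equivalent to EP-WL with the RPE augmented by $(\mathcal{P}(u,u),\mathcal{P}(v,v))$, using the standard argument that $\psi$-WL colors at step $t+1$ refine those at step $t$. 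The diagonal projections are then encoded in the node colors. After that, $H^k(u,v)$ is a fixed function of the augmented pair encoding, so an induction on $t$ gives that EP-WL colors refine $[2n]$-harmonic colors.

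\emph{Direction 2: $[2n]$-harmonic WL refines EP-WL.} This is the main obstacle. From the vector $(H^k(u,v)^2)_{k=1}^{2n}$ we must recover the nonzero eigenvalues together with $D_i(u,v)$, and then the entries $\Pi_i(u,v)$. The sequence $s_k=\sum_i \lambda_i^{-k}D_i(u,v)$ has at most $n-1$ terms, and the coefficients $D_i(u,v)\ge 0$ because each $\Pi_i$ is PSD. By Prony's method (Hankel matrix rank), $2n$ consecutive moments determine the set $\{(\mu,c)\}$ of pairs with $c\neq0$ uniquely. This uses the Vandermonde argument: two distinct such representations with at most $2(n-1)$ distinct nodes in total cannot agree on $2n$ moments.

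The difficulty is that only eigenvalues with $D_i(u,v)>0$ appear for a given pair, and the terms $\Pi_i(u,u)$ and $\Pi_i(v,v)$ must be separated out.
\begin{itemize}
\item We first recover the full spectrum. Summing over all pairs, the color multiset at round one determines $\sum_{u,v}D_i(u,v)=2n\,\mathrm{rank}(\Pi_i)>0$. So the union of the recovered supports gives every nonzero $\lambda_i$ with its multiplicity, and the zero eigenvalue is handled by connectivity counts, using $\Pi_0=\frac1n J$ on each component.
\item We then recover the diagonal. We have $\sum_v D_i(u,v)=n\Pi_i(u,u)+\mathrm{tr}(\Pi_i)$, using $\sum_v\Pi_i(u,v)=0$ for $\lambda_i>0$. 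So node $u$'s first-round color determines $\Pi_i(u,u)$.
\item Finally, $\Pi_i(u,v)=\tfrac12\bigl(\Pi_i(u,u)+\Pi_i(v,v)-D_i(u,v)\bigr)$.
\end{itemize}

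Thus after a constant number of rounds, the $[2n]$-harmonic colors together with the pair distances determine $\mathcal{P}(u,v)$. An induction with a constant offset then shows the $[2n]$-harmonic colors at round $t+c$ refine the EP-WL colors at round $t$, which completes the equivalence.
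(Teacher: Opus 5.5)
Your proposal is correct, and it follows a more careful route than the paper. The paper works pair by pair. Via \Cref{thm:distinguishableatmost}, an exponential sum has at most $t_G+t_H\le 2n-2$ real zeros, which upgrades agreement of the harmonic distances on $k\in[2n]$ to agreement for all real $k$. The paper then asserts that this forces equal spectra and $p_i(u,v)=p_i(x,y)$, hence $\mathcal{P}_G(u,v)=\mathcal{P}_H(x,y)$, and it never uses the WL rounds. The converse direction is not argued there; it is essentially the argument of \Cref{lem:EP-WL_implies_k_harmonic}.

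Your Vandermonde/Prony step does the job of the paper's first step, and more elementarily, since $k$ ranges over integers. You also correctly see that the paper's pointwise conclusion fails. Eigenvalues with $D_i(u,v)=0$ are invisible to the pair, and $D_i(u,v)=p_i(u,v)^2$ mixes in the diagonal entries. For example, every pair $(u,u)$ has all harmonic distances $0$, while $\mathcal{P}(u,u)$ lists the whole spectrum.

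Your use of the aggregation is exactly what fills that gap, at the price of a constant round offset:
\begin{itemize}
\item $\sum_{u,v}D_i(u,v)=2n\,\mathrm{rank}\,\Pi_i$ recovers the spectrum with multiplicities;
\item $\sum_v D_i(u,v)=n\Pi_i(u,u)+\mathrm{tr}\,\Pi_i$ recovers the diagonal;
\item polarization then recovers the off-diagonal entries.
\end{itemize}
The global trace data is available at node level from round two, since a dense GD-WL color then contains the multiset of all round-one colors.

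One slip: on a component $C$ the zero-eigenspace projection is $\frac{1}{|C|}J$, not $\frac{1}{n}J$. The simplest route is $\Pi_0=I-\sum_{\lambda_i>0}\Pi_i$, with the self-pair identified by its zero distance vector. The only ambiguity there is among isolated vertices, which are interchangeable by automorphisms and so harmless.
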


This mirrors known results for other spectral PEs like heat kernels~\citep[Theorem 4.9]{black2024comparing}. 

\paragraph{Comparing Different \texorpdfstring{$\boldsymbol{k}$}{k}-Harmonic Distances}

Previous sections show that the truncated walk and projection-based PEs provably capture specific aspects of graph structure. While there is also a  ``complete'' PE using the $k$-harmonic distances (\Cref{thm:kharmonics_as_strong_as_spectral}), we also want to understand the strength of truncated $k$-harmonics. In particular, prior research \citep{black2024biharmonic} show that the 1- and 2-harmonic have very different properties and interpretations, so we ask whether this continues to be true when the $k$-harmonic distances are used as PEs. On one side, we will show that the $k$-harmonic distance capture different information about a graph, as there are graphs that the $2$-harmonic can distinguish in significantly fewer iterations that the effective resistance (\Cref{thm:biharmonictrees}). However, on the other side, we show that if one $k$-harmonic distance can distinguish a pair of graphs, then most $k$-harmonic distances will distinguish these graphs as well.   

\begin{theorem} \label{thm:biharmonictrees}
    There are pairs of graphs that Sparse-Biharmonic-WL can distinguish in one iteration but Sparse-Resistance-WL cannot distinguish in $o(n)$ iterations.
\end{theorem}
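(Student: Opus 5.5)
The plan is to use trees, where the effective resistance carries no information beyond adjacency while the biharmonic distance of an edge records how that edge splits the tree. The steps are to pin down both edge features on trees, build a pair of trees that 1-WL separates only after linearly many rounds, and then check that the biharmonic feature separates them in one round.

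\emph{Step 1 (resistance on trees).} In a tree the unique path between adjacent $u,v$ is the edge itself, so $R(u,v)=1$ for every edge. The Sparse-Resistance-WL test only attaches features to edges of the graph, so every message carries the same constant. Hence on trees its colors after $t$ iterations coincide with the 1-WL colors after $t$ iterations. It therefore suffices to exhibit trees $G,H$ on $n$ vertices that 1-WL cannot distinguish in $o(n)$ iterations.

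\emph{Step 2 (biharmonic on trees).} Let $e=(u,v)$ be a tree edge whose removal leaves sides $S$ and $\bar S$ of sizes $n_1,n_2$. The vector $x$ that equals $n_2/n$ on $S$ and $-n_1/n$ on $\bar S$ has mean zero and satisfies $Lx=1_u-1_v$, since the only edge with a potential drop is $e$ and that drop is $1$. Hence $x=L^+(1_u-1_v)$, and
\[
H^2(u,v)^2=\|x\|^2=\frac{n_1n_2^2+n_2n_1^2}{n^2}=\frac{n_1n_2}{n}.
\]
So the biharmonic edge feature is the split product $n_1(n-n_1)$ up to scaling, and this determines $\min(n_1,n_2)$. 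After one iteration of Sparse-Biharmonic-WL, the multiset of vertex colors determines the multiset of split sizes over all edges, with each edge counted twice.

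\emph{Step 3 (construction).} Fix $d\approx n/3$. Let $G$ be a path on $n-1$ vertices with one pendant leaf attached to the path vertex at distance $d$ from one end. Let $H$ be the same with the leaf attached at distance $d+1$. In both trees the pendant edge has split $(1,n-1)$ and the path edges realize the splits $\min(i,n-i)$ for the cut positions. Moving the attachment point by one changes exactly one path edge, namely the one between positions $d$ and $d+1$. In $G$ that edge has split $d+1$ and in $H$ it has split $d$. No other edge compensates, because every other path edge keeps its split. So the multisets of split sizes differ and, by Step 2, one iteration distinguishes $G$ and $H$.

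\emph{Step 4 (main obstacle: 1-WL lower bound).} I would show that for $t<c n$ the multisets of depth-$t$ unfolding trees of $G$ and $H$ agree. A vertex's depth-$t$ view is determined by its distances (capped at $t$) to the two path ends and to the attachment vertex. I would build an explicit bijection that matches vertices by these capped distances. Vertices within distance $t$ of both the attachment vertex and the near end are matched by reflecting or shifting by one; all other vertices are matched by a global shift.

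The delicate point is that near the attachment vertex, $G$ and $H$ see the near end at distances $d$ versus $d+1$. This is invisible when $t<d$, and I would choose $t<d-1\approx n/3$ to get the $\Omega(n)$ bound. If the bookkeeping is messy, a cleaner fallback is to cite that 1-WL color refinement on paths with markers needs a number of rounds proportional to the distance between the distinguishing features. Alternatively, one could replace the single pendant by symmetric gadgets at both ends and use a reflection argument.
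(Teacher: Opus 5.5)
Your route is different from the paper's, and it works once Step 4 is corrected. The paper compares the spiders with legs $(n/3,n/3,n/3)$ and $(n/4,n/4,n/2)$ and imports the cut-edge formula $B(u,v)^2=|S||T|/|V|$ from prior work. It separates the two trees in one round by an extreme value: the edge from the root of $H$ into its $n/2$-leg has a larger biharmonic distance than every edge of $G$, so the root's colour has no counterpart in $G$.

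You instead prove the formula directly; checking that $x=L^+(1_u-1_v)$ is neat and makes the argument self-contained. You also use a minimal perturbation (moving a pendant by one step), so the multisets of edge splits differ in exactly one element. This gives a more striking contrast, but the one-round argument becomes a multiset comparison and needs the non-degeneracy condition $n\neq 2d+3$. There is also a small off-by-one: the edge between positions $d$ and $d+1$ has near side of size $d+2$ in $G$ (the pendant sits on that side) and $d+1$ in $H$. Nothing depends on it.

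The real error is the claim in Step 4 that the shift is invisible for $t<d$. Take a vertex on the short leg at distance $r$ from the degree-$3$ vertex and distance $e$ from the near end, with $r,e\le t-1$. Its depth-$t$ view contains both features: the branch vertex shows degree $3$, and the end appears as a leaf at distance $<t$. So the view records $r+e$, which is $d$ in $G$ but $d+1$ in $H$; on the long leg the end is out of reach. Such vertices exist as soon as $2t-2\ge d$, and then the round-$t$ colour multisets already differ. For example, with $d=10$ and $t=6<d-1$, the vertex of $G$ with $r=e=5$ has no partner in $H$. Hence the ``reflect or shift by one'' matching you propose for exactly these vertices cannot exist.

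The repair is to take $t\le (d+1)/2$, so that no vertex is within distance $t-1$ of both features. Then match near-branch vertices by $r$, near-end vertices by $e$, and the remaining path-like vertices by a shift; their counts agree, $d+A-4t+2$ in both trees with $A=n-2-d$. With $d\approx n/3$ this still gives about $n/6=\Omega(n)$ rounds, so the theorem follows. This factor of two is exactly why the paper takes $k<\lfloor n/8\rfloor$ when its shortest leg has length $n/4$.
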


The pairs of graphs we consider in this proof are both trees. To prove this theorem, we will use the following fact.

\begin{lemma}
    The Sparse-Resistance-WL test is equally strong as the WL test when $G$ and $H$ are trees
\end{lemma}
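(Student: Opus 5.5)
The key fact is that on a tree the effective resistance between two nodes equals their shortest-path distance: there is a unique path between them, and the resistances along it add in series. In particular, for an edge $(u,v)$ of a tree we have $R(u,v)=1$. The Sparse-Resistance-WL test only attaches the resistance $R(u,v)$ to the pairs $(u,v)\in E$. So on trees every edge feature equals the same constant $1$, and the test carries no information beyond the adjacency structure.

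The plan is to prove the two directions of equivalence separately.

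\emph{Sparse-Resistance-WL is at least as strong as WL.} This is already contained in \Cref{thm:k_harm_g_1wl} with $k=1$: the Sparse-$1$-Harmonic-WL test is stronger than WL, and the squared $1$-harmonic distance is the effective resistance. If the formal definition in \Cref{apx:k harmonic prelim} uses $H^1$ rather than $R$, nothing changes, since $H^1=\sqrt{R}$ is an injective transformation. Alternatively, one can argue directly. Sparse-Resistance-WL aggregates over exactly the neighbourhood multiset that WL uses, with extra edge labels attached. By induction on $t$, its color at iteration $t$ refines the WL color at iteration $t$.

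\emph{WL is at least as strong as Sparse-Resistance-WL on trees.} This is the substantive direction. Take $G$ and $H$ to be trees. Then every edge label is the constant $1$ in both graphs. Hence the update $\hash\bigl(\chi^{(t-1)}(v),\multiset{(\chi^{(t-1)}(u),R(u,v)) : (u,v)\in E}\bigr)$ coincides with the WL update composed with the injective map $c\mapsto(c,1)$. By induction on $t$, there is an injective map $f_t$, the same for both graphs, such that the sparse color equals $f_t$ applied to the WL color, for every vertex of $G$ and of $H$. The color multisets therefore agree for the sparse test exactly when they agree for WL, at every iteration.

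The only step needing care is the claim $R(u,v)=1$ for tree edges, so that the label is the same constant in both graphs. I would prove it directly. Removing the edge $(u,v)$ disconnects the tree, so the edge is the only path for current between $u$ and $v$. A unit current from $u$ to $v$ must therefore flow entirely through the edge, which gives potential difference $1$. One can also argue via the series law, or via the fact that $R(u,v)$ equals the probability that the edge lies in a uniformly random spanning tree; a tree's only spanning tree is itself, so that probability is $1$. I expect no real obstacle here; the main point is to state the induction cleanly with the same injective maps $f_t$ for both graphs.
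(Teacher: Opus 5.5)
Your proposal is correct and takes essentially the paper's route. The paper cites Ghosh et al.\ for the fact that effective resistance equals shortest-path distance on trees, so every edge gets the constant label $1$ and adds nothing beyond WL; you prove $R(u,v)=1$ directly via the cut-edge flow argument and make the per-iteration injective correspondence $f_t$ explicit, which is the iteration-wise equivalence that \Cref{thm:biharmonictrees} later needs.
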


This result is directly implied by a result of \citet[Theorem 2.3]{ghosh2008minimizing} that the effective resistance between any two nodes in a tree is their shortest path distance. Thus, the effective resistance of all edges in a tree is $1$, giving no additional information. Conversely, \citet[Theorem 5.1]{black2024biharmonic} proves that the biharmonic distance for any edge in a tree entirely determined by the number of nodes on either side of that edge, which supplies additional topological information about an edge. The full proof of~\Cref{thm:biharmonictrees} is contained in \Cref{apx:biharmonictrees}. While this may suggest that Sparse-Biharmonic-WL is much more powerful than Sparse-Resistance-WL on trees, this does not generalize to all trees. We can construct a counterexample consisting of a pair of non-isomorphic trees that Sparse-Biharmonic-WL cannot distinguish in $o(n)$ iterations. We defer this example to \Cref{apx:treeexample}.
\par
These findings formally demonstrate a gap in expressive power among $k$-harmonics: the biharmonic distance reflects global graph structure, but effective resistance can be insufficient in particular cases. 
\par 
While the previous result proves different $k$-harmonic distances can have different powers, the following theorem proves that this is generally not the case: if one $k$-harmonic distance can be used to distinguish a pair of graphs, then most other $k$-harmonic distances will distinguish these graphs as well.

\begin{restatable}{theorem}{distinctkharmonic}
\label{thm:distinct k-harmonic}
    Let $G$ and $H$ be graphs with $n$ vertices that are distinguishable by $k$-Harmonic-WL or Sparse-$k$-Harmonic-WL for some $k$. Then for all but $O(n^5)$ values of $k'\in\R^+$, $G$ and $H$ are distinguishable by the $k'$-harmonic-WL test or test Sparse-$k'$-harmonic WL test respectively.
\end{restatable}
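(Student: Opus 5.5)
The plan is to treat each $k$-harmonic distance as a function of the real parameter $k$. We then show that, outside a finite exceptional set, the pattern of equalities among all distance values in $G$ and $H$ does not depend on $k$.

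First, expand $(L^+)^k$ spectrally. For a graph with distinct nonzero Laplacian eigenvalues $\lambda_1,\dots,\lambda_l$ and eigenspace projections $\Pi_i$,
\[
\bigl(H^{k}(u,v)\bigr)^2=\sum_{i=1}^{l} c_i(u,v)\,\lambda_i^{-k},\qquad c_i(u,v)=\Pi_i(u,u)+\Pi_i(v,v)-2\Pi_i(u,v).
\]
So each squared distance is an exponential sum $\sum_i c_i e^{-k\log\lambda_i}$ with at most $n$ terms. Since $H^k\ge 0$, equality of distances is the same as equality of their squares.

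Take any two ordered pairs $(u,v)$ and $(u',v')$ from $V_G\times V_G$, $V_H\times V_H$, or one from each. The difference of their squared distances is an exponential sum in $k$ with at most $2n$ distinct exponents. By the classical generalized Descartes rule of signs for exponential sums (Pólya--Szegő), such a sum is either identically zero or has at most $2n-1$ real zeros. There are at most $(2n^2)^2=O(n^4)$ such pairs of pairs, so there is an exceptional set $E\subset\R^+$ with $|E|=O(n^5)$ such that the following holds. For every $k'\notin E$, $H^{k'}(u,v)=H^{k'}(u',v')$ if and only if the two exponential sums coincide identically. Consequently every non-exceptional $k'$ induces the same equality pattern $\sim_{\mathrm{gen}}$ on the set of all vertex pairs of $G\sqcup H$. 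For every $k$, including exceptional ones, the equality pattern $\sim_k$ is coarser than $\sim_{\mathrm{gen}}$: every identity holds at every $k$, and exceptional $k$ only add coincidences.

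Next, use that $\psi$-WL depends only on the equality pattern of $\psi$. Since the hash is injective and the color at step $t$ is the multiset of pairs (previous color, $\psi(u,v)$), an induction on $t$ gives two facts about PEs $\psi,\psi'$ on $G\sqcup H$. If $\psi'(u,v)=\psi'(u',v')\Rightarrow\psi(u,v)=\psi(u',v')$, then equal $\psi'$-WL colors at step $t$ imply equal $\psi$-WL colors at step $t$. If the two equality patterns coincide, the color partitions coincide at every step. Hence the $k'$-harmonic WL test for every $k'\notin E$ is at least as strong as the $k$-harmonic WL test for every $k$. If some $k$ distinguishes $G$ and $H$, then so does every $k'\notin E$, which is all but $O(n^5)$ values.

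For the sparse variant we run the identical argument, restricting the pairs to edges of $G$ and $H$. Message passing along edges only compares edge-feature values on edges, so the same counting bound holds; the refinement lemma also works for neighbor multisets.

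I expect the main obstacle to be the zero-count step. It requires carefully merging the two spectra: eigenvalues shared by $G$ and $H$ must be collected into a single exponent, so the sum genuinely has at most $2n$ distinct exponents. We also need that "identically zero as a function of $k$" is exactly the generic relation. A lesser obstacle is stating the monotonicity of WL under refinement of the PE on the disjoint union, so that colors stay comparable across the two graphs.
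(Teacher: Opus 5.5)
Your proposal is correct and follows essentially the same route as the paper. The paper writes each squared $k$-harmonic distance as $\sum_i \lambda_i^{-k} p_i^2(u,v)$ and uses the exponential-sum zero bound (two distances agree either for all $k$ or for at most $t_G+t_H$ values) to build the exceptional set as a union over the $O(n^4)$ pairs-of-pairs. It then applies a refinement lemma: if $\psi(u,v)\neq\psi(x,y)$ implies $\psi'(u,v)\neq\psi'(x,y)$, then $\psi$-WL distinguishing $G$ and $H$ implies $\psi'$-WL does. Your version differs only in small ways: including within-graph pairs is harmless, and merging shared eigenvalues before counting zeros makes explicit a point the paper's zero-counting lemma leaves implicit.
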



\paragraph{Conclusion}
In summary, this section makes two contributions. First, we establish several positive results showing that the $k$-harmonic distances form a viable family of PEs, naturally interpreted as truncations with intuitive meanings. We also show the existence of a complete $k$-harmonic representation that complements these truncations. Second, we prove an upper bound on the power of truncated $k$-harmonics, paralleling limits observed for other popular PEs. These findings motivate our experiments: we (i) evaluate how $k$-harmonics behave empirically and (ii) show that mixtures of PEs generally perform best, reflecting the negative results from Section 4.

\paragraph{Runtime} While the naive algorithm for computing $k$-harmonic distances take $O(n^3)$ time, there is an algorithm to approximately compute any $k$-harmonic distance that is linear in $k$ and the number of edges $m$ in the graph. This is a particularly relevant result given the motivation behind using truncated positional encodings to begin with---to provide structural information without incurring large computational overhead and training costs. The algorithm avoids exact computation of the pseudoinverse of the Laplacian using the fast Laplacian solvers and Johnson-Lindenstrauss projections ~\cite{johnson1984}. We can approximately compute all pairs $k$-harmonic distance in $O(mk\poly\log n + n^2\log n)$ time and the $k$-harmonic distance on the edges in $O(mk\poly\log n)$ time. We outline this algorithm in \Cref{sec: runtime k harmonic}.

\begin{table*}[h]
\centering
\caption{\% Accuracy for each family of graphs in BREC.}
\label{tab:brec}
\begin{tabular}{l ccc cccccc c }
\toprule
\textbf{Accuracy}
& \multicolumn{3}{c}{\textbf{$k$-harmonics}}
& \multicolumn{6}{c}{\textbf{$k$-EP-WL}} & \textbf{3-WL}\\
\cmidrule(lr){2-4}\cmidrule(lr){5-10}\cmidrule(lr){11-11}
& \makecell{Resistance}
& \makecell{Biharmonic}
& \makecell{4-harmonic}
& \makecell{$k=1$}
& \makecell{$k=2$}
& \makecell{$k=3$}
& \makecell{$k=4$}
& \makecell{$k=5$}
& \makecell{$k=7$} 
& - \\
\midrule
\textbf{Basic}
& 100 & 100 & 100
& 0 & 46.6 & 86.7 & 90 & 90 & 91.6 & 100 \\
\midrule
\textbf{Regular}
& \textbf{35.7} & 35 & 30.7
& 0 & 27.1 & 32.1 & 31.4 & 32.1 & 30.7 & 35.7 \\
\midrule
\textbf{Extension}
& 100 & \textbf{100} & 97
& 0 & 59 & 83 & 88 & 91 & 92 & 100 \\
\midrule
\textbf{CFI}
& 7 & \textbf{11} & 4
& 3 & 3 & 2 & 1 & 2 & 2 & 60\\
\midrule
\textbf{Total}
& 54.25 & \textbf{55} & 51
& 0 & 32 & 46.5 & 46.75 & 47 & 48 & 0 \\
\bottomrule
\end{tabular}
\end{table*}

\section{Experiments}
\label{sec:experiments}

As an exploration of our theoretical results, we test several different versions of truncated PEs. That is, given the limited expressivity of any singular truncated PE, we experiment with concatenating them together to see whether or not it yields increased performance. Further, we conduct experiments specifically for the $k$-harmonics in order to explore their usefulness as a PE. We provide further experimentation in \Cref{apx:molhiv}

\paragraph{Architecture} We use the Graphormer-GD~\citep{zhang2023rethinking} architecture as outlined in \Cref{sec:graphormer}. For eigenspace projections we use a simple 2-layer MLP or a DeepSets~\cite{zaheer2018deepsets} module as choice of function $\phi$ before they are injected into the attention layer. For some experiments we opt to use MPNNs, for which PEs are given as edge features with the GINE architecture ~\cite{hu2020open}. We provide further experimental settings in \Cref{apx:experiments}.

\subsection{BREC}
\label{sec:brec}

The BREC dataset was introduced by \citep{wang2024an} as an means of measuring the \textit{realized} expressivity of GNNs. The dataset uses a contrastive learning approach to test whether a GNN is able to learn to map non-isomorphic graphs to different features in latent space. The dataset consists of several different types of graphs that range from WL indistinguishable to 4-WL indistinguishable. 

We test our theoretical results on the truncation of PEs by testing "how much information" is needed before a PE is able to return to its complete baseline. In all tests, we would expect the complete forms of the PEs to distinguish all WL indistinguishable graphs and none of the 3-WL indistinguishable graphs (as per their expressivity bounds). Further, we test with the transformer architecture and include dense PEs (all pairs of nodes have an associated value).

We summarize the main results in \Cref{tab:brec}. In $k$-EP-WL, increasing $k$ linearly increases the dimensionality of the PE per edge, whereas the $k$-harmonics remain one-dimensional regardless of the chosen $k$. We therefore do not concatenate multiple $k$-harmonics the way we concatenate eigenspace projections: on BREC, a single $k$-harmonic already attains its theoretical ceiling on the Basic, Regular, and Extension graphs --- separating all pairs of graphs between 1-WL and 3-WL---while it performs worse than the 3-WL test on the CFI graphs. Adding more eigenspace projections yields a sharp jump from $k=2$ to $k=3$, with diminishing gains thereafter. Taken together, these findings argue for the practicality of $k$-harmonic distances: they provide competitive results through a single-channel with lower computational overhead and training time. The results for 3-WL are from the original paper~\citep{wang2024an}.  We provide further explanation and experimentation on the $k$-harmonics in \Cref{apx:brec}

Lastly, we note that the powers of the adjacency matrix are already well-studied on BREC with similar experimental settings ~\cite{black2024comparing}. For summary, stacking 20 powers of the adjacency matrix yields a total score of $53.5\%$, competitive with the biharmonic distance, but similarly is a multi-dimensional PE.

\subsection{ZINC}
\label{sec:zinc}

\begin{table*}[ht!]
\centering
\caption{Test MAE for different PE combinations on ZINC-12k. Lower is better.}
\label{tab:zinc}
\setlength{\tabcolsep}{6pt}
\renewcommand{\arraystretch}{1.1}
\newcommand{\cmark}{\checkmark}
\begin{tabular}{lccc c}
\toprule
& \textbf{Projections} & \textbf{$k$-Harmonics} & \textbf{Walks} & \textbf{Test MAE} \\
& \multicolumn{3}{c}{Dimensionality of Encoding} &\\
\midrule
\textbf{Projections}        & 8 & -- & -- &  $0.117 \pm 0.005$ \\
\textbf{$k$-harmonics}      & --     & 8 & -- &  $0.076 \pm 0.006$ \\
\textbf{Walks}              & --     & -- & 8 &  $0.082 \pm 0.003$ \\
\midrule
\textbf{Projections + Walks}     & 4 & -- & 4 &  $0.075 \pm 0.005$ \\
\textbf{Projections + $k$-harmonics} & 4 & 4 & -- &  $0.078 \pm 0.002$ \\
\textbf{Walks + $k$-harmonics}       & --     & 4 & 4 &  $\mathbf{0.064 \pm 0.002}$ \\
\bottomrule
\end{tabular}
\end{table*}

While our previous analysis focuses on theoretical expressivity for graph isomorphism, our motivation is practical: to chart the space of PEs that practitioners actually use. To that end, we evaluate on ZINC-12k---a graph regression benchmark introduced by \cite{dwivedi2023benchmarking} for predicting molecular constrained solubility --- to test the conjecture that combining multiple truncated PEs outperforms any single PE.

We fix the dimensionality of PEs to $8$ (not counting the edge features supplied by ZINC) and test them as equal mixtures. Projection refers to keeping the $k$ smallest eigenvalues and using their eigenspace projections. Walk refers to $k$ powers of the adjacency matrix concatenated together. $k$-harmonics refers to that number of harmonics concatenated together. We present these results in \Cref{tab:zinc} and find that the mixture of truncated walks and $k$-harmonics achieves the best performance, surpassing either truncated PE in isolation.

Moreover, mixing PEs helps across the board except for the pairing of projections with $k$-harmonics. That combination outperforms projections alone, but $k$-harmonics achieve essentially the same performance with or without projections. This suggests that the inductive biases provided by truncated projections may be redundant in presence of the information provided by the $k$-harmonics.

\section{Conclusion}

In this paper, we compared three families of relative positional encodings for graph neural networks: eigenspace projections, walk matrices, and $k$-harmonic distances. While these three families have equivalent expressive power in the complete forms, we prove that they have very different expressive power in their truncated forms. Although the applicability of the graph isomorphism decision problem has shown to be limited on real world data, we argue that it remains a principled proxy for the kinds of structural signals a positional encoding can expose to a model. In this sense, our results help explain why truncated PEs may behave differently in downstream tasks, even when their full counterparts may be theoretically more expressive. 

Based on this observation, we recommend that practitioners consider combining encodings from multiple truncated families in order to capture complementary structural cues while controlling preprocessing and training cost. At the same time, our conclusions should be interpreted with appropriate scope: the primary contribution of this work is to initiate the theoretical study of truncated positional encodings and their combinations. We hope this work provides both a foundation for understanding truncated PEs and a starting point for more systemic empirical study in the future. More broadly, our results suggest that truncated positional encodings should be viewed as a distinct design space in their own right, rather than merely as approximations of their complete counterparts.

\section*{Impact Statement}

This paper presents work whose goal is to advance the field of Machine Learning. There are many potential societal consequences of our work, none which we feel must be specifically highlighted here.

\section*{Funding}

Mitchell Black is supported by NSF grant CCF-2217058.  
Amir Nayyeri is supported by NSF grants CCF-1941086 and CCF-2311180.

\newpage
\appendix
\onecolumn

\section{Truncated EP-WL is Not Stronger than the WL Test.}

In this section, we will prove that there are graphs that are distinguishable by the WL test, but are indistinguishable by the truncated EP-WL test when we use a constant fraction of the eigenpairs.

\subsection{Preliminaries}

Let $G$ and $H$ be graphs. The \textit{\textbf{product graph}} of $G$ and $H$ is the graph $G\times H$ with vertices $V_{G\times H} = V_G\times V_H$ and edge $E_{G\times H} = \{ \left((u_1,v_1), (u_2,v_2) \right) : (u_1,u_2)\in E_G\text{ or }(v_1,v_2)\in E_H \}$. The eigenvectors of product graphs have the following property:

\begin{lemma}[{\citep[Theorem 5.3.2]{spielman2025sagt}}]
\label{lem:eig product}
    Let $G$ and $H$ be graphs, and let $\{(\lambda_{G,i},x_{G,i} : 1\leq i\leq V_G\}$ and $\{(\lambda_{H,j},x_{H,j}): 1\leq j\leq |V_H|\}$ be the eigenpairs of $G$ and $H$ respectively. Then the product graph $G\times H$ has eigenpairs $\{(\lambda_{G\times H,(i,j)}, x_{G\times H, (i,j)} : 1\leq i\leq |V_G|,\,1\leq j\leq |V_H|\}$ where 
    $$
    \lambda_{G\times H,(i,j)} = \lambda_{G,i} + \lambda_{H,j}
    $$
    and
    \begin{equation*}
        x_{G\times H, (i,j)}(u,v) = x_G(u)\cdot x_H(v) \tag{$u\in V_G$ and $v\in V_H$}
    \end{equation*}
    Alternatively, $x_{G\times H, (i,j)} = x_{G,i}\otimes x_{H,j}$, where $\otimes$ is the Kronecker product.
\end{lemma}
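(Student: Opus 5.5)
The plan is to write the Laplacian of the product graph as a Kronecker sum and then check the eigen-equation directly on tensor products of eigenvectors.

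First I fix the intended definition. The lemma only holds for the Cartesian product, in which $(u_1,v_1)\sim(u_2,v_2)$ iff either $u_1=u_2$ and $(v_1,v_2)\in E_H$, or $v_1=v_2$ and $(u_1,u_2)\in E_G$. The literal ``or'' condition in the definition above omits the equality requirement on the other coordinate, and under that reading the lemma is false. With the Cartesian definition I would show
$$
A_{G\times H} = A_G\otimes I_{|V_H|} + I_{|V_G|}\otimes A_H .
$$
This holds entrywise: the first term is $1$ exactly when $v_1=v_2$ and $u_1\sim u_2$, the second exactly when $u_1=u_2$ and $v_1\sim v_2$, and the two cases are disjoint.

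Next I handle the degrees. The degree of $(u,v)$ is $\deg_G(u)+\deg_H(v)$, so $D_{G\times H}=D_G\otimes I + I\otimes D_H$. Subtracting gives
$$
L_{G\times H}=L_G\otimes I + I\otimes L_H .
$$

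With this identity, the mixed-product rule $(M\otimes N)(x\otimes y)=Mx\otimes Ny$ gives
$$
L_{G\times H}(x_{G,i}\otimes x_{H,j}) = \lambda_{G,i}\,x_{G,i}\otimes x_{H,j} + \lambda_{H,j}\,x_{G,i}\otimes x_{H,j}.
$$
So $x_{G,i}\otimes x_{H,j}$ is an eigenvector with eigenvalue $\lambda_{G,i}+\lambda_{H,j}$. Its coordinates are $x_{G,i}(u)x_{H,j}(v)$, which matches the formula in the lemma.

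Finally I need these to be all the eigenpairs, counted with multiplicity. I take $\{x_{G,i}\}$ and $\{x_{H,j}\}$ to be orthonormal bases, which exist since $L_G$ and $L_H$ are symmetric. Then $\langle x\otimes y, x'\otimes y'\rangle=\langle x,x'\rangle\langle y,y'\rangle$, so the $|V_G||V_H|$ vectors $x_{G,i}\otimes x_{H,j}$ are orthonormal. They therefore form an eigenbasis of $\mathbb{R}^{V_G\times V_H}$, and no other eigenvalues can occur.

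The only real obstacle is the definitional point in the first step. Everything after the Kronecker-sum identity is routine linear algebra.
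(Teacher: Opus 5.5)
Your proof is correct and matches the source the paper cites without proof: Spielman's Theorem 5.3.2 verifies the same eigen-equation vertex-wise, which is equivalent to your identity $L_{G\times H}=L_G\otimes I+I\otimes L_H$, and your orthonormality step supplies the completeness the paper actually needs when it identifies the $n$ smallest eigenvalues of $P_n\times K_{10}$ and $P_n\times K_{5,5}$. Your definitional flag is also right: with the bare ``or'' the paper's $G\times H$ is the disjunctive product, for which the lemma fails ($K_2\times K_2$ becomes $K_4$, with spectrum $0,4,4,4$ rather than $0,2,2,4$), and the degree bounds in the proof of \Cref{thm: kEP-WL weaker than wl} confirm that the Cartesian product is intended.
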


The following lemmas describe the spectra of several families of graphs we will use in our proof. 

\begin{lemma}[{\citep[Theorem 5.6.1]{spielman2025sagt}}]
\label{lem:eig path}
    Let $P_n$ be the path graph on $n$ vertices. The eigenvalues of the Laplacian of $P_n$ are $2-2\cos(\pi i/n)$ for $0\leq i< n$. 
\end{lemma}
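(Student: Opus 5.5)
The plan is to write down an explicit eigenvector for each claimed eigenvalue, check the eigenvalue equation directly, and then argue by counting that no other eigenvalues exist. Label the vertices of $P_n$ as $1,\dots,n$ with edges $(u,u+1)$. For each $0\leq i<n$ I will use the candidate vector
$$
x_i(u) = \cos\!\left(\frac{\pi i (u - 1/2)}{n}\right), \qquad 1\leq u\leq n .
$$
This is the discrete cosine basis. It arises from viewing $P_n$ as the quotient of the cycle $C_{2n}$ by a reflection: a cycle eigenvector that is symmetric under the reflection descends to the path.

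\textbf{Interior vertices.} For $2\leq u\leq n-1$, the row of $L$ reads $(Lx)(u) = 2x(u) - x(u-1) - x(u+1)$. Write $\theta = \pi i/n$. The sum-to-product identity $\cos(a-\theta)+\cos(a+\theta) = 2\cos\theta\cos a$ gives
$$
(Lx_i)(u) = (2 - 2\cos\theta)\,x_i(u).
$$

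\textbf{Endpoints.} The endpoints have degree one, so $(Lx)(1) = x(1) - x(2)$ and $(Lx)(n) = x(n)-x(n-1)$. To reduce these to the interior computation, I extend $x_i$ by the same formula to $u=0$ and $u=n+1$. The cosine's evenness gives $x_i(0) = \cos(-\theta/2) = x_i(1)$. Since $\cos(\pi i + \alpha) = \cos(\pi i - \alpha)$, it also gives $x_i(n+1) = x_i(n)$. Hence $(Lx_i)(1) = 2x_i(1) - x_i(0) - x_i(2)$, and similarly at $n$. The same trigonometric identity then applies at the endpoints too. This boundary check is the only step needing care, and the reflection trick is what makes it work.

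\textbf{Completeness.} Each $x_i$ is nonzero, since $x_i(1)=\cos(\theta/2)\neq 0$ for $0\leq \theta<\pi$. The map $\theta\mapsto 2-2\cos\theta$ is strictly increasing on $[0,\pi)$, so the $n$ values $2-2\cos(\pi i/n)$ for $0\leq i<n$ are pairwise distinct. We therefore have $n$ distinct eigenvalues of the $n\times n$ symmetric matrix $L$, which must be its entire spectrum, each with multiplicity one. This proves the lemma.
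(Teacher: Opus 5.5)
Your proof is correct and is essentially the standard argument behind the cited result: the paper gives no proof of its own, only the citation to Spielman, whose proof likewise gets these cosine eigenvectors by folding the cycle $C_{2n}$ along a reflection and handles the endpoints via the same symmetry, and your count of $n$ distinct eigenvalues correctly shows there are no others. The only nit is that your claim that the endpoints have degree one needs $n\geq 2$, but the case $n=1$ is trivial since there $L=[0]$.
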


\begin{lemma}[{\citep[Lemma 5.1.1]{spielman2025sagt}}]
\label{lem:eig complete}
    Let $K_n$ be the complete graph on $n$ vertices. The eigenvalues of the Laplacian of $K_n$ are $0$ with multiplicity $1$ and $n$ with multiplicity $n-1$
\end{lemma}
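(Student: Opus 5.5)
The plan is to write the Laplacian of $K_n$ in closed form and read off its spectrum from a rank-one matrix. In $K_n$ every vertex has degree $n-1$, and every off-diagonal entry of the adjacency matrix is $1$. So $D=(n-1)I$ and $A=J-I$, where $J$ is the $n\times n$ all-ones matrix. This gives
\[
L = D - A = (n-1)I - (J - I) = nI - J .
\]
Any eigenvector of $J$ with eigenvalue $\mu$ is therefore an eigenvector of $L$ with eigenvalue $n-\mu$. It remains to find the spectrum of $J$.

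The matrix $J=\mathbf{1}\mathbf{1}^T$ has rank one, and we treat its two eigenspaces separately.
\begin{itemize}
\item For the all-ones vector, $J\mathbf{1}=n\mathbf{1}$, so $L\mathbf{1}=0$. This gives the eigenvalue $0$ with eigenvector $\mathbf{1}$.
\item For any $x$ orthogonal to $\mathbf{1}$, we have $Jx=\mathbf{1}(\mathbf{1}^Tx)=0$, so $Lx=nx$. The orthogonal complement of $\mathbf{1}$ has dimension $n-1$, so the eigenvalue $n$ has multiplicity at least $n-1$.
\end{itemize}
Since $L$ is symmetric, $\mathbb{R}^n=\operatorname{span}(\mathbf{1})\oplus\mathbf{1}^\perp$ is an orthogonal decomposition into eigenspaces. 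The multiplicities are therefore exactly $1$ for the eigenvalue $0$ and $n-1$ for the eigenvalue $n$. When $n\geq 2$ these two eigenvalues are distinct, so there is no overlap between the eigenspaces.

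There is no real obstacle here. The only points needing care are the degenerate case $n=1$, where the spectrum is just $\{0\}$ and the second eigenspace is empty, and the justification that the multiplicities are exact rather than only lower bounds; the dimension count above settles the latter.
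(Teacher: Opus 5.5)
Your proof is correct and matches the standard argument in the cited source; the paper gives no proof of its own and simply invokes Spielman's Lemma 5.1.1. That argument, like yours, shows that every vector orthogonal to $\mathbf{1}$ satisfies $L x = n x$ (equivalently $L = nI - J$) and that $\mathbf{1}$ spans the kernel, with the only cosmetic difference being that Spielman gets the multiplicity of $0$ from connectivity of $K_n$, whereas your dimension count $\mathbb{R}^n = \operatorname{span}(\mathbf{1}) \oplus \mathbf{1}^\perp$ settles both multiplicities at once.
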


\begin{lemma}[{\citep[Section 29.1]{spielman2025sagt}}]
\label{lem:eig complete bipartite}
    Let $K_{n,n}$ be the complete bipartite graph between two sets of $n$ vertices. The eigenvalues of the Laplacian of $K_{n,n}$ are $0$ with multiplicity 1, $n$ with multiplicity $2n-2$, and $2n$ with multiplicity 1. 
\end{lemma}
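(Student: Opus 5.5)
We prove the lemma directly, by writing the Laplacian of $K_{n,n}$ in block form and exhibiting an explicit orthogonal eigenbasis. Order the vertices so that the first $n$ form one side of the bipartition and the last $n$ form the other. Let $J$ be the $n\times n$ all-ones matrix and $\mathbf{1}$ the all-ones vector of length $n$. Every vertex has degree $n$, so
\[
L = nI_{2n} - A, \qquad A = \begin{pmatrix} 0 & J \\ J & 0 \end{pmatrix}.
\]
Hence $x$ is an eigenvector of $L$ with eigenvalue $n-\mu$ exactly when $x$ is an eigenvector of $A$ with eigenvalue $\mu$. It therefore suffices to find $2n$ mutually orthogonal eigenvectors of $A$.

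We take three families of vectors.
\begin{enumerate}
\item The vector $(\mathbf{1},\mathbf{1})$ satisfies $A(\mathbf{1},\mathbf{1}) = (n\mathbf{1},n\mathbf{1})$. Its $A$-eigenvalue is $n$, so its $L$-eigenvalue is $0$.
\item The vector $(\mathbf{1},-\mathbf{1})$ satisfies $A(\mathbf{1},-\mathbf{1}) = (-n\mathbf{1},n\mathbf{1}) = -n(\mathbf{1},-\mathbf{1})$. Its $L$-eigenvalue is $2n$.
\item For any $y\in\R^n$ with $y\perp\mathbf{1}$ we have $Jy=0$. Consequently $A(y,0)=0$ and $A(0,y)=0$, so both $(y,0)$ and $(0,y)$ have $L$-eigenvalue $n$. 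The subspace $\mathbf{1}^\perp$ has dimension $n-1$, and each side contributes one copy, giving a $(2n-2)$-dimensional subspace of such eigenvectors.
\end{enumerate}

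It remains to check that these eigenvectors span $\R^{2n}$. Vectors in the third family are orthogonal to $(\mathbf{1},\pm\mathbf{1})$, because their nonzero block is orthogonal to $\mathbf{1}$. The two vectors $(\mathbf{1},\mathbf{1})$ and $(\mathbf{1},-\mathbf{1})$ are orthogonal to each other. The dimensions add up to $1+1+(2n-2)=2n$, so we have a full orthogonal eigenbasis. Since $L$ is symmetric, there are no other eigenvalues, and the multiplicities are exactly $1$ for $0$, $2n-2$ for $n$, and $1$ for $2n$.

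There is no real obstacle in this argument. The only step requiring any care is the dimension count showing that the three families exhaust $\R^{2n}$, and that is the count given above.
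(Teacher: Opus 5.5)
Your proof is correct. Writing $L = nI_{2n} - A$ and exhibiting the three eigenvector families $(\mathbf{1},\mathbf{1})$, $(\mathbf{1},-\mathbf{1})$, and $\{(y,0),(0,y) : y\perp\mathbf{1}\}$ gives a complete eigenbasis with $L$-eigenvalues $0$, $2n$, and $n$ (the last with multiplicity $2n-2$).

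The paper does not prove this lemma. It only cites Spielman's textbook (Section 29.1), so there is no argument to compare against, and your computation is a self-contained verification of the cited fact.

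Two small points in the final step:
\begin{itemize}
\item The multiplicities can be read off from the basis because $0<n<2n$ are distinct for $n\geq 1$. It is worth stating this explicitly.
\item Having a full eigenbasis is already what rules out other eigenvalues. Symmetry of $L$ is not needed there.
\end{itemize}
For $n=1$ your count correctly degenerates to the spectrum $\{0,2\}$ of $K_{1,1}$.
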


\subsection{Proof of~\Cref{thm: kEP-WL weaker than wl}}
\label{apx:kEP-WLweakerthanwl}

\kepwlweakerthanwl*

\begin{proof}
    We will prove this for the graphs $G = P_n \times K_{10}$ and $H = P_n \times K_{5,5}$ for any $n\geq 2$.
    \par 
    First, observe that $P_n\times K_{10}$ and $P_n\times K_{5,5}$ are distinguishable by the WL-test as all nodes in $P_n\times K_{10}$ have degree $\geq 9$, while all nodes of $P_n\times K_{5,5}$ have degree $\leq 7$. 
    \par 
    Next, we will prove that $P_n\times K_{10}$ and $P_n\times K_{5,5}$  are indistinguishable by $k$-EP-WL for any value of $k\leq n$. 
    \par 
    First, by \Cref{lem:eig path}, observe that all eigenvalues $\lambda_i$ of $P_n$ satisfy $\lambda_i = 2-2\cos(\pi i/n) < 5$, while all non-zero eigenvalues of both $K_{10}$ and $K_{5,5}$ are $\geq 5$ by \Cref{lem:eig complete,lem:eig complete bipartite}; thus, all eigenvalues of $P_n$ are smaller than all non-zero eigenvalues of both $K_{10}$ and $K_{5,5}$. This implies the product graphs $P_n\times K_{10}$ and $P_{n}\times K_{5,5}$ have the same smallest $n$ eigenvalues $\lambda_{P_n\times K_{10},(i,0)} = \lambda_{P_n,i} = \lambda_{P_n\times K_5,(i,0)} < 5$, i.e., the eigenvalues of $P_n$ plus the zero eigenvalue of $K_{10}$ or $K_{5,5}$, as any eigenvalue $\lambda_{P_n,K_n,(i,j)}$ or $\lambda_{P_n,K_n,(i,j)}$ for $j\geq 1$ will be $\geq 5$. 
    \par 
    Likewise, $P_n\times K_{10}$ and $P_{n}\times K_{5,5}$ have the same smallest $n$ eigenvectors (for a specific choice of basis), as $K_{10}$ and $K_{5,5}$ have the same 0th eigenvector (the length-10 all-ones vectors $1_{10}$). By ``same'', we mean that there are choices of eigenvectors of $P_n\times K_{10}$ and $P_{n}\times K_{5,5}$ and a bijection $\sigma: V_{P_n\times K_{10}}\to V_{P_{n}\times K_{5,5}}$ such that $x_{P_n\times K_{10},i}(v) = x_{P_n\times K_{5,5}}(\sigma(v))$. Pick an arbitrary bijection $\tau:K_{10}\to K_{5,5}$, and define $\sigma:V_{P_n\times K_{10}}\to V_{P_{n}\times K_{5,5}}$ as $\sigma(u,v) = (u,\sigma(v))$, i.e., each vertex in the one copy of $P_{n}$ in $P_n\times K_{10}$ to a vertex in the same copy of $P_n$ in $P_{n}\times K_{5,5}$. For any choice of orthonormal eigenvectors $x_{P_n,i}$ of $P_n$, then by \Cref{lem:eig product}, the first $n$ eigenvectors of $P_n\times K_{10}$ and $P_{n}\times K_{5,5}$ are $x_{P_n,i}\otimes 1_{10}$. It is easy to see that $P_n\times K_{10}$ and $P_{n}\times K_{5,5}$ have the same entries in $x_{P_n,i}\otimes 1_{10}$ for the bijection defined above. See~\Cref{fig:path cross complete eigenvectors}. Moreover, for the projection onto this eigenspace $\Pi_{i}$, we can see that $\Pi_{i}(u,v) = x_{P_n\times K_{10},i}(u)\cdot x_{P_n\times K_{5,5},i}(u) =  \Pi_{i}(\sigma(u),\sigma(v))$ for $u,v\in V_{P_n\times K_{10}}$. 
    \par 
    Informally, because $P_n\times K_{10}$ and $P_n\times K_{5,5}$ have the same first $n$ eigenvalues and eigenvectors, they are indistinguishable by $n$-EP-WL.
    \par 
    To make this more formal, we must show that $P_n\times K_{10}$ and $P_n\times K_{5,5}$ have the same set of $k$-EP-WL colors at each iteration. To do this, we need a mapping between the vertices of $P_n\times K_{10}$ and $P_n\times K_{5,5}$ that preserves colors. We can do this with the bijection $\sigma$ defined above, and we can inductively prove that $\wlcolor{P_n\times K_{10}}{t}(v) = \wlcolor{P_n\times K_{5,5}}{t}(v)$ for all $v\in V_{P_n\times K_{10}}$ and $t\geq 0$ as this bijection preserves entries in the eigenspace projection matrices $\Pi_i$.
\end{proof}

\begin{corollary}
    There exist a pair of graphs $G$ and $H$, both with $n$ vertices, such that $G$ and $H$ are distinguishable by the $l$-RWPE test for all $l\geq 1$, but are indistinguishable by the $k$-EP-WL test for $k\in\Omega(n)$.
\end{corollary}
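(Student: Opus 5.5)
We reuse the pair from the proof of \Cref{thm: kEP-WL weaker than wl}, namely $G = P_n\times K_{10}$ and $H = P_n\times K_{5,5}$. That proof already shows they are indistinguishable by $k$-EP-WL for $k\le n$. Both graphs have $10n$ vertices, so this is indistinguishability for $k\in\Omega(|V|)$. It therefore remains only to show that the $l$-RWPE test separates $G$ and $H$ for every $l\geq 1$.

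We take the $l$-RWPE test to be the GD-WL test whose RPE is the first $l$ powers of the random walk matrix $M = D^{-1}A$, i.e.\ $\psi(u,v) = (M(u,v), M^2(u,v),\ldots,M^l(u,v))$, or its diagonal (landing-probability) variant; the argument is designed to work for either reading. The first component alone already separates the graphs.

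\textbf{Off-diagonal version.} We count nonzero first-coordinate entries. In $G$ every vertex has degree $9+\deg_{P_n}(u)\ge 10$, and in $H$ every vertex has degree $5+\deg_{P_n}(u)\le 7$. Hence after one iteration of $\psi$-WL, the color of $v$ records the multiset $\multiset{M(u,v): u\in V}$. From this multiset one reads off the number of $u$ with $M(u,v)\neq 0$ (excluding zeros), which is $\deg(v)$ since $M(u,v) = 1/\deg(u)$ exactly when $u\sim v$. The resulting multisets of colors differ between $G$ and $H$, so one iteration suffices for every $l\ge1$.

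\textbf{Diagonal version.} We compare return probabilities. Here $M^2(v,v) = \sum_{u\sim v} 1/(\deg(v)\deg(u))$. For an interior path vertex of $G$ this is roughly $1/11$, while in $H$ it is roughly $1/7$. More robustly, $M^2(v,v)\le 1/\min\deg\le 1/10$ in $G$ and $M^2(v,v)\ge 1/\max\deg\ge 1/7$ in $H$, so the two graphs share no common value. For $l=1$ the diagonal of $M$ is zero, so in that reading we instead use the off-diagonal entries as above.

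\textbf{Main obstacle.} The only delicate step is pinning down the definition of the $l$-RWPE test, which the paper has not yet stated. The degree separation ($\ge 10$ versus $\le 7$) is strong enough that any reasonable reading (entries of $M$, $M^2$, or $D^{-1}A$ as edge features) recovers it in one round. The write-up will state the definition explicitly and then give the one-line degree argument.
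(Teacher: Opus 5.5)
Your proposal is correct and is the argument the paper leaves implicit; the corollary is stated there without proof, right after \Cref{thm: kEP-WL weaker than wl}. You reuse $P_n\times K_{10}$ and $P_n\times K_{5,5}$, whose $k$-EP-WL indistinguishability for $k\le n=|V|/10$ is already established, and observe that the random-walk encoding exposes the degree gap ($\ge 10$ versus $\le 7$) in one round, just as 1-WL does. The one thing to tidy is your $l=1$ patch in the diagonal reading, because you cannot switch to a different test for a single value of $l$. Either commit to the relative reading $\psi(u,v)=(M(u,v),\ldots,M^l(u,v))$, the random-walk analogue of $k$-Walk-WL, which your off-diagonal argument settles for every $l$. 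Or say that diagonal RWPE enters as node features under message passing, where the first 1-WL refinement already separates the degrees.
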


\section{Weighted Resistance-WL is Not Stronger than Weighted WL}
\label{apx:resistance}

In this section, we prove several results comparing the resistance WL test to two other WL tests: the adjacency-WL test and the EP-WL-test. In both cases, we show there are graphs that are indistinguishable by the resistance WL test but that can be distinguished by the other other tests. 

\subsection{Proof of \Cref{thm: resistance wl weaker than wl}}

\resistancewlweakerthanwl* 

\begin{proof}
    The outline of our proof is as follows.
    \begin{enumerate}
        \item\label{item: euclidean distances} We construct a pair of squared Euclidean distances matrices and prove they are indistinguishable by the WL test.
        \item\label{item: distances are resistances} We prove that these distance matrices are the effective resistance matrices of a pair of weighted graphs.
        \item\label{item: lapacians are indistinguishable} We prove that these two graphs are indistinguishable by the WL test on their weighted adjacency matrices.
    \end{enumerate} 
    We are able to prove \Cref{item: euclidean distances} analytically. For \Cref{item: distances are resistances,item: lapacians are indistinguishable}, we use symbolic software to perform a computer-aided proof. These steps involving taking a matrix pseudoinverse, which can be hard to reason about mathematically but can by checked with a simple Python program using the SymPy~\citep{10.7717/peerj-cs.103} library to perform exact rational operation. The code for this proof can be found in the accompanying GitHub repo.  
    \par 
    To prove \Cref{item: euclidean distances}, we define point sets $P=\{p_1,\ldots,p_{10}\}$ and $Q=\{q_1,\ldots,q_{10}\}$.\footnotemark When represented as matrices with columns $p_i$, these point sets are \footnotetext{Looking at the matrices of these point sets, readers familiar with the WL test might be reminded of the classic example of two graphs that are indistinguishable by the WL test: the 10-cycle $C_{10}$ and the disjoint union of two five cycles $C_5\sqcup C_5$. Indeed, this is how we constructed these points. We found these point sets using a construction by \citet{Maehara1984space} to find point sets whose unit-disk graphs is a given graph, and the unit-disk graphs of these point sets are $C_{10}$ and $C_5\sqcup C_5$.}
    $$
    P = \frac{1}{10}\begin{bmatrix}
       8  &       1  &       0  &       0  &       0  &       0  &       0  &       0  &       0  &       1 \\
       1  &       8  &       1  &       0  &       0  &       0  &       0  &       0  &       0  &       0 \\
       0  &       1  &       8  &       1  &       0  &       0  &       0  &       0  &       0  &       0 \\
       0  &       0  &       1  &       8  &       1  &       0  &       0  &       0  &       0  &       0 \\
       0  &       0  &       0  &       1  &       8  &       1  &       0  &       0  &       0  &       0 \\
       0  &       0  &       0  &       0  &       1  &       8  &       1  &       0  &       0  &       0 \\
       0  &       0  &       0  &       0  &       0  &       1  &       8  &       1  &       0  &       0 \\
       0  &       0  &       0  &       0  &       0  &       0  &       1  &       8  &       1  &       0 \\
       0  &       0  &       0  &       0  &       0  &       0  &       0  &       1  &       8  &       1 \\
       1  &       0  &       0  &       0  &       0  &       0  &       0  &       0  &       1  &       8 
    \end{bmatrix}
    $$
    
    $$
    Q = \frac{1}{10} \begin{bmatrix}
       8  &       1  &       0  &       0  &       1  &       0  &       0  &       0  &       0  &       0 \\
       1  &       8  &       1  &       0  &       0  &       0  &       0  &       0  &       0  &       0 \\
       0  &       1  &       8  &       1  &       0  &       0  &       0  &       0  &       0  &       0 \\
       0  &       0  &       1  &       8  &       1  &       0  &       0  &       0  &       0  &       0 \\
       1  &       0  &       0  &       1  &       8  &       0  &       0  &       0  &       0  &       0 \\
       0  &       0  &       0  &       0  &       0  &       8  &       1  &       0  &       0  &       1 \\
       0  &       0  &       0  &       0  &       0  &       1  &       8  &       1  &       0  &       0 \\
       0  &       0  &       0  &       0  &       0  &       0  &       1  &       8  &       1  &       0 \\
       0  &       0  &       0  &       0  &       0  &       0  &       0  &       1  &       8  &       1 \\
       0  &       0  &       0  &       0  &       0  &       1  &       0  &       0  &       1  &       8 
    \end{bmatrix}
    $$ 
    If we compute squared Euclidean distance matrices of these point sets, we find that all rows of both matrices have the same multiset of distances; one 0, two $1$s, two $1.30$s, and five $1.32$s. Therefore, if we perform GD-WL on the two matrices, then at each iteration, all nodes in both graphs will have the same color. Therefore, these two points are indistinguishable by the RPE-WL test.   
    $$
    d(P,P)^2 = \begin{bmatrix}
    0.00 &    1.00 &    1.30 &    1.32 &    1.32 &    1.32 &    1.32 &    1.32 &    1.30 &    1.00\\
    1.00 &    0.00 &    1.00 &    1.30 &    1.32 &    1.32 &    1.32 &    1.32 &    1.32 &    1.30\\
    1.30 &    1.00 &    0.00 &    1.00 &    1.30 &    1.32 &    1.32 &    1.32 &    1.32 &    1.32\\
    1.32 &    1.30 &    1.00 &    0.00 &    1.00 &    1.30 &    1.32 &    1.32 &    1.32 &    1.32\\
    1.32 &    1.32 &    1.30 &    1.00 &    0.00 &    1.00 &    1.30 &    1.32 &    1.32 &    1.32\\
    1.32 &    1.32 &    1.32 &    1.30 &    1.00 &    0.00 &    1.00 &    1.30 &    1.32 &    1.32\\
    1.32 &    1.32 &    1.32 &    1.32 &    1.30 &    1.00 &    0.00 &    1.00 &    1.30 &    1.32\\
    1.32 &    1.32 &    1.32 &    1.32 &    1.32 &    1.30 &    1.00 &    0.00 &    1.00 &    1.30\\
    1.30 &    1.32 &    1.32 &    1.32 &    1.32 &    1.32 &    1.30 &    1.00 &    0.00 &    1.00\\
    1.00 &    1.30 &    1.32 &    1.32 &    1.32 &    1.32 &    1.32 &    1.30 &    1.00 &    0.00
    \end{bmatrix}   
    $$
    $$
    d(Q,Q)^2 = \begin{bmatrix}
    0.00 &    1.00 &    1.30 &    1.30 &    1.00 &    1.32 &    1.32 &    1.32 &    1.32 &    1.32\\
    1.00 &    0.00 &    1.00 &    1.30 &    1.30 &    1.32 &    1.32 &    1.32 &    1.32 &    1.32\\
    1.30 &    1.00 &    0.00 &    1.00 &    1.30 &    1.32 &    1.32 &    1.32 &    1.32 &    1.32\\
    1.30 &    1.30 &    1.00 &    0.00 &    1.00 &    1.32 &    1.32 &    1.32 &    1.32 &    1.32\\
    1.00 &    1.30 &    1.30 &    1.00 &    0.00 &    1.32 &    1.32 &    1.32 &    1.32 &    1.32\\
    1.32 &    1.32 &    1.32 &    1.32 &    1.32 &    0.00 &    1.00 &    1.30 &    1.30 &    1.00\\
    1.32 &    1.32 &    1.32 &    1.32 &    1.32 &    1.00 &    0.00 &    1.00 &    1.30 &    1.30\\
    1.32 &    1.32 &    1.32 &    1.32 &    1.32 &    1.30 &    1.00 &    0.00 &    1.00 &    1.30\\
    1.32 &    1.32 &    1.32 &    1.32 &    1.32 &    1.30 &    1.30 &    1.00 &    0.00 &    1.00\\
    1.32 &    1.32 &    1.32 &    1.32 &    1.32 &    1.00 &    1.30 &    1.30 &    1.00 &    0.00
    \end{bmatrix}
    $$
    We now prove \Cref{item: distances are resistances}, where we verify this set of squared Euclidean distances are the effective resistances of a weighted graph. For this, we rely on the work of \citet{devriendt2022effective}. \citet[Proposition 1]{devriendt2022effective} proves that a matrix is a Laplacian of a weighted graph iff (1) it is symmetric, (2) it has non-positive off-diagonal entries, and (3) its rows and columns sum to 0.\footnotemark~Likewise, \citet[Proposition 2]{devriendt2022effective} proves a squared Euclidean distance matrix is the resistance matrix of a weighted graph iff the pseudoinverse of its Gram matrix is a weighted Laplacian. Finally, given a squared Euclidean distance matrix $D$, we can recover its Gram matrix with double-centering: the Gram matrix is $-\frac{1}{2}\Pi D \Pi$, where $\Pi=I-\frac{1}{n}J$ for $J$ the all-ones matrix. Therefore, to verify that these squared Euclidean distance matrices are the effective resistance matrix of a graph, we convert them to their Gram matrices, take the pseudoinverse of the Gram matrices, and verify they satisfy the three conditions to be a Laplacian. We do this in Python using the SymPy library. Sympy provides exact calculation using rational numbers, so these calculations (including taking the pseudoinverse) are not subject to numerical errors.    
    \footnotetext{Devriendt's fourth condition that the matrix be irreducible is only for \textit{connected} graphs. For our purposes, we do not care whether or not our graphs are connected, although it turns out that they are connected.}  
    \par 
    Finally, we only need to verify that these graphs can be distinguished by RPE-WL on their weighted adjacency matrices. As mentioned in the previous paragraph, we can exactly compute the adjacency matrices of these graphs using SymPy. The adjacency matrices are:
    $$
    A_P = \begin{bmatrix}0 & \frac{5822669}{10979298} & \frac{202579}{10979298} & \frac{1249229}{10979298} & \frac{1073419}{10979298} & \frac{1105229}{10979298} & \frac{1073419}{10979298} & \frac{1249229}{10979298} & \frac{202579}{10979298} & \frac{5822669}{10979298}\\\frac{5822669}{10979298} & 0 & \frac{5822669}{10979298} & \frac{202579}{10979298} & \frac{1249229}{10979298} & \frac{1073419}{10979298} & \frac{1105229}{10979298} & \frac{1073419}{10979298} & \frac{1249229}{10979298} & \frac{202579}{10979298}\\\frac{202579}{10979298} & \frac{5822669}{10979298} & 0 & \frac{5822669}{10979298} & \frac{202579}{10979298} & \frac{1249229}{10979298} & \frac{1073419}{10979298} & \frac{1105229}{10979298} & \frac{1073419}{10979298} & \frac{1249229}{10979298}\\\frac{1249229}{10979298} & \frac{202579}{10979298} & \frac{5822669}{10979298} & 0 & \frac{5822669}{10979298} & \frac{202579}{10979298} & \frac{1249229}{10979298} & \frac{1073419}{10979298} & \frac{1105229}{10979298} & \frac{1073419}{10979298}\\\frac{1073419}{10979298} & \frac{1249229}{10979298} & \frac{202579}{10979298} & \frac{5822669}{10979298} & 0 & \frac{5822669}{10979298} & \frac{202579}{10979298} & \frac{1249229}{10979298} & \frac{1073419}{10979298} & \frac{1105229}{10979298}\\\frac{1105229}{10979298} & \frac{1073419}{10979298} & \frac{1249229}{10979298} & \frac{202579}{10979298} & \frac{5822669}{10979298} & 0 & \frac{5822669}{10979298} & \frac{202579}{10979298} & \frac{1249229}{10979298} & \frac{1073419}{10979298}\\\frac{1073419}{10979298} & \frac{1105229}{10979298} & \frac{1073419}{10979298} & \frac{1249229}{10979298} & \frac{202579}{10979298} & \frac{5822669}{10979298} & 0 & \frac{5822669}{10979298} & \frac{202579}{10979298} & \frac{1249229}{10979298}\\\frac{1249229}{10979298} & \frac{1073419}{10979298} & \frac{1105229}{10979298} & \frac{1073419}{10979298} & \frac{1249229}{10979298} & \frac{202579}{10979298} & \frac{5822669}{10979298} & 0 & \frac{5822669}{10979298} & \frac{202579}{10979298}\\\frac{202579}{10979298} & \frac{1249229}{10979298} & \frac{1073419}{10979298} & \frac{1105229}{10979298} & \frac{1073419}{10979298} & \frac{1249229}{10979298} & \frac{202579}{10979298} & \frac{5822669}{10979298} & 0 & \frac{5822669}{10979298}\\\frac{5822669}{10979298} & \frac{202579}{10979298} & \frac{1249229}{10979298} & \frac{1073419}{10979298} & \frac{1105229}{10979298} & \frac{1073419}{10979298} & \frac{1249229}{10979298} & \frac{202579}{10979298} & \frac{5822669}{10979298} & 0\end{bmatrix}
    $$
    $$
    A_Q = \begin{bmatrix}0 & \frac{639}{1210} & \frac{39}{1210} & \frac{39}{1210} & \frac{639}{1210} & \frac{1}{10} & \frac{1}{10} & \frac{1}{10} & \frac{1}{10} & \frac{1}{10}\\\frac{639}{1210} & 0 & \frac{639}{1210} & \frac{39}{1210} & \frac{39}{1210} & \frac{1}{10} & \frac{1}{10} & \frac{1}{10} & \frac{1}{10} & \frac{1}{10}\\\frac{39}{1210} & \frac{639}{1210} & 0 & \frac{639}{1210} & \frac{39}{1210} & \frac{1}{10} & \frac{1}{10} & \frac{1}{10} & \frac{1}{10} & \frac{1}{10}\\\frac{39}{1210} & \frac{39}{1210} & \frac{639}{1210} & 0 & \frac{639}{1210} & \frac{1}{10} & \frac{1}{10} & \frac{1}{10} & \frac{1}{10} & \frac{1}{10}\\\frac{639}{1210} & \frac{39}{1210} & \frac{39}{1210} & \frac{639}{1210} & 0 & \frac{1}{10} & \frac{1}{10} & \frac{1}{10} & \frac{1}{10} & \frac{1}{10}\\\frac{1}{10} & \frac{1}{10} & \frac{1}{10} & \frac{1}{10} & \frac{1}{10} & 0 & \frac{639}{1210} & \frac{39}{1210} & \frac{39}{1210} & \frac{639}{1210}\\\frac{1}{10} & \frac{1}{10} & \frac{1}{10} & \frac{1}{10} & \frac{1}{10} & \frac{639}{1210} & 0 & \frac{639}{1210} & \frac{39}{1210} & \frac{39}{1210}\\\frac{1}{10} & \frac{1}{10} & \frac{1}{10} & \frac{1}{10} & \frac{1}{10} & \frac{39}{1210} & \frac{639}{1210} & 0 & \frac{639}{1210} & \frac{39}{1210}\\\frac{1}{10} & \frac{1}{10} & \frac{1}{10} & \frac{1}{10} & \frac{1}{10} & \frac{39}{1210} & \frac{39}{1210} & \frac{639}{1210} & 0 & \frac{639}{1210}\\\frac{1}{10} & \frac{1}{10} & \frac{1}{10} & \frac{1}{10} & \frac{1}{10} & \frac{639}{1210} & \frac{39}{1210} & \frac{39}{1210} & \frac{639}{1210} & 0\end{bmatrix}
    $$
    We can see that these graph can be distinguished in a single iteration of Adjacency-WL because (for example) the second graph has an edge with weight $\frac{1}{10}$ and the first graph has no edge with this weight. 
\end{proof}

\begin{theorem}
    There exist a pair of weighted graphs $G$ and $H$ such that $G$ and $H$ are indistinguishable by the resistance WL test, but are distinguishable by the EP-WL test.
\end{theorem}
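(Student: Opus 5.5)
We reuse the pair of weighted graphs $G$ and $H$ built in the proof of \Cref{thm: resistance wl weaker than wl}, namely the graphs whose effective resistance matrices are the squared Euclidean distance matrices $d(P,P)^2$ and $d(Q,Q)^2$, with weighted adjacency matrices $A_P$ and $A_Q$. That proof already shows that $G$ and $H$ are indistinguishable by Resistance-WL: every row of both resistance matrices has the same multiset of entries, so the coloring is constant on all vertices of both graphs at every iteration. It therefore remains only to show that EP-WL distinguishes $G$ from $H$. We do this by showing that EP-WL refines Adjacency-WL, and then applying the adjacency computation from the previous proof.

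The key step is to show that the weighted adjacency matrix can be read off from the eigenspace projection invariant $\mathcal{P}(u,v)=\{(\lambda_i,\Pi_i(u,v))\}$. By the spectral theorem,
\[
L(u,v)=\sum_{i=1}^{l}\lambda_i\,\Pi_i(u,v),
\]
so the multiset $\mathcal{P}(u,v)$ determines $L(u,v)$. For $u\neq v$ we have $A(u,v)=-L(u,v)$. The pair $u=v$ is handled separately, and it can also be recovered from the invariant. Hence there is a fixed function $f$ with $A(u,v)=f(\mathcal{P}(u,v))$ for all $u,v$. By induction on $t$, each EP-WL color $\wlcolor{\mathcal{P}}{t}(v)$ determines the Adjacency-WL color at iteration $t$: the EP-WL multiset $\multiset{(\wlcolor{\mathcal{P}}{t-1}(u),\mathcal{P}(u,v))}$ maps through $f$ and the inductive hypothesis onto the Adjacency-WL multiset. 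So whenever Adjacency-WL separates two graphs, EP-WL does too.

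Finally, the proof of \Cref{thm: resistance wl weaker than wl} shows that Adjacency-WL separates $G$ and $H$ after one iteration, because $A_Q$ has entries equal to $\frac{1}{10}$ and $A_P$ has none. By the refinement above, EP-WL distinguishes them as well, which proves the theorem.

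The main subtlety is a convention issue. Adjacency-WL as used earlier should treat nonedges as weight $0$ over all pairs, in the same dense GD-WL format as EP-WL; otherwise the comparison of multisets is not between tests of the same form. Beyond that, we must make sure the eigenvalue labels are compared as actual values, so that the map $f$ is the same for both graphs. If needed, we could instead note directly that the spectra of $L_P$ and $L_Q$ differ, which would distinguish the graphs at once. We expect the refinement argument to be enough without this extra check.
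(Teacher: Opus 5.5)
Your proposal is correct and takes essentially the same route as the paper. The paper reuses the pair from \Cref{thm: resistance wl weaker than wl} and argues that EP-WL recovers the weighted adjacency matrix, hence refines Adjacency-WL; it simply cites Zhang et al.\ (Proposition 4.2) for this, where you derive it directly from $L=\sum_i\lambda_i\Pi_i$. The diagonal case you leave vague is closed by taking $f(\mathcal{P}(u,v))=\max\{0,-\sum_i\lambda_i\Pi_i(u,v)\}$: since $L(u,u)\ge 0$, this returns $0$ on the diagonal and $A(u,v)$ off it.
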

\begin{proof}
    As the adjacency matrix can be computed from the EP-WL colorings, then it follows that the EP-WL test is stronger than the resistance WL test. See~\citep[Proposition 4.2]{zhang2024on}.
\end{proof}

\subsection{Future Directions}

\paragraph{Unweighted Graphs}

As mentioned in the motivation of this section, these results are for weighted graphs. It is an interesting open question if similar results hold for unweighted graphs. 
\par 
The current results would seem to rule out the possibility that the WL iterations could recover the Laplacian from the resistance matrix using only linear algebra. Instead, any proof that resistance WL is stronger than WL on unweighted graphs would have to use some property of resistance that only holds on unweighted graphs, which admittedly, there are quite a few. However, our conjecture is this is not the case.

\begin{conjecture}  
\label{conj: resistance not stronger than wl}
    There exist a pair of unweighted graphs $G$ and $H$ such that $G$ and $H$ are     indistinguishable by the resistance WL test, but they are distinguishable by the WL test.
\end{conjecture}

As the shortest-path-distance WL test is stronger than the WL test, then~\Cref{conj: resistance not stronger than wl} would imply the following conjecture. However, proving \Cref{conj: resistance not stronger than spd} would still be significant as it would disprove a conjecture by~\citep[Section 6]{zhang2023rethinking} and would prove that resistance-WL is strong than shortest-path-distance-WL.

\begin{conjecture}  
\label{conj: resistance not stronger than spd}
    There exist a pair of unweighted graphs $G$ and $H$ such that $G$ and $H$ are indistinguishable by the resistance WL test, but they are distinguishable by the shortest-path-distance WL test.
\end{conjecture}



\section{Truncated Walk Encodings}

\begin{lemma}[{\citep[Corollary 10.3.3]{lehman2017mathematics}}]
\label{lem: adjacency powers walks}
    Let $G=(V,E)$ be a graph, let $A$ be the adjacency matrix of $G$, and let $u,v\in V$. Then $A^k_{vu}$ is the number of length $k$ walks starting at $v$ and ending at $u$.
\end{lemma}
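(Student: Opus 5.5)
The plan is a direct induction on $k$, using only the definition of matrix multiplication and a bijective decomposition of walks. A walk of length $k$ from $v$ to $u$ is a sequence of vertices $v=w_0,w_1,\ldots,w_k=u$ with $(w_{i-1},w_i)\in E$ for each $i$.

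For the base case I will take $k=1$. A length-one walk from $v$ to $u$ is just the edge $(v,u)$, so the number of such walks is $1$ if $(v,u)\in E$ and $0$ otherwise, which is exactly $A_{vu}$. The case $k=0$ also holds with the convention $A^0=I$: the only length-zero walk is the trivial walk at $v$, so the count is $1$ when $u=v$ and $0$ otherwise.

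For the inductive step, assume $(A^k)_{vw}$ counts the length-$k$ walks from $v$ to $w$ for every $w\in V$. By definition of matrix multiplication,
\begin{equation*}
(A^{k+1})_{vu} = \sum_{w\in V} (A^k)_{vw}\, A_{wu}.
\end{equation*}
I will show the right-hand side counts length-$(k+1)$ walks from $v$ to $u$. Every such walk $v=w_0,\ldots,w_k,w_{k+1}=u$ splits uniquely into two pieces. The first is the prefix $w_0,\ldots,w_k$, a length-$k$ walk from $v$ to $w:=w_k$. The second is the final step $(w,u)$, which must be an edge of $G$. Conversely, any length-$k$ walk from $v$ to $w$ with $(w,u)\in E$ extends to a length-$(k+1)$ walk from $v$ to $u$. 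This gives a bijection between length-$(k+1)$ walks from $v$ to $u$ and pairs consisting of an intermediate vertex $w$ with $A_{wu}=1$ and a length-$k$ walk from $v$ to $w$. Summing over $w$ and applying the inductive hypothesis then gives the formula above.

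There is no real obstacle here. The only care needed is to make the decomposition explicitly bijective, so that no walk is counted twice or missed, and to note that the product $A_{wu}$ acts as an indicator of the final edge. If desired, one can also remark that $A$ is symmetric because $G$ is undirected, so the direction of the walk does not matter.
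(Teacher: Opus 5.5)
Your induction is correct and is essentially the standard argument. The paper gives no proof of its own and cites Lehman et al., who also split a walk at an intermediate vertex and match this with the entrywise formula for a matrix product; they phrase it as the walk-counting matrices for lengths $i$ and $j$ multiplying to the one for length $i+j$, and your last-step decomposition is the case $j=1$.
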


\begin{lemma}
\label{lemma: neighborhoods walks}
    Let $G$ and $H$ be graphs. Let $v\in V_G$ and $u\in V_H$ such that $v$ and $u$ have have isomorphic $k$ hop neighborhoods, and let $\sigma:B_k(v)\to B_k(u)$ be the isomorphism. Then for all $w\in V_{B_k(v)}$ and all $l\leq k$, $A_{G,vw}^{l} = A_{H,u\sigma(w)}^{l}$. 
\end{lemma}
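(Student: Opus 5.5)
The plan is to argue directly from \Cref{lem: adjacency powers walks}, which identifies $A^l_{vw}$ with the number of length-$l$ walks from $v$ to $w$, and then to transport walks across $\sigma$. Here $B_k(v)$ denotes the subgraph induced on the vertices within distance $k$ of $v$, rooted at $v$. The isomorphism $\sigma$ is a rooted isomorphism, so $\sigma(v)=u$, and it preserves both adjacency and non-adjacency among vertices of $B_k(v)$.

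The first step is a locality observation. Any walk of length $l\le k$ that starts at $v$ visits only vertices at distance at most $l\le k$ from $v$, since each step increases the distance from $v$ by at most one. Every edge it traverses therefore joins two vertices of $B_k(v)$, so it is an edge of the induced subgraph $B_k(v)$. Hence the length-$l$ walks in $G$ from $v$ to $w$ are exactly the length-$l$ walks in $B_k(v)$ from $v$ to $w$. The same holds in $H$ for walks from $u$.

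Next, $\sigma$ induces a bijection between walks. A walk $(v=x_0,x_1,\ldots,x_l=w)$ in $B_k(v)$ maps to $(\sigma(x_0),\ldots,\sigma(x_l))$, which is a walk in $B_k(u)$ from $u$ to $\sigma(w)$ because $\sigma$ preserves edges. The map $\sigma^{-1}$ supplies the inverse, so the two sets of walks have the same cardinality. Combining this with the locality step and \Cref{lem: adjacency powers walks} gives $A^l_{G,vw}=A^l_{H,u\sigma(w)}$ for all $w\in V_{B_k(v)}$ and $l\le k$.

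Two edge cases need care. If $w$ lies at distance greater than $l$ from $v$, then both counts are $0$. This is consistent with the argument, since $\sigma$ preserves distances from the root: the BFS layers inside a rooted isomorphic ball correspond, and distances from the root within $B_k(v)$ agree with distances in $G$. For $l=0$ the claim reduces to $\sigma(v)=u$.

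The only step I expect to need real care is the locality claim: one must check that the ball is taken as an induced subgraph, so that edges between two vertices both at distance exactly $k$ are included. Such edges can occur in walks of length up to $k$, for example a walk that reaches distance $k-1$, steps to a vertex at distance $k$, and then to another at distance $k$. I would resolve this by fixing the definition of $B_k$ as induced at the outset. With that definition the argument is a routine induction on $l$, using the recurrence $A^{l}_{vw}=\sum_{x} A^{l-1}_{vx}A_{xw}$ restricted to $x\in B_k(v)$.
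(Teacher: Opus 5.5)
Your argument is the paper's argument: walks of length at most $k$ from $v$ stay inside the $k$-hop neighborhood, $\sigma$ transports them bijectively to walks from $u$, and the walk-counting lemma turns these counts into entries of $A^l$. Your version just gives more detail.

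One claim in your write-up is false, and it affects how the lemma can be used. An edge joining two vertices both at distance exactly $k$ from $v$ can never appear in a walk of length $l\le k$ starting at $v$. In a walk $v=x_0,x_1,\ldots,x_l$, the $i$-th step leaves $x_{i-1}$, and $d(v,x_{i-1})\le i-1\le k-1$. Your example walk reaches distance $k-1$ and then takes two more steps, so its length is at least $k+1$.

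So you do not need to insist that $B_k(v)$ be an induced subgraph. In the appendix on biharmonic trees, the paper defines the $k$-hop neighborhood with edge set $E_k(v)=\{\{x,y\} : d(v,x)\le k-1,\ d(v,y)\le k\}$, which deliberately omits those edges. Your locality-plus-bijection argument proves the lemma verbatim under this weaker hypothesis, because every edge a short walk traverses lies in $E_k(v)$. Distances from the root are still preserved, since shortest paths from $v$ use only such edges.

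Restricting to induced balls would make the lemma too weak for its one application: comparing $C_n$ with two copies of $C_{n/2}$ for $k<n/4$. When $n/2$ is odd and $2k+1=n/2$ (for example $n=6$, $k=1$, or $n=10$, $k=2$), the two vertices at distance $k$ in $C_{n/2}$ are adjacent. The induced $k$-ball is then the whole cycle rather than a path of length $2k$. So the balls in the two graphs are not isomorphic in your sense, although they are in the paper's.
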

\begin{proof}
    Note that any length $l$ starting at $v$ must necessarily be contained in the $k$-hop neighborhood of $v$. Thus, for any $w\in B_k(v)$, $v$ and $w$ and $u$ and $\sigma(w)$ will be connected by the same number of length-$l$ walks as $B_k(v)$ and $B_k(u)$ are isomorphic. The lemma follows by \Cref{lem: adjacency powers walks}. 
\end{proof}

\begin{theorem}
    Let $n\geq 6$ be any even integer. There exist a pair of graphs $G$ and $H$ on $n$ vertices that are indistinguishable by $\Omega(n)$-Walk-WL but are distinguishable by 1-EP-WL.
\end{theorem}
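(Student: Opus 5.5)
We take $G = C_n$, the $n$-cycle, and $H = C_{n/2}\sqcup C_{n/2}$, two disjoint cycles on $n/2$ vertices each. Both are $2$-regular on $n$ vertices, and $n\geq 6$ ensures that $n/2\geq 3$, so $H$ is a genuine simple graph. We then carry out two separate steps: show that $k$-Walk-WL cannot tell $G$ and $H$ apart for some $k=\Theta(n)$, and show that $1$-EP-WL can.

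\emph{Indistinguishability by $k$-Walk-WL.} Fix $k = \lfloor (n/2-1)/2\rfloor$; this is $\Omega(n)$ since $n\geq 6$ gives $k\geq 1$. Then $2k+1\leq n/2$, so for every vertex $v$ of either graph the $k$-hop neighborhood $B_k(v)$ is a path on $2k+1$ vertices centered at $v$. Hence any $v\in V_G$ and $u\in V_H$ have isomorphic $k$-hop neighborhoods via a center-preserving isomorphism $\sigma$. By \Cref{lemma: neighborhoods walks}, $A^l_{G,vw} = A^l_{H,u\sigma(w)}$ for all $w\in B_k(v)$ and all $l\leq k$. For $w\notin B_k(v)$, there is no walk of length $\leq k$ from $v$ to $w$, so the RPE vector $(A^1,\dots,A^k)(v,w)$ is identically zero. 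Both graphs have $n$ vertices and $2k+1$ of them lie in the ball, so exactly $n-2k-1$ vertices receive the zero vector in each case.

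It follows that every vertex in either graph sees the same multiset of RPE values. A straightforward induction on $t$ then shows that all vertices of $G$ and $H$ share a single common color at every iteration. At $t=0$ all colors are $0$. If all colors agree at iteration $t-1$, then the iteration-$t$ color of any vertex is the multiset $\multiset{(c,\psi(u,v))}$ with a constant first coordinate $c$, and this multiset is the same for every vertex by the previous paragraph. The color multisets therefore agree for all $t$.

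\emph{Distinguishability by $1$-EP-WL.} The smallest Laplacian eigenvalue of both graphs is $0$. Since $G$ is connected, this eigenspace is spanned by $\mathbf 1$ and $\Pi_1 = \frac1n J$, so every entry of $\Pi_1$ equals $1/n$. In $H$ the eigenspace is two-dimensional, spanned by the component indicator vectors, so $\Pi_1$ is block diagonal with entries $2/n$ inside a component and $0$ across components. Consequently, after one iteration every vertex of $G$ sees $n$ copies of $(0,1/n)$, while every vertex of $H$ sees $n/2$ copies of $(0,2/n)$ and $n/2$ copies of $(0,0)$. The tests therefore separate $G$ and $H$.

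If instead ``smallest'' is meant to exclude the trivial eigenvalue, we use the first nonzero eigenvalue. By \Cref{lem:eig path}-style cycle spectra, this is $2-2\cos(2\pi/n)$ for $G$ and $2-2\cos(4\pi/n)$ for $H$. These differ, so the eigenvalue entry alone distinguishes the graphs.

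\emph{Main obstacle.} The only delicate point is the bookkeeping in the Walk-WL step. We must check that the radius-$k$ balls are paths in both graphs, which is exactly the role of the constraint $2k+1\leq n/2$. We must also check that vertices outside the ball contribute zero vectors in equal numbers, which holds because both graphs have $n$ vertices and balls of the same size $2k+1$.
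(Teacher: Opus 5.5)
Your proposal is correct and follows essentially the same route as the paper. The paper uses the same pair $C_n$ versus $C_{n/2}\sqcup C_{n/2}$. For Walk-WL it uses the same isomorphic-$k$-hop-neighborhood argument via the walk-count lemma, with $k<n/4$, which is the same range as your $2k+1\le n/2$. For 1-EP-WL it uses the same zero-eigenspace projection comparison ($\frac{1}{n}J$ versus block-diagonal with zero entries). Your explicit count of zero vectors outside the ball and your color induction simply make explicit steps the paper leaves implicit.
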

\begin{proof}
    The graphs $G$ and $H$ are the length-$n$ cycle and the disjoint union of 2 length-$\frac{n}{2}$ cycles. 
    \par 
    For any $k<\frac{n}{4}$, all nodes in $G$ and $H$ have the isomorphic $k$-hop neighborhood, namely a path of length $2k$. Therefore, by \Cref{lemma: neighborhoods walks}, all rows of the power of the adjacency matrices $A^{k}_{v:}$ have the same entries. Moreover, if we consider the concatenated tensors $(A,\ldots,A^k)$, then the ``row'' $(A,\ldots,A^k)_{:v:}$ will also have the same set of length-$k$ vectors. If follows that at each iteration, all nodes in $G$ and $H$ will have the same $k$-Walk-WL colors.
    \par 
    Conversely, $G$ and $H$ can be distinguished by one iteration of the 1-EP-WL test. The smallest Laplacian eigenvalue of $G$ is 0 with multiplicity 1 as $G$ is connected, and the smallest Laplacian eigenvalue of $H$ is 0 with multiplicty 2 as $H$ is disconnected. Moreover, the projection onto the zero eigenspace of $G$ is $\frac{1}{n}J$ where $J_n$ is the $n\times n$ all-ones matrix, while the projection onto the zero eigenspace of $H$ is $\frac{1}{n/2}
    \begin{bmatrix}
        J_{n/2} & 0 \\
        0 & J_{n/2}
    \end{bmatrix}$. 1-EP-WL can tell these matrices apart as $H$'s projection matrix has zero entries. 
\end{proof}

\section{\texorpdfstring{$k$}{k}-Harmonic Distances}

\subsection{Preliminaries: Sparse-$\boldsymbol{\psi}$-WL test}
\label{apx:k harmonic prelim}

In this section, we present the \textit{sparse-$\boldsymbol{\psi}$-WL test}, a modification of the 1-WL test that incorporates edge features. While the WL test provides an upper bound on the expressive power of MPNNs, the sparse-$\psi$-WL test upper bounds MPNNs \textit{that use edge features} decided by $\psi$. In the rest of this section, we present several results about the expressivity of this sparse-$\psi$-WL test when $\psi$ is a $k$-harmonic distance.
\par 
An \textit{\textbf{edge positional encoding}} is a function $\psi$ that assigns each graph $G$ a map $\psi_G : E_G \rightarrow \mathbb{R}^m$ such that, if $\sigma:E_G\to E_H$ is the map on edges induced by a graph isomorphism, then $\psi_G = \psi_H\circ\sigma$. In the current work, we 
usually take $\psi$ to be a $k$-harmonic distance, i.e., $\psi((u,v)) = H^k(u,v) \in \mathbb{R}$.
\par 
Like the GD-WL test, the \textit{\textbf{Sparse-$\boldsymbol{\psi}$-WL-test}} iteratively assigns labels to the vertices of a graph. The labels for the $t$\textsuperscript{th} iteration are computed using the following formula:\\
\begin{equation*} \label{eqn:Sparse-wl}
    \chi_{\psi}^{(t)}(v) = \hash\left(  \chi_{\psi}^{(t-1)}(v),\rescaledmultiset{\left(\psi((u,v)),\,\chi_{\psi}^{(t-1)}(u)\right) : (u,v) \in E}  \right)
\end{equation*}

\begin{figure*}[t]
    \centering
    \includegraphics[width=0.6\linewidth]{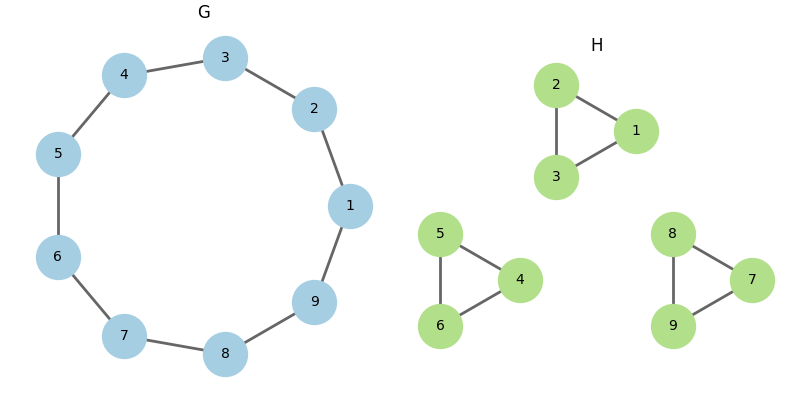}
    \caption{The graphs $G$ (the length 9 cycle graph) and $H$ (3 copies of the length 3 cycle graph) are indistinguishable by the WL test, but distinguishable by any Sparse-$k$-Harmonic-WL test}
    \label{fig:ring}
\end{figure*}

\subsection{Proof of \Cref{thm:k_harm_g_1wl}}
\label{apx:k_harm_g_1wl}
\kharmgwl*
\begin{proof}
Let $C_n$ denote the ring graph on $n$ nodes. In particular, let $C_9$ be the ring graph on 9 nodes, and let $C_{3,3}=\cup_{i=1}^{3} C_{3}$ be the graph that is the union of 3 ring graphs on 3 nodes; see  \Cref{fig:ring}. Observe that $C_9$ and $C_{3,3}$ are indistinguishable by the WL test as they are both 2-regular graphs.

Now, we will show that $C_9$ and $C_{3,3}$ are distinguishable by one iteration of the Sparse-$k$-Harmonic-WL for any $k\geq 1$.

First, for any edges $e,e'\in C_9$, $H^{k}_{C_9}(e) = H^{k'}_{C_9}(e')$; likewise for any two edge in $C_{3,3}$. This is because both graphs are edge-transitive. Second, for any edge $e\in E_{C_{3,3}}$ and any edge $e'\in E_{C_3}$, $H^{k}_{C_{3,3}}(e) = H^{k'}_{C_3}(e')$. This is because for a graph $G$ with connected components $G_1,\ldots,G_k$, 
$$
L_G^{k+} = \begin{bmatrix}
    L_{G_1}^{k+} & 0 &\ldots  \\
    0 &  L_{G_2}^{k+} &  \\
    \vdots & & \ddots
\end{bmatrix}
$$
This follows from the fact that the eigenvectors of a disconnected graph are the eigenvectors of each of its connected components with zero padding to be the correct dimensionality; see~\citep[Lemma 3.1.1]{spielman2025sagt}.  

As $C_9$ is a regular graph and all edges have the same $k$-harmonic distance, then all nodes in $C_9$ have the same Sparse-$k$-Harmonic-WL color; likewise for $C_{3,3}$. For both graphs, these colors are $\wlcolor{}{1}(v) = (1, \multiset{(1,H^{k}(e), (1,H^{k}(e)} )$, where $H^{k}(e)$ denotes the unqiue $k$-harmonic distance in the respective graph. Therefore, we only need to show that $H^{k}_{C_9}(e) \neq H^{k}_{C_{3}}(e')$ for $e\in E_{C_9}$ and $e'\in C_{3}$. 

We first derive an exact formula for the $k$-harmonic distances of edges in these graphs.

\begin{lemma}
    Let $C_{2n+1}$ be the cycle graph on $2n+1$ vertices. Let $k>0$. Then the $k$-harmonic distance of any edge in $C_{2n+1}$ is 
    $ H_{C_{2n+1}}^k(e) = \frac{2}{2n+1}\sum_{t=1}^n \left(2 - 2\cos\left(\frac{2 \pi t}{2n+1} \right)\right)^{-(k-1)}$    
\end{lemma}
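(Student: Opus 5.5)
The plan is to diagonalize the Laplacian of the cycle in the Fourier basis and evaluate the quadratic form $(1_u-1_v)^T (L^+)^k (1_u-1_v)$ directly. The formula in the statement has no square root, so it is the \emph{squared} $k$-harmonic distance $H^k(e)^2$. Since $x\mapsto\sqrt{x}$ is injective on $\R^+$, this is all that is needed for comparing edge values in \Cref{thm:k_harm_g_1wl}. I will prove the identity for the quantity under the square root.

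Write $N=2n+1$, label the vertices of $C_N$ by $\mathbb{Z}/N\mathbb{Z}$, and set $\omega=e^{2\pi i/N}$. For $j=0,\ldots,N-1$, the vectors $x_j(w)=\omega^{jw}/\sqrt{N}$ form an orthonormal (complex) eigenbasis of $L_{C_N}$. Indeed,
$$(Lx_j)(w)=2x_j(w)-x_j(w-1)-x_j(w+1)=(2-\omega^j-\omega^{-j})x_j(w),$$
so $\lambda_j=2-2\cos(2\pi j/N)$. These agree with the eigenvalues obtained from \Cref{lem:eig path}-type computations. Only $\lambda_0=0$ vanishes, since $N$ is odd and $j\neq 0$ gives $\omega^j\neq 1$. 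Hence
$$(L^+)^k=\sum_{j=1}^{N-1}\lambda_j^{-k}x_jx_j^{*},$$
and this expansion is basis-independent, so using complex eigenvectors is legitimate; equivalently, one can pair $j$ with $N-j$ to get real cosine/sine eigenvectors.

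For an edge $e=(w,w+1)$ (all edges are equivalent by edge-transitivity, as already noted), I compute
$$(1_w-1_{w+1})^T(L^+)^k(1_w-1_{w+1})=\sum_{j=1}^{N-1}\lambda_j^{-k}\,|x_j(w)-x_j(w+1)|^2 .$$
Each term simplifies because
$$|x_j(w)-x_j(w+1)|^2=\frac{1}{N}|1-\omega^j|^2=\frac{1}{N}(2-2\cos(2\pi j/N))=\frac{\lambda_j}{N},$$
so the sum collapses to $\frac1N\sum_{j=1}^{N-1}\lambda_j^{-(k-1)}$. Finally, $\lambda_j=\lambda_{N-j}$, and for odd $N$ the indices $1,\ldots,2n$ split exactly into the pairs $\{t,N-t\}$ for $t=1,\ldots,n$. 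This gives $\frac{2}{2n+1}\sum_{t=1}^n\lambda_t^{-(k-1)}$, as claimed.

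The main obstacle is bookkeeping rather than substance. First, I must justify applying the pseudoinverse power spectrally with a complex basis, or else redo the computation with real eigenvectors. Second, I must be careful that the statement is for the squared distance. Third, I must use oddness of $N$ so that there is no unpaired eigenvalue $\lambda_{N/2}=4$ in the symmetrization step.
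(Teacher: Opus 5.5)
Your proof is correct and takes essentially the same route as the paper. The paper diagonalizes $L_{C_{2n+1}}$ with the real eigenvectors $\sqrt{2/(2n+1)}\cos(2\pi t i/(2n+1))$ and $\sqrt{2/(2n+1)}\sin(2\pi t i/(2n+1))$, then uses $\sin^2+\cos^2=1$ and the cosine difference formula to show each eigenspace contributes $\frac{2}{2n+1}\lambda_t\cdot\lambda_t^{-k}$; this matches your identity $|1-\omega^j|^2=\lambda_j$ after pairing $j$ with $N-j$. Your remark about the square root also agrees with the paper, whose computation likewise evaluates $(1_i-1_{i+1})^T(L^+)^k(1_i-1_{i+1})$, i.e.\ the squared $k$-harmonic distance.
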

\begin{proof}
The analytical form of the eigenvectors and eigenvalues of cycle graphs are well-established~\citep[p.~49]{spielman2025sagt}. In addition to the all-ones vector with eigenvalue 0 (which is an eigenpair of the Laplacian of all graphs), for all $1\leq t\leq n$, there are two distinct eigenvalues corresponding $x_t$ and $y_t$ corresponding to the eigenvalue $\lambda_t$. Let the vertices of $C_{2n+1}$ as the integers $\{0,...,2n\}$. The eigenvectors are 
\begin{align*}
\mathbf{x}_t(i) = \sqrt{\frac{2}{2n+1}} \cdot \cos\left( \frac{2\pi t i}{2n+1} \right), \quad
\mathbf{y}_t(i) = \sqrt{\frac{2}{2n+1}} \cdot \sin\left( \frac{2\pi t i}{2n+1} \right)
\end{align*}
with eigenvalue
\begin{align*}
\lambda_t  = 2 - 2 \cos\left(\frac{2 \pi t}{2n+1}\right).
\end{align*}

Further, the $k$\textsuperscript{th} power of the pseudoinverse of $G$ is 
\begin{align*}
(L^+)^k = \sum_{t=1}^n (\lambda_t)^{-k} \left(x_t x_t^T  + y_ty_t^T\right)
\end{align*}
Thus, the $k$-harmonic distance of an edge in $C_{2n+1}$ is
\begin{align*}
\hspace{0pt}
H_{C_{2n+1}}^k(i,i+1) =& (1_i-1_{i+1})^{T} L^{+k}(1_i-1_{i+1}) \\
=& \sum_{t=1}^n \left(2 - 2 \cos \left(\frac{2 \pi t}{2n+1} \right)\right)^{-k}  \frac{2}{2n+1}\bigg[ 
\cos^2\left(\frac{2\pi t i}{2n+1}\right) + 
\sin^2 \left(\frac{2\pi t i}{2n+1}\right) \\
&\quad - 2\sin\left(\frac{2\pi t i}{2n+1}\right)\sin\left(\frac{2\pi t (i + 1)}{2n+1}\right) - 2\cos\left(\frac{2\pi t i}{2n+1}\right) \cos \left( \frac{2 \pi t (i + 1)}{2n+1} \right) 
\\
&\quad 
+\sin^2\left(\frac{2 \pi t (i+1)}{2n+1} \right) + \cos^2 \left( \frac{2 \pi t (i+1)}{2n+1} \right) \bigg]
\end{align*}
Here $0\leq i<2n$. The choice of $i$ is arbitrary as all edges in $C_{2n+1}$ have the same $k$-harmonic distance. By applying the common trigonometry identities $\sin^2(x) + \cos^2(x) = 1$ and $\cos(x-y) = \cos(x)\cos(y) + \sin(x)\sin(y)$, we arrive at 

\begin{align*}
    H_{C_{2n+1}}^k(i,i+1) &= \frac{2}{2n+1} \sum_{t=1}^n  \left(2 - 2 \cos \left( \frac{2 \pi t}{2n+1} \right)\right)^{-k} \left[ 2 - 2 \cos \left( \frac{2 \pi t}{2n+1} \right) \right] \\
    &= \frac{2}{2n+1} \sum_{t=1}^n \left(2 - 2 \cos \left( \frac{2 \pi t}{2n+1} \right)\right)^{-(k-1)} \hfill\qedhere
\end{align*}
\end{proof}
Observe that for the case of the disconnected 3-cycle graph $(n=1)$, the $k$-harmonic of an edge is
\begin{align*}
    H^k_{C_3}(e) = \frac{2}{3} \left(2 - 2\cos \left( \frac{2\pi}{3}\right) \right)^{-(k-1)}
\end{align*}
and for the case of the 9-cycle graph $(n=4)$ we have 
\begin{align*}
    H^k_{C_9}(e) =& \frac{2}{9} \left(2 - 2\cos\left(\frac{2\pi}{9}\right)\right)^{-(k-1)} + \frac{2}{9}\left(2 - 2\cos\left(\frac{4\pi}{9}\right)\right)^{-(k-1)} \\
    & + \frac{2}{9}\left(2 - 2\cos\left(\frac{2\pi}{3}\right)\right)^{-(k-1)} + \frac{2}{9}\left(2 - 2\cos\left(\frac{8\pi}{9}\right)\right)^{-k-1} 
\end{align*}

It is easy to see that
\begin{align*}
    H^k_{C_9}(e) = \frac{1}{3}H^k_{C_3}(e) + \frac{2}{9}\left[\left(2 - 2\cos\left(\frac{2\pi}{9}\right)\right)^{-k-1} + \left(2 - 2\cos\left(\frac{4\pi}{9}\right)\right)^{-k-1} + \left(2 - 2\cos\left(\frac{8\pi}{9}\right)\right)^{-k-1}\right]
\end{align*}
or more simply
\begin{align*}
H^k_{C_9}(e) = \frac{1}{3}H^k_{C_3}(e) + f(k)
\end{align*}

so we want to show that $f(k) > \frac{2}{3}H^k_{C_3}$. Because $\cos(\pi x)$ is strictly decreasing on the interval $x \in [0,1]$ this implies that

\begin{align*}
    0<2 - 2\cos\left(\frac{2\pi}{9}\right) < 2-2\cos\left(\frac{4\pi}{9}\right) < 2-2\cos\left(\frac{2\pi}{3}\right) < 2 - 2\cos\left(\frac{8\pi}{9}\right)
\end{align*}

Accordingly, for $k\geq 1$,

\begin{align*}
& \frac{2}{9}\left(2 - 2\cos\left(\frac{2\pi}{9}\right)\right)^{-(k-1)} \geq \frac{2}{9}\left(2-2\cos\left(\frac{4\pi}{9}\right)\right)^{-(k-1)} \\
\geq& \frac{2}{9}\left(2-2\cos\left(\frac{2\pi}{3}\right)\right)^{-(k-1)} = \frac{1}{3}H^k_{C_3}(e)\\
\geq& \frac{2}{9}\left(2 - 2\cos\left(\frac{8\pi}{9}\right)\right)^{-(k-1)} >0
\end{align*}
The first two terms are larger than $\frac{1}{3}H^k_{C_3}(e)$, with the last term being strictly positive regardless. This implies that 

\begin{align*}
    f(k) > \frac{2}{3}H^k_{C_3}
\end{align*}

or that $H^k_{C_9}(e)$ and $H^k_{C_3}(e)$ are never equal for any value of $k\geq 1$. This implies that any value of $k$ used for the sparse $k$-harmonic test will be able to successfully distinguish any two edges in $C_{3,3}$ and $C_9$. Therefore, a single iteration of Sparse-$k$-Harmonic-WL will result in different multisets for $G$ and $H$ as the edge features will be aggregated to their incident nodes. Thus, as we have found a pair of graphs that Sparse-$k$-Harmonic-WL can distinguish that $1$-WL cannot, Sparse-$k$-Harmonic-WL $>$ WL.
\end{proof}

\begin{corollary}
    Sparse-$k$-Harmonic-WL can distinguish any two odd cycle ring graphs of the form $C_n$ and $\frac{n}{m}$ copies of $C_m$ where $m | n$.
\end{corollary}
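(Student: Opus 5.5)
The plan is to reduce the corollary to a scalar inequality between $k$-harmonic edge distances, as in the proof of \Cref{thm:k_harm_g_1wl}. Let $n$ and $m$ be odd, with $m\mid n$ and $m<n$. Write $G=C_n$ and $H$ for the disjoint union of $\frac{n}{m}$ copies of $C_m$.

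\textbf{Reduction to one scalar.} Both graphs are $2$-regular and edge-transitive, so within each graph every edge has the same $k$-harmonic distance. By the block-diagonal form of $L^{k+}$ for disconnected graphs, every edge of $H$ has distance $H^k_{C_m}(e)$. One iteration of Sparse-$k$-Harmonic-WL therefore gives every vertex of $G$ the color $(1,\multiset{(H^k_{C_n}(e),1),(H^k_{C_n}(e),1)})$ and every vertex of $H$ the analogous color with $H^k_{C_m}(e)$. So it suffices to show $H^k_{C_n}(e)\neq H^k_{C_m}(e)$.

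\textbf{Closed form.} The lemma in the proof of \Cref{thm:k_harm_g_1wl} can be rewritten, using the symmetry $t\mapsto n-t$, as
\[
H^k_{C_n}(e)=F_k(n):=\frac{1}{n}\sum_{t=1}^{n-1} g_k\!\left(\tfrac{2\pi t}{n}\right),\qquad g_k(\theta)=(2-2\cos\theta)^{-(k-1)}.
\]
For $k=1$ this is $F_1(n)=\frac{n-1}{n}$, which is strictly increasing in $n$, so $F_1(n)\neq F_1(m)$ immediately. For $k=2$, the classical identity $\sum_{t=1}^{n-1}\csc^2(\pi t/n)=\frac{n^2-1}{3}$ gives $F_2(n)=\frac{n^2-1}{12n}$, again strictly increasing. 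This suggests proving that $F_k(n)$ is strictly increasing along the chain $m\mid n$ for every $k\geq1$.

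\textbf{Coset decomposition.} Since $m\mid n$, the indices $t$ that are multiples of $s=n/m$ reproduce exactly the nonzero eigenvalues of $C_m$. This gives
\[
F_k(n)=\frac{m}{n}F_k(m)+\frac{1}{n}\sum_{s\nmid t} g_k\!\left(\tfrac{2\pi t}{n}\right),
\]
which mirrors the split $H^k_{C_9}=\tfrac13 H^k_{C_3}+f(k)$ used in \Cref{thm:k_harm_g_1wl}. It then suffices to show that the average of $g_k$ over the $n-m$ non-multiples is strictly larger than $F_k(m)\cdot\frac{m}{m-1}$, the average over the $m-1$ nonzero multiples.

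\textbf{Main obstacle.} This averaging inequality is the step I expect to be hardest. I would organise the remaining indices into the $s-1$ nonzero residue classes $t\equiv r \pmod s$. Each class is a rotated copy $\{\theta_r+2\pi j/m\}_{j}$ of the $m$ roots used for $C_m$, with offset $\theta_r\in(0,2\pi/m)$. For integer $k-1$ one can evaluate such shifted sums via $\sum_{j=0}^{m-1}\csc^2(x+\pi j/m)=m^2\csc^2(mx)$ and its derivatives, and compare them term by term with the unshifted sum. For general real $k\geq1$, I would instead pair each multiple angle with the non-multiple angles adjacent to it on the side nearer $0$ or $2\pi$. On $(0,\pi]$ the function $g_k$ is decreasing, and it is symmetric about $\pi$, so each multiple is dominated by at least one of its neighbours. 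A counting argument should then show the non-multiples strictly win on average, with strictness supplied by the smallest angle $2\pi/n<2\pi/m$, where $g_k$ is largest.

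If the pairing bookkeeping fails for small $s$, the fallback is the case-by-case monotonicity comparison of \Cref{thm:k_harm_g_1wl} (three terms dominating). That comparison generalises directly whenever the non-multiple terms include at least $s-1$ terms per multiple that dominate it.
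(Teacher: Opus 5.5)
\textbf{There is a genuine gap: you aim the averaging step at an inequality stronger than needed, and your sketch does not prove it.} Your reduction to a single scalar, the closed form $F_k(n)=\frac{1}{n}\sum_{t=1}^{n-1}g_k(2\pi t/n)$, and the coset decomposition are all correct. The problem is the ``it suffices'' step.

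Let $A$ be the average of $g_k$ over the $n-m$ non-multiples. Your decomposition gives $F_k(n)-F_k(m)=\frac{n-m}{n}\bigl(A-F_k(m)\bigr)$. So what you need is $A>F_k(m)$, not $A>\frac{m}{m-1}F_k(m)$.

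The inequality you set yourself (non-multiples beat multiples on average) is strictly stronger. At $k=1$ both averages equal $1$, and for $k$ near $1$ the two sides differ only by $O(k-1)$. Your pairing sketch does not deliver it. Pair each multiple $2\pi j/m$, $1\le j\le\frac{m-1}{2}$, with the $s-1$ non-multiples just below it, and symmetrically pair each $j\ge\frac{m+1}{2}$ with those just above it. This gives
\[
\sum_{s\nmid t}g_k(2\pi t/n)\ \ge\ (s-1)\sum_{j=1}^{m-1}g_k(2\pi j/m)+R,
\]
where $R$ is the sum over the $s-1$ leftover non-multiples with angles in $\bigl(\frac{(m-1)\pi}{m},\frac{(m+1)\pi}{m}\bigr)$. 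Your target needs an extra $(s-1)$ times the multiples' average on top of this. But for $k>1$ every leftover term is smaller than every multiple term, so the counting falls short unless you quantify the domination slack, which you do not. As written, your argument is complete only for $k=1$ and $k=2$, via the closed forms.

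\textbf{The fix is to aim at the correct target, which is exactly what your pairing proves.} The condition $A>F_k(m)$ is equivalent to $\sum_{s\nmid t}g_k(2\pi t/n)>(s-1)\sum_{j=1}^{m-1}g_k(2\pi j/m)$. This follows from the displayed bound because $R>0$ (note $s\ge 3$). It works uniformly for every $k\ge 1$, including $k=1$ where the domination is an equality. The closed forms and the $\csc^2$ identities then become unnecessary.

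This is the paper's argument. When $m\mid n$, the terms of the $C_m$ sum sit inside the $C_n$ sum. The $C_9$ versus $C_3$ comparison (two terms each dominate $\frac{1}{3}H^k_{C_3}(e)$, and the remaining one is positive) then generalizes with $s-1$ dominating terms per multiple.

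Your ``fallback'' is in fact this proof, and its hypothesis (at least $s-1$ dominating non-multiples per multiple) always holds. You offered it as a conditional route to the stronger inequality, without recognizing that it already finishes the job.
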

\begin{proof}
Observe that the logic in the previous proof follows similarly when $n$ is varied up to 

\begin{align*}
    \sum_{t=1}^n \frac{2}{n}\left(2-2\cos \left( \frac{2\pi t}{2n+ 1}  \right)\right)^{-k-1}
\end{align*}

for the rest of the proof to hold, we need to deduce that $H^k_{C_m}(e) \subset H^k_{C_n}(e)$. For this to be true for two summations of the form $\cos(2 \pi t/2n+1)$ it must be the case that $m | n$. While it is true that the two summations will share terms when $m$ and $n$ are not coprime, for $H^k_{C_m}(e) \subset H^k_{C_n}(e)$ it must be that $m | n$.

From here, the rest of the proof remains true.
\end{proof}

\subsection{Proof of \Cref{thm:kharm_l_3wl}}
\label{apx:kharm_l_3wl}
We use the following lemma about the number of roots of an exponential function.  A stronger variant of it is proved by ~\citet[Theorem 3.1]{Jameson2006CountingZO}.
\begin{lemma}
\label{lem:buunded_zeros_of_exponentials}
    Let $f(x) = \sum_{i = 1}^{t}{a_ib_i^x}$, with nonzero $a_i$s and positive $b_i$s.  Then, $f(x) = 0$ for at most $t$ values of $x$.
\end{lemma}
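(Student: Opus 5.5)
We would prove the bound by induction on $t$, using Rolle's theorem. We will in fact get at most $t-1$ zeros, which is stronger than the stated bound of $t$.

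\emph{Reducing to distinct bases.} First we merge terms with equal bases: all terms with the same $b_i$ combine into a single term $(\sum a_i) b^x$. If every merged coefficient vanishes, then $f \equiv 0$. So the statement must be read as applying to a function that is not identically zero. Equivalently, we assume the $b_i$ are pairwise distinct with nonzero $a_i$; this is the form in which the lemma is used later. After merging, the number of terms does not increase, so it suffices to treat pairwise distinct $b_1,\ldots,b_t>0$ with all $a_i\neq 0$.

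\emph{Base case $t=1$.} Here $f(x)=a_1b_1^x$ with $a_1\neq 0$ and $b_1>0$, so $f$ never vanishes. This gives $0\le t-1$ zeros.

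\emph{Inductive step.} Since $b_1^x>0$, the function $g(x)=f(x)/b_1^x = a_1+\sum_{i=2}^t a_i c_i^x$, with $c_i=b_i/b_1>0$, has exactly the same zeros as $f$. Differentiating gives
\[
g'(x)=\sum_{i=2}^t a_i\ln(c_i)\,c_i^x .
\]
This is again an exponential sum, now with $t-1$ terms. Its coefficients $a_i\ln c_i$ are nonzero because $c_i\neq 1$ (the bases are distinct). Its bases $c_i$ are positive and pairwise distinct. By the induction hypothesis, $g'$ has at most $t-2$ zeros. If $g$ had $t$ or more distinct zeros, Rolle's theorem would place a zero of $g'$ strictly between each consecutive pair, giving at least $t-1$ zeros of $g'$, a contradiction. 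Hence $f$ has at most $t-1\le t$ zeros.

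\emph{Main obstacle.} The only delicate point is the degenerate case in the merging step. Coinciding bases with cancelling coefficients give the identically zero function, which has infinitely many zeros. So in the downstream application (comparing $k$-harmonic distances written as $\sum_i c_i\lambda_i^{-k}$), we would need to check that the difference of two such expressions is not identically zero as a function of $k$ before invoking the lemma.
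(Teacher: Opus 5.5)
Your proof is correct. After merging equal bases it gives the sharper bound of $t-1$ real zeros.

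The paper does not prove the lemma itself. It defers to Jameson's Theorem 3.1, a Descartes-type rule of signs for generalized polynomials. Writing $y=e^x$, one has $f(x)=\sum_i a_i y^{\ln b_i}$. The number of positive zeros $y$, and hence of real zeros $x$, is at most the number of sign changes in the coefficients ordered by base, which is at most $t-1$. Your induction (divide by $b_1^x$, differentiate, apply Rolle) is the usual argument behind such sign rules, with the sign bookkeeping removed. You give up the sign-change refinement, which the paper never uses, and get a self-contained few-line argument in return.

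Your version also brings out two hypotheses the paper's statement leaves implicit:
\begin{enumerate}
\item The bases must be pairwise distinct, or at least $f\not\equiv 0$. The example $2^x-2^x$ shows the statement fails otherwise.
\item The variable $x$ must be real. Over $\mathbb{C}$, $2^x-1$ has infinitely many zeros, and Rolle's theorem is exactly where realness enters your proof.
\end{enumerate}

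Your closing caution is warranted. In \Cref{lem:EP-WL_implies_laplacian}, for example, $\mathcal{P}(u,v)\neq\mathcal{P}(x,y)$ does not by itself make $L^k_G(u,v)-L^k_H(x,y)$ a nonzero exponential sum. An eigenvalue present in only one graph can carry a zero projection entry, and then it contributes nothing to the difference.
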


\kharmlthreewl*

\begin{proof}
\renewcommand{\P}{\mathcal{P}}
    
We rely on the important results from \Citep{zhang2024on}, which proves that 3-WL upper-bounds the EP-WL. We will thus show that Sparse-$k$-Harmonic-WL is upper bounded by EP-WL. The proof that $k$-Harmonic-WL is upper bounded by EP-WL is analogous. Recall that EP-WL is defined as:

\begin{equation}
    \wlcolor{\P}{t+1}(v) = \left(  \wlcolor{\P}{t}(v), \multiset{ \wlcolor{\P}{t}(u),\, \P(u,v) : u \in V } \right)
\end{equation}

where $\mathcal{P}^L(u,v)$ is the eigenspace projection invariant associated with graph Laplacian $L$. Specifically, the Laplacian can be defined as
\begin{equation}
    L = \sum_{i \in m} \lambda_iP_i
\end{equation}
where $\lambda_i$ are the distinct eigenvalues and $P_i$ are the projection matrices 
for $m$ unique eigenvalues $\lambda_i$. The eigenspace projection invariant is the multiset 
$$
    \mathcal{P}(u,v) = \multiset{\left(\lambda_1,P_1(u,v)\right), \dots, \left(\lambda_m,P_m(u,v)\right)}.
$$

The outline for the rest of the proof is as follows. We aim to upper bound Sparse-$k$-Harmonic-WL by EP-WL. In order to prove our upper bound, we need to prove that both 1.) EP-WL can determine the $k$-harmonic distance of a pair of nodes and 2.) EP-WL can successfully recover which pairs of nodes are connected by an edge. Part 1) is implied by \Cref{lem:EP-WL_implies_laplacian,lem:EP-WL_implies_laplacian_diagonal} (and proved in the proof of \Cref{lem:EP-WL_implies_k_harmonic}) and part 2) is \Cref{lem:EP-WL_implies_edges}.
\par 
We begin with a few observations about EP-WL.
\begin{lemma}
\label{lem:EP-WL_implies_laplacian}
     Let $G$ and $H$ be graphs. Let $u,v\in V_G$ and $x,y\in V_H$. Then $\P(u,v) = \P(x,y)$ if and only if $L^k_G(u,v) = L^k_H(x,y)$ for all $k\in\mathbb{R}$
\end{lemma}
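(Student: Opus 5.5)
We interpret $L^k$ for real $k$ spectrally, as the paper already does when it writes $(L^+)^k$:
\[
L^k=\sum_{i:\lambda_i\neq 0}\lambda_i^k P_i \quad (k\neq 0),
\]
with the convention $L^0=I=\sum_i P_i$. Under this convention, $k=1$ gives $L$ and $k=-1$ gives $L^+$. Both directions then reduce to comparing exponential sums in $k$.

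\emph{Forward direction.} If $\mathcal{P}(u,v)=\mathcal{P}(x,y)$ as multisets, then for every $k$ the value $L^k_G(u,v)=\sum_i \lambda_i^k P_i(u,v)$ is a fixed function of that multiset. Hence it equals $L^k_H(x,y)$.

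\emph{Backward direction.}
\begin{enumerate}
\item Suppose $L^k_G(u,v)=L^k_H(x,y)$ for all $k\in\mathbb{R}$. Form
\[
f(k)=\sum_{b>0} c_b\, b^k,
\]
where $b$ ranges over the union of the nonzero eigenvalues of $G$ and $H$. Set $c_b=P^G_b(u,v)-P^H_b(x,y)$, where a projection is taken to be the zero matrix if $b$ is not an eigenvalue of that graph.
\item By hypothesis $f\equiv 0$ on $\mathbb{R}$. By \Cref{lem:buunded_zeros_of_exponentials}, an exponential sum with $t$ distinct positive bases and nonzero coefficients has at most $t$ zeros. Since $f$ has infinitely many zeros, every $c_b$ must vanish. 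So the projection entries agree for every nonzero eigenvalue.
\item For the zero eigenvalue, use $k=0$. The hypothesis $I_G(u,v)=I_H(x,y)$, together with $P_0=I-\sum_{\lambda_i\neq 0}P_i$, gives $P^G_0(u,v)=P^H_0(x,y)$. Here $0$ is always an eigenvalue of both Laplacians.
\end{enumerate}

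\emph{Main obstacle.} The step I expect to be delicate is that the argument above only identifies the pairs $(\lambda_i,P_i(\cdot,\cdot))$ whose entry is nonzero. An eigenvalue $\lambda$ of $G$ with $P_\lambda(u,v)=0$ that is not an eigenvalue of $H$ is invisible to every $L^k$ entry. I would handle this in one of two ways:
\begin{itemize}
\item read $\mathcal{P}(u,v)$ as the multiset of pairs with nonzero projection entry, which carries the same information for the purposes of the WL refinement; or
\item recover the full spectrum from diagonal entries, since each eigenvalue has $P_\lambda(w,w)>0$ for some $w$. This is the content of the companion \Cref{lem:EP-WL_implies_laplacian_diagonal}, which I would invoke here so that the two spectra agree and the multisets then coincide entrywise.
\end{itemize}
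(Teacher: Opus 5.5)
\textbf{The lemma is false as stated, so no proof can close your flagged obstacle.} Your forward direction is correct. So is your backward argument for the nonzero eigenvalues: an exponential sum in $k$ with distinct positive bases, killed by \Cref{lem:buunded_zeros_of_exponentials}, with $k=0$ then used for the kernel. This is the paper's route, done more carefully: the paper puts $\lambda=0$ into the exponential sum, where $0^k$ is undefined for $k<0$ and the zero-counting lemma (positive bases) does not apply. One small fix: under your convention $f(k)$ equals $L^k_G(u,v)-L^k_H(x,y)$ only for $k\neq 0$, so $f$ vanishes on $\mathbb{R}\setminus\{0\}$, not on all of $\mathbb{R}$. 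That is still infinitely many zeros, so your conclusion stands.

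The obstacle you flag is real. For $\mathcal{P}$ as defined in the paper (one pair for each of the $l$ distinct eigenvalues), the backward direction fails. Take $G=K_{1,3}$ with center $c$ and $H=K_4$ with any vertex $x$.
\begin{itemize}
\item $G$ has Laplacian spectrum $0,1,1,4$. The $1$-eigenvectors vanish at $c$, $\Pi_0(c,c)=\tfrac14$, and the $4$-eigenvector $(3,-1,-1,-1)/\sqrt{12}$ gives $\Pi_4(c,c)=\tfrac34$.
\item In $K_4$, $\Pi_0(x,x)=\tfrac14$ and $\Pi_4(x,x)=\tfrac34$.
\item Hence $L^k_G(c,c)=\tfrac34\,4^k=L^k_H(x,x)$ for every $k\neq 0$, and $I(c,c)=I(x,x)$.
\item Yet $\mathcal{P}_G(c,c)=\{(0,\tfrac14),(1,0),(4,\tfrac34)\}\neq\{(0,\tfrac14),(4,\tfrac34)\}=\mathcal{P}_H(x,x)$.
\end{itemize}
The paper's proof has exactly this gap. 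It asserts the difference is not the zero exponential sum whenever some $\lambda_{i,G}\neq\lambda_{i,H}$, which fails when that eigenvalue's projection entry is $0$.

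\textbf{Your two remedies.} Your second remedy cannot work, and the example is not the only reason. The hypothesis concerns only the single pair $(u,v)$ versus $(x,y)$, so diagonal entries $\Pi_\lambda(w,w)$ at other vertices are not available to you. Also, \Cref{lem:EP-WL_implies_laplacian_diagonal} runs in the opposite direction (from equal EP-WL colors to equal $L^k$ entries) and is itself proved from this lemma, so invoking it would be circular.

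Your first remedy is the right one, but present it as an amended statement, not as a reading of the given one. The lemma becomes true, and your argument proves it completely, if either of the following holds:
\begin{itemize}
\item $\mathcal{P}(u,v)$ records only the pairs with $\Pi_\lambda(u,v)\neq 0$. Equivalently, it records the function $\lambda\mapsto\Pi_\lambda(u,v)$ on $[0,\infty)$, taken to be zero off the spectrum.
\item One additionally assumes $G$ and $H$ have the same set of distinct Laplacian eigenvalues.
\end{itemize}
Your claim that this loses nothing for the WL test is plausible at the graph level. Every $\Pi_\lambda$ has a positive diagonal entry somewhere, and GD-WL aggregates over all vertices, so the full spectrum is visible to every vertex after two rounds. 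Finally, the downstream uses in the proof of \Cref{thm:kharm_l_3wl} need only the forward direction, which is true as stated.
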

\begin{proof}[Proof of \Cref{lem:EP-WL_implies_laplacian}]
    If $\P(v,v) = \P(u,w)$, then for any $k$,
    $$ 
        L^k_G(u,v) = \sum_{i=1}^{m}\lambda_{G,i}^k P_{i,G}(u,v) = \sum_{i=1}^{m}\lambda_{H,i}^k P_{i,H}(x,y) = L^k_H(x,y).
    $$
    Now assume $\P(v,v) \neq \P(u,w)$. Consider the polynomial 
    $$
        L^k_G(u,v) - L^k_H(x,y) = \sum_{i=1}^{m}\lambda_{G,i}^k P_{G,i}(u,v) - \sum_{i=1}^{m}\lambda_{H,i}^k P_{H,i}(x,y)
    $$
    As $\P(v,v) \neq \P(u,w)$, then there is some $i$ such that $\lambda_{i,G}\neq \lambda_{i,H}$ or $P_{G,i}(u,v)\neq P_{H,i}(x,y)$. In either case, this polynomial is not the zero polynomial. Thus, by \Cref{lem:buunded_zeros_of_exponentials}, there must be some $k$ such that $L^k_G(u,v) - L^k_H(x,y)\neq 0$, and so $L^k_G(u,v) \neq L^k_H(x,y)$.
\end{proof}
\begin{corollary}
\label{lem:EP-WL_implies_edges}
    Let $G$ and $H$ be graphs. Let $u,v\in V_G$ and $x,y\in V_H$. If $\P(u,v) = \P(x,y)$, then $(u,v)\in E_G$ if and only if $(x,y)\in E_H$
\end{corollary}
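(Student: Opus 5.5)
We will deduce the corollary from \Cref{lem:EP-WL_implies_laplacian} applied with exponent $k=1$. The key observation is that adjacency can be read off a single entry of the Laplacian. Since $L = D - A$ and $G$ is unweighted with no self-loops, for $u\neq v$ we have $L_G(u,v) = -1$ if $(u,v)\in E_G$ and $L_G(u,v)=0$ otherwise. For $u=v$ we have $L_G(u,u)=\deg(u)$, and the pair $(u,u)$ is never an edge. The plan is therefore to show that equality of the invariants $\P(u,v)=\P(x,y)$ forces both the entry $L(u,v)$ and the diagonal/off-diagonal status of the pair to agree.

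\textbf{Step 1.} Assume $\P(u,v)=\P(x,y)$. By \Cref{lem:EP-WL_implies_laplacian} with exponent $1$, we get $L_G(u,v)=L_H(x,y)$. Concretely, $L_G = \sum_i \lambda_{G,i} P_{G,i}$, and the multiset of pairs $(\lambda_i, P_i(u,v))$ is the same on both sides, so the two sums agree term by term.

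\textbf{Step 2.} Rule out mixing a diagonal pair with an off-diagonal pair. The pair $(u,v)$ is diagonal exactly when $u=v$. The invariant detects this through the exponent $0$ version of the lemma: $\sum_i P_i(u,v) = I(u,v)$ when the sum runs over all eigenspaces, including the zero eigenvalue. So $\P(u,v)=\P(x,y)$ gives $I_G(u,v)=I_H(x,y)$, hence $u=v$ if and only if $x=y$. If both pairs are diagonal, neither is an edge and the claim holds trivially.

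\textbf{Step 3.} If both pairs are off-diagonal, then $L_G(u,v)=-A_G(u,v)$ and $L_H(x,y)=-A_H(x,y)$. Step~1 gives $A_G(u,v)=A_H(x,y)$, so $(u,v)\in E_G$ if and only if $(x,y)\in E_H$. This argument also works for weighted graphs: the edge weights themselves agree, and being nonzero is what defines an edge.

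\textbf{Main obstacle.} The only delicate point is Step~2, because $L(u,v)$ alone cannot separate a diagonal entry from an off-diagonal one. For example, an isolated vertex has $L(u,u)=0$, which matches a non-edge. Using $k=0$ in \Cref{lem:EP-WL_implies_laplacian} handles this cleanly, since $L^0 = I$. One caveat: if one reads $L^0$ as the projection $\sum_{\lambda_i\neq 0}P_i$, the zero eigenspace must be included as well, and the invariant $\P$ does list it. Everything else is direct substitution into the lemma.
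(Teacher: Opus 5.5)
Your proof is correct and follows the same route as the paper: apply the lemma with exponent $k=1$ to get $L_G(u,v)=L_H(x,y)$, then read adjacency off this Laplacian entry. Your Step~2 is harmless but unnecessary. Diagonal entries of $L$ are nonnegative and off-diagonal entries are nonpositive, so $L(u,v)<0$ holds exactly when $(u,v)$ is an edge. That sign test is the paper's entire one-line argument, and the isolated-vertex case you flag only ever pairs a non-edge with a non-edge.
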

\begin{proof}[Proof of \Cref{lem:EP-WL_implies_edges}]
    $L_G(u,v)<0$ if and only if $(u,v)\in E_G$, so this follows from \Cref{lem:EP-WL_implies_laplacian}. 
\end{proof}

In what follows, an \textit{\textbf{isolated vertex}} is a vertex with 0 neighbors.

\begin{corollary}
\label{lem:EP-WL_diagonals_equal}
    Let $G$ and $H$ be graph. Let $v\in V_G$ and $u,w\in V_H$. If $v$ is not an isolated vertex, then $\P(v,v) = \P(u,w)$ only if $u=w$.
\end{corollary}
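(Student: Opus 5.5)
We argue by contrapositive, using \Cref{lem:EP-WL_implies_laplacian} with the exponent $k=1$. Suppose $v$ is not isolated and $\P(v,v)=\P(u,w)$ with $u\neq w$. By \Cref{lem:EP-WL_implies_laplacian}, every power of the Laplacian then agrees at the two positions:
$$
L_G^k(v,v)=L_H^k(u,w) \qquad \text{for all } k\in\mathbb{R}.
$$
The plan is to pick one value of $k$ for which the diagonal entry on the left has a sign that no off-diagonal entry can match.

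The natural choice is $k=1$. On the diagonal, $L_G(v,v)=\deg(v)>0$ because $v$ is not isolated. Off the diagonal, for $u\neq w$ we have $L_H(u,w)\in\{0,-1\}$, so $L_H(u,w)\leq 0$. These cannot be equal, which contradicts $L_G(v,v)=L_H(u,w)$, and hence $u=w$.

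The only point needing care is that \Cref{lem:EP-WL_implies_laplacian} applies directly at $k=1$, so no appeal to \Cref{lem:buunded_zeros_of_exponentials} is needed; the forward direction of that lemma is immediate from $L^k=\sum_i\lambda_i^k P_i$.

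The isolated-vertex hypothesis is necessary, not just convenient. If $v$ were isolated, then $L_G(v,v)=0$, which could match an off-diagonal zero, and indeed the zero-eigenspace projection can coincide with off-diagonal entries. No step here is a real obstacle: the argument is a one-line sign comparison once $k=1$ is fixed.
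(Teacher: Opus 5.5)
Your proof is correct and is essentially the paper's own proof: both use only the forward direction of \Cref{lem:EP-WL_implies_laplacian} at $k=1$ and compare $L_G(v,v)=\deg(v)>0$ against $L_H(u,w)\le 0$ for $u\neq w$. Your closing claim that the non-isolated hypothesis is necessary is wrong, though. The hypothesis is needed only for the $k=1$ sign argument, not for the statement: since $\sum_i \Pi_i = I$, the second coordinates of $\mathcal{P}(x,y)$ always sum to $\delta_{xy}$, so $\mathcal{P}(v,v)\neq\mathcal{P}(u,w)$ for every $v$ whenever $u\neq w$ (equivalently, use $k=0$ instead of $k=1$).
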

\begin{proof}[Proof of \Cref{lem:EP-WL_diagonals_equal}]
    $L_H(u,w)>0$ only if $u=w$ and $u$ is not an isolated vertex, so this follows from \Cref{lem:EP-WL_implies_laplacian}. 
\end{proof}
\begin{lemma}
\label{lem:EP-WL_implies_isolated_vertices}
    Let $G$ and $H$ be graph. Let $v\in V_G$ and $x\in V_H$. If $\wlcolor{\P}{1}(v) = \wlcolor{\P}{1}(x)$, then either both $v$ and $x$ are isolated vertices or neither $v$ and $x$ are isolated vertices.
\end{lemma}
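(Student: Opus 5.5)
The plan is to read off isolatedness of a vertex from the multiset of its eigenspace projection invariants $\multiset{\P(u,v) : u\in V}$, which is part of its first-iteration EP-WL color. Since $\wlcolor{\P}{0}$ is constant, $\wlcolor{\P}{1}(v)$ determines exactly this multiset. So it suffices to find a property of this multiset that holds precisely when $v$ is isolated. The cleanest route is through the diagonal entry $\P(v,v)$. The multiset alone does not tell us which element is the diagonal one, so I will characterize isolatedness in terms of the whole multiset.

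First I record what \Cref{lem:EP-WL_implies_laplacian} gives. From any single value $\P(u,v)$ we can recover $L(u,v)=\sum_i \lambda_i P_i(u,v)$, and more generally $L^k(u,v)$ for every $k$. In particular, from the multiset $\multiset{\P(u,v): u\in V}$ we recover the multiset of entries of the row $L(\cdot,v)$. Next I use that $v$ is isolated if and only if the row $L(\cdot,v)$ is identically zero. If $v$ is isolated, then $\deg(v)=0$ and $v$ has no neighbors, so every entry $L(u,v)$ vanishes. Conversely, if $v$ is not isolated, then $L(v,v)=\deg(v)>0$, so the row contains a strictly positive entry.

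Hence $v$ is isolated if and only if every element of the multiset $\multiset{\P(u,v): u\in V}$ yields $L$-value $0$ via the map $\P(u,v)\mapsto \sum_i\lambda_iP_i(u,v)$. This criterion is a function of $\wlcolor{\P}{1}(v)$ alone. Suppose $\wlcolor{\P}{1}(v)=\wlcolor{\P}{1}(x)$. Then the multisets coincide, so the criterion gives the same answer for $v$ in $G$ and $x$ in $H$. Therefore both are isolated or neither is.

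The only delicate point is making sure the comparison across graphs is legitimate. The map $\P(u,v)\mapsto L(u,v)$ must be a fixed function of the invariant, not one depending on the graph. It is, since $\P(u,v)$ explicitly lists the pairs $(\lambda_i,P_i(u,v))$, and this is exactly the forward direction of \Cref{lem:EP-WL_implies_laplacian} with $k=1$. Beyond that I expect no real obstacle; an alternative check is the degree itself, $\deg(v)=\sum_u L(u,v)^+$, which is likewise determined by the color.
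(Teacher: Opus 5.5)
Your proof is correct and is essentially the paper's argument: both recover the row $L(\cdot,v)$ entrywise from the multiset $\multiset{\mathcal{P}(u,v) : u\in V}$ contained in $\wlcolor{\mathcal{P}}{1}(v)$, using the forward direction of \Cref{lem:EP-WL_implies_laplacian}, and both use that $v$ is isolated exactly when this row vanishes. The only difference is presentational: you phrase isolatedness as a graph-independent function of the color, which gives both directions at once, while the paper proves one implication and leaves the symmetric one implicit.
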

\begin{proof}[Proof of \Cref{lem:EP-WL_implies_isolated_vertices}]
    If $v$ is an isolated vertex, for any $u\in V$, $L_G(u,v)=0$. Therefore, as $\multiset{\P(u,v) : v\in V_G} = \multiset{\P(x,y) : y\in V_H}$, by~\Cref{lem:EP-WL_implies_laplacian}, it must also be the case that $L_H(x,y)=0$ for all $y\in V_H$.
\end{proof}
\begin{lemma}
\label{lem:EP-WL_implies_laplacian_diagonal}
    Let $G$ and $H$ be graph. Let $v\in V_G$ and $x\in V_H$. If neither $v$ and $x$ are isolated vertices and $\wlcolor{\P}{1}(v) = \wlcolor{\P}{1}(x)$, then $L^k(v,v) = L^k(x,x)$ for all $k\in\R$.
\end{lemma}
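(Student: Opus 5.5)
The plan is to use the 1-iteration EP-WL color to identify, inside the multiset $\multiset{(\wlcolor{\P}{0}(u),\P(u,v)) : u\in V_G}$, the single entry coming from $u=v$, and then read off $L^k(v,v)$ from it.

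\textbf{Step 1: match the diagonal entries.} Since $\wlcolor{\P}{0}$ is constant, $\wlcolor{\P}{1}(v)=\wlcolor{\P}{1}(x)$ means the multisets $\multiset{\P(u,v):u\in V_G}$ and $\multiset{\P(y,x):y\in V_H}$ coincide. Hence there is a bijection $\sigma:V_G\to V_H$ with $\P(u,v)=\P(\sigma(u),x)$ for all $u$. Take $u=v$. Because $v$ is not isolated, \Cref{lem:EP-WL_diagonals_equal} applied to $\P(v,v)=\P(\sigma(v),x)$ forces $\sigma(v)=x$. So $\P(v,v)=\P(x,x)$.

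To be careful about the roles of $G$ and $H$, I would argue this directly. By \Cref{lem:EP-WL_implies_laplacian} with the first power, $\P(v,v)=\P(y,x)$ gives $L_G(v,v)=L_H(y,x)$. Here $L_G(v,v)=\deg(v)>0$, and an entry $L_H(y,x)$ is positive only when $y=x$. This is where the hypothesis that $x$ (or $v$) is non-isolated is used, and \Cref{lem:EP-WL_implies_isolated_vertices} makes the two hypotheses consistent.

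\textbf{Step 2: conclude.} Apply \Cref{lem:EP-WL_implies_laplacian} to $\P(v,v)=\P(x,x)$. This gives $L_G^k(v,v)=L_H^k(x,x)$ for every $k\in\R$, interpreting negative powers via the pseudoinverse with zero eigenvalues omitted, exactly as in that lemma's spectral sum.

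\textbf{Main obstacle.} The only real subtlety is Step 1: ensuring that the entry matched to $(v,v)$ in the multiset is itself a diagonal entry. This is settled by the positivity of the Laplacian diagonal at non-isolated vertices. I would also note that the zero eigenvalue must be excluded from the sum for non-positive $k$; since the invariant records eigenvalues, this is consistent across both graphs.
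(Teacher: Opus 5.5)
Your proposal is correct and follows the paper's proof. The one-round color gives equality of the row multisets of $\mathcal{P}$, and the diagonal-entry corollary (\Cref{lem:EP-WL_diagonals_equal}, i.e.\ positivity of $L_G(v,v)=\deg(v)$ for non-isolated $v$) forces the entry matched to $\mathcal{P}(v,v)$ to be $\mathcal{P}(x,x)$, after which \Cref{lem:EP-WL_implies_laplacian} gives $L^k_G(v,v)=L^k_H(x,x)$. Your explicit bijection $\sigma$ and your remark on omitting the zero eigenvalue for $k\le 0$ simply make precise what the paper leaves implicit.
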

\begin{proof}[Proof of \Cref{lem:EP-WL_implies_laplacian_diagonal}]
    If $\wlcolor{\P}{1}(u) = \wlcolor{\P}{1}(v)$, then this implies that $\multiset{\P(u,v) : v\in V_G} = \multiset{\P(x,y) : v\in V_G}$. As $v$ and $x$ are not isolated, then by~\Cref{lem:EP-WL_diagonals_equal}, $\P(v,v) = \P(x,x)$. Thus,~\Cref{lem:EP-WL_implies_laplacian} implies the lemma.
\end{proof}

Recall that the $k$-harmonic distance is
\begin{equation*}
    H^k(s,t) = \sqrt{L^{+k}(s,s) + L^{+k}(t,t) - 2L^{+k}(s,t)}
\end{equation*}
Let $\wlcolor{k}{t}(v)$ denote the Sparse-$k$-Harmonic-WL color.

\begin{lemma}
\label{lem:EP-WL_implies_k_harmonic}
    Let $G$ and $H$ be graphs. Let $v\in V_G$ and $x\in V_H$. For all $t\geq 0$, if $\wlcolor{\P}{t+1}(v) = \wlcolor{\P}{t+1}(x)$, then $\wlcolor{k}{t}(v) = \wlcolor{k}{t}(x)$.
\end{lemma}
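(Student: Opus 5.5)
We prove the statement by induction on $t$. The invariant is a little stronger than what is stated: besides equality of Sparse-$k$-Harmonic colors, we also use the fact that equal EP-WL colors are preserved under refinement. Concretely, $\wlcolor{\P}{t+1}(v)=\wlcolor{\P}{t+1}(x)$ implies $\wlcolor{\P}{s}(v)=\wlcolor{\P}{s}(x)$ for all $s\le t+1$, since each EP-WL color contains the previous one as its first coordinate.

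\emph{Base case ($t=0$).} All Sparse-$k$-Harmonic colors equal the constant initial color, so there is nothing to prove.

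\emph{Inductive step.} Assume the claim holds for $t-1$, and suppose $\wlcolor{\P}{t+1}(v)=\wlcolor{\P}{t+1}(x)$. We must show two things.

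First, $\wlcolor{k}{t-1}(v)=\wlcolor{k}{t-1}(x)$. This follows from the inductive hypothesis applied to $\wlcolor{\P}{t}(v)=\wlcolor{\P}{t}(x)$.

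Second, the multisets $\multiset{(H^k(u,v),\wlcolor{k}{t-1}(u)) : (u,v)\in E_G}$ and $\multiset{(H^k(y,x),\wlcolor{k}{t-1}(y)) : (y,x)\in E_H}$ coincide. Equality of the EP-WL colors at step $t+1$ gives a bijection $\sigma:V_G\to V_H$ with $\P(u,v)=\P(\sigma(u),x)$ and $\wlcolor{\P}{t}(u)=\wlcolor{\P}{t}(\sigma(u))$ for every $u$. We check three properties of $\sigma$.
\begin{itemize}
\item By \Cref{lem:EP-WL_implies_edges}, $\sigma$ restricts to a bijection between the neighbors of $v$ and the neighbors of $x$.
\item By the inductive hypothesis, $\wlcolor{k}{t-1}(u)=\wlcolor{k}{t-1}(\sigma(u))$.
\item $H^k(u,v)=H^k(\sigma(u),x)$ for every neighbor $u$ of $v$. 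This is the substantive step, treated next.
\end{itemize}

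\emph{Matching the $k$-harmonic distances.} Write $H^k(u,v)^2=L^{+k}(u,u)+L^{+k}(v,v)-2L^{+k}(u,v)$ and match each of the three terms.

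For the cross term, note that \Cref{lem:EP-WL_implies_laplacian} applies equally to $L^{+k}$. Indeed, $L^{+k}=\sum_{\lambda_i\neq 0}\lambda_i^{-k}P_i$, and this is determined by the multiset $\P(u,v)$: we discard the pair with $\lambda_i=0$, which is identifiable from the eigenvalue itself. Hence $\P(u,v)=\P(\sigma(u),x)$ gives $L^{+k}_G(u,v)=L^{+k}_H(\sigma(u),x)$.

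For the diagonal terms we use \Cref{lem:EP-WL_implies_laplacian_diagonal}. Here $v$ and $x$ have the same EP-WL color at step $\ge 1$. We also have $\wlcolor{\P}{1}(u)=\wlcolor{\P}{1}(\sigma(u))$, because $t\ge1$ and refinement preserves equality. Since $u$ is a neighbor, it is not isolated, and neither is $v$. By \Cref{lem:EP-WL_implies_isolated_vertices}, $x$ and $\sigma(u)$ are not isolated either.

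The lemma is stated for $L^k$. Its proof shows $\P(v,v)=\P(x,x)$, however, and by the same remark this yields $L^{+k}_G(v,v)=L^{+k}_H(x,x)$; likewise $L^{+k}_G(u,u)=L^{+k}_H(\sigma(u),\sigma(u))$. Combining the three terms gives $H^k(u,v)=H^k(\sigma(u),x)$, so the multisets agree and $\wlcolor{k}{t}(v)=\wlcolor{k}{t}(x)$.

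\emph{Main obstacle.} The delicate part is the passage from $L^k$ to $L^{+k}$ and the diagonal entries. \Cref{lem:EP-WL_implies_laplacian} is phrased for $L^k$, which is ill-defined or different from $L^{+k}$ for negative $k$ when $0$ is an eigenvalue. So we re-run its first direction for the pseudoinverse, which only needs $\P(u,v)=\P(x,y)$ ⇒ equal entries. We must also be careful that recovering $\P(v,v)=\P(x,x)$ genuinely uses that $v$ is non-isolated, via \Cref{lem:EP-WL_diagonals_equal}. This is why the induction is offset by one, so that step $1$ of EP-WL is available for the isolation and diagonal facts. Finally, taking $t$ large, \Cref{lem:EP-WL_implies_k_harmonic} shows that EP-WL upper-bounds Sparse-$k$-Harmonic-WL, and hence so does 3-WL.
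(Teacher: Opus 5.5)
Your proof is correct and follows essentially the same route as the paper: induction on $t$, a neighbor bijection obtained from the EP-WL multiset via \Cref{lem:EP-WL_implies_edges}, and term-by-term matching of $H^k(u,v)^2$ using the eigenspace-projection lemmas. You also make explicit several steps the paper leaves implicit: the isolated-vertex case is absorbed into the empty neighbor bijection, the passage from $L^k$ to $L^{+k}$ is justified, and $\wlcolor{\mathcal{P}}{1}(u)=\wlcolor{\mathcal{P}}{1}(\sigma(u))$ is obtained by refinement for the diagonal term at $u$.
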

\begin{proof}[Proof of \Cref{lem:EP-WL_implies_k_harmonic}]
    We prove this by induction on $t$. For $t=0$, this is trivial as all vertices have the same Sparse-$k$-Harmonic-WL color.
    \par 
    Now assume this is true for some $t-1$. We will prove it is the case for $t$.
    \par 
    If $\wlcolor{\P}{t+1}(v) = \wlcolor{\P}{t+1}(x)$, then by \Cref{lem:EP-WL_implies_isolated_vertices}, there are two cases: both $v$ and $x$ are isolated vertices or neither are.
    \par 
    If $v$ and $x$ are isolated vertices, then $\wlcolor{k}{t}(v) = \wlcolor{k}{t}(x)$ as all isolated vertices have the Sparse-$k$-Harmonic-WL color.
    \par 
    If $v$ and $x$ are not isolated vertices, then
    $$\left(  \wlcolor{\P}{t}(v), \multiset{ \chi(u), \P(u,v) : u \in V_G } \right) = \left(  \wlcolor{\P}{t}(x), \multiset{ \chi(y), \P(x,y) : y \in V_H } \right).$$ 
    By the induction hypothesis, the first part of the tuple implies that $\wlcolor{k}{t-1}(v) = \wlcolor{k}{t-1}(x)$.
    \par 
    Next, observe that by ~\Cref{lem:EP-WL_implies_edges} that
    \begin{align*}
         \multiset{ \wlcolor{\P}{t}(u), \P(u,v) : u \in V_G } =& \multiset{ \wlcolor{\P}{t}(y), \P(x,y) : (x,y) \in V_H } \\
        \Rightarrow \multiset{ \wlcolor{\P}{t}(u), \P(u,v) : (u,v) \in E_G } =& \multiset{ \wlcolor{\P}{t}(y), \P(x,y) : (x,y) \in E_H }
    \end{align*} 
    Thus, there is a bijection $\sigma:N(v)\to N(X)$ such that $(\wlcolor{\P}{t}(u), \P(u,v)) = (\wlcolor{\P}{t}(\sigma(u)), \P(x,\sigma(u)))$ for all $u\in N(v)$. We claim that for each $u\in N(v)$ that $(\wlcolor{k-1}{t}(u), H^k(u,v)) = (\wlcolor{k}{t-1}(\sigma(u)), H^k(x,\sigma(u)))$. As $\wlcolor{\P}{t}(u) = \wlcolor{\P}{t}(\sigma(u))$, the inductive hypothesis implies that $\wlcolor{k-1}{t}(u)=\wlcolor{k}{t-1}(\sigma(u)$. To prove that $H^k(u,v) = H^k(x,\sigma(u))$, first observe that because $v$ and $x$ are not isolated vertices and $\wlcolor{\P}{t+1}(v) = \wlcolor{\P}{t+1}(x)$, then $L^{+k}_G(v,v) = L^{+k}_G(x,x)$ by~\Cref{lem:EP-WL_implies_laplacian_diagonal}. Likewise, $L^{+k}_G(u,u) = L^{+k}_G(\sigma(u),\sigma(u))$. Finally, as $\P(u,v) = \P(x,\sigma(u))$, then $L^{+k}(u,v) = L^{+k}(x,\sigma(u))$ by \Cref{lem:EP-WL_implies_laplacian}. Therefore, $H^k(u,v) = H^k(x,\sigma(u))$. 
    As we have shown there is a bijection $\sigma:N(v)\to N(X)$ such that $(\wlcolor{k-1}{t}(u), H^k(u,v)) = (\wlcolor{k}{t-1}(\sigma(u)), H^k(x,\sigma(u)))$ for all $u\in N(v)$, this concludes our proof that $\wlcolor{k}{t}(v) = \wlcolor{k}{t}(x)$
\end{proof}

We can now use this lemma to prove the theorem. If $G$ and $H$ are 3-WL indistinguishable, they are EP-WL indistinguishable by~\citet{zhang2024on}. If $G$ and $H$ are EP-WL indistinguishable, this implies that $\multiset{\wlcolor{\P}{t}(v):v\in V_G} = \multiset{\wlcolor{\P}{t}(x):x\in V_H}$ for all $t\geq 0$.~\Cref{lem:EP-WL_implies_k_harmonic} then implies $\multiset{\wlcolor{k}{t}(v):v\in V_G} = \multiset{\wlcolor{k}{t}(x):x\in V_H}$, so $G$ and $H$ are Sparse-$k$-Harmonic-WL indistinguishable.
\end{proof}

\subsection{Proof of \Cref{thm:biharmonictrees}}
\label{apx:biharmonictrees}
The \textit{\textbf{k-hop neighbor}} of radius $k$ around a node $v$ is the graph $(B_k(v), E_k(v))$ with nodes $B_k(v) = \{u\in V: d(v,u)\leq k\}$ and edges $E_k(v) = \{ \{u,w\} : d(v,u)\leq k-1, d(v,w)\leq k \}$. Two nodes $u$ and $v$ have \textit{\textbf{isomorphic k-hop neighborhoods}} if there is a graph isomorphism $\sigma:B_k(u)\to B_k(v)$ such that $\sigma(u) = v$. While the following \namecref{lem:isomorphic_neighborhoods_imply_same_wl} is folklore, we will prove a stronger version of this theorem in the coming section (proof of \Cref{lem:color_preserving_isomorphic_neighborhoods_imply_same_wl}), so readers interested in a proof of this \namecref{lem:isomorphic_neighborhoods_imply_same_wl} are encouraged to read that proof.

\begin{lemma}[Folklore]
\label{lem:isomorphic_neighborhoods_imply_same_wl}
    Let $G$ and $H$ be graphs, and let $v\in V_G$ and $u\in V_H$. If $v$ and $u$ have isomorphic $k$-hop neighborhoods, then the WL colors $\wlcolor{}{l}(v) = \wlcolor{}{l}(x)$ for all $0\leq l \leq k$. 
\end{lemma}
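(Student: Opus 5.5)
We argue by induction on $l$, proving the slightly stronger statement: for every $l\le k$ and every pair of nodes $w\in B_{k-l}(v)$ (in $G$) and $\sigma(w)$ (in $H$), where $\sigma:B_k(v)\to B_k(u)$ is the neighborhood isomorphism with $\sigma(v)=u$, we have $\wlcolor{}{l}(w)=\wlcolor{}{l}(\sigma(w))$. Taking $w=v$ gives the lemma, since $\sigma(v)=u$ and $v\in B_{k-l}(v)$ for all $l\le k$.

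The base case $l=0$ is immediate, because every vertex receives the same initial color $\wlcolor{}{0}=1$. For the inductive step, assume the claim for $l-1$ and fix $w\in B_{k-l}(v)$. Since $d(v,w)\le k-l\le k-1$, every edge $\{w,w'\}$ of $G$ incident to $w$ lies in $E_k(v)$ by the definition of the $k$-hop neighborhood (which keeps all edges with one endpoint at distance $\le k-1$). Hence the full neighborhood $N_G(w)$ is contained in $B_k(v)$ and is seen entirely by the neighborhood graph. The same holds for $\sigma(w)$ in $H$: $\sigma$ preserves distance from the root, because distances from $v$ to vertices of $B_k(v)$ are realized by paths lying inside the neighborhood graph. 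Because $\sigma$ is a graph isomorphism of the neighborhood graphs, it restricts to a bijection $N_G(w)\to N_H(\sigma(w))$.

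Each neighbor $w'$ of $w$ satisfies $d(v,w')\le k-l+1=k-(l-1)$, so $w'\in B_{k-(l-1)}(v)$ and the inductive hypothesis gives $\wlcolor{}{l-1}(w')=\wlcolor{}{l-1}(\sigma(w'))$. Also $w\in B_{k-(l-1)}(v)$, so $\wlcolor{}{l-1}(w)=\wlcolor{}{l-1}(\sigma(w))$. Therefore the multisets of neighbor colors agree under the bijection, and applying the injective hash in the WL update yields $\wlcolor{}{l}(w)=\wlcolor{}{l}(\sigma(w))$.

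The only delicate point is the bookkeeping showing that the neighborhood graph contains every edge incident to a vertex at distance $\le k-1$, and that $\sigma$ maps true neighbors to true neighbors. This is exactly why the induction shrinks the radius by one at each round. Note that the WL update as written in the paper uses $\chi^{(t)}(u)$ on the right-hand side; we read this as $\chi^{(t-1)}(u)$, which is the intended definition.
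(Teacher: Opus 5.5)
Your proposal is correct and follows essentially the same route as the paper, which proves this lemma through the stronger $\psi$-preserving version (\Cref{lem:color_preserving_isomorphic_neighborhoods_imply_same_wl}). That proof uses induction on $l$ with the same strengthened claim over all vertices within distance $k-l$ of the root. Your explicit check that $\sigma$ preserves distance from the root, so that it restricts to a bijection between full neighborhoods $N_G(w)\to N_H(\sigma(w))$, fills in a step the paper only asserts.
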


\begin{lemma}[{\citep[Theorem 5.1]{black2024biharmonic}}]
\label{lem:biharmonic_cut_edge}
    Let $G=(V,E)$ be a connected graph. Let $(u,v)\in E$ be a cut edge, and let $S,T\subset V$ be the connected components of $G$ after removing the edge $(u,v)$. Then 
    $$
    B(u,v)^2 = \frac{|S||T|}{|V|}. 
    $$
\end{lemma}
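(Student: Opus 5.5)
The plan is to compute the biharmonic distance through an explicit electrical potential. Write $b = 1_u - 1_v$. By definition $B(u,v)^2 = b^T (L^+)^2 b = \|L^+ b\|^2$, using that $L^+$ is symmetric. So it suffices to identify the vector $x = L^+ b$ exactly and compute its squared norm.

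\textbf{Step 1: characterize $L^+ b$.} Since $G$ is connected, the kernel of $L$ is spanned by the all-ones vector $\mathbf{1}$. Moreover $b \perp \mathbf{1}$, so $b$ lies in the range of $L$. Hence $L^+ b$ is the unique vector $x$ with $Lx = b$ and $x \perp \mathbf{1}$. This follows from the eigendecomposition $L^+ = \sum_{\lambda_i \neq 0} \lambda_i^{-1} z_i z_i^T$ recalled earlier.

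\textbf{Step 2: guess and verify the potential.} Take $x$ constant on each side of the cut: $x_w = a$ for $w \in S$ and $x_w = c$ for $w \in T$, with $u \in S$, $v \in T$, and $a - c = 1$. We check $(Lx)_w = \sum_{w' \sim w} (x_w - x_{w'})$ vertex by vertex.
\begin{itemize}
\item At any vertex other than $u$ and $v$, every neighbor lies on the same side, because $(u,v)$ is the only edge between $S$ and $T$. So the sum vanishes.
\item At $u$, the only nonzero term comes from the neighbor $v$, giving $a - c = 1$.
\item At $v$, the only nonzero term comes from the neighbor $u$, giving $c - a = -1$.
\end{itemize}
Thus $Lx = b$.

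\textbf{Step 3: impose orthogonality.} The condition $x \perp \mathbf{1}$ gives $|S|a + |T|c = 0$. Together with $a - c = 1$ and $|S| + |T| = |V|$, this yields $a = |T|/|V|$ and $c = -|S|/|V|$.

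\textbf{Step 4: compute the norm.}
\[
\|x\|^2 = |S|\frac{|T|^2}{|V|^2} + |T|\frac{|S|^2}{|V|^2} = \frac{|S||T|(|S|+|T|)}{|V|^2} = \frac{|S||T|}{|V|},
\]
which is the claimed formula.

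The only step needing care is Step 1: justifying that $L^+ b$ is exactly the mean-zero solution of $Lx = b$. This uses connectivity and $b \perp \mathbf{1}$. The rest is direct verification.
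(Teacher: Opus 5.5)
Your proof is correct and complete. The identity $B(u,v)^2=\|L^+b\|^2$ holds, and $L^+b$ is the mean-zero solution of $Lx=b$ because $G$ is connected and $b\perp\mathbf{1}$. Your two-valued vector solves $Lx=b$ exactly because $(u,v)$ is the only edge between $S$ and $T$, and your normalization and norm computation check out.

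The paper gives no proof of this lemma; it simply imports it from \citet[Theorem 5.1]{black2024biharmonic}. So your argument is a self-contained substitute rather than a reconstruction.

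A different route is closer to the edge-centrality reading the paper attributes to that work, and goes through the total resistance. For a connected graph, $\partial L^+/\partial w_e=-L^+b_eb_e^TL^+$. Combined with $\sum_{s<t}R(s,t)=n\operatorname{tr}(L^+)$, this gives $B(e)^2=-\frac{1}{n}\,\partial_{w_e}\sum_{s<t}R(s,t)$. For a cut edge, only the series term $|S||T|/w_e$ of the total resistance depends on $w_e$, which yields $|S||T|/n$ at $w_e=1$.

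That route explains why the quantity measures an edge's global importance, but it needs the matrix-derivative identity and the series-resistance decomposition. Yours is more elementary and exhibits $L^+b$ explicitly as a step vector across the cut. It also extends at once to a weighted cut edge, where the same computation gives $|S||T|/(|V|w_e^2)$.
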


\begin{figure}[ht]
    \includegraphics[scale=0.3]{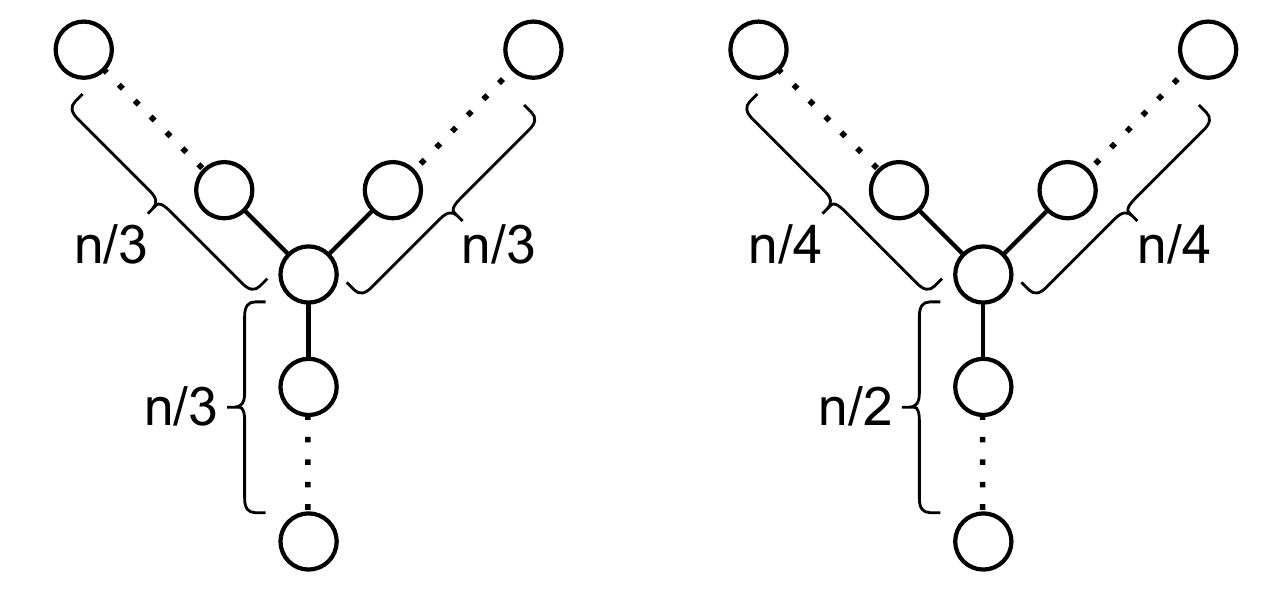}
    \centering
    \caption{Two non-isomorphic trees that Sparse-Biharmonic-WL can distinguish in 1 iteration but Sparse-Resistance-WL cannot distinguish in $o(n)$ iterations.}
    \label{fig:treesexample}
\end{figure}

\begin{proof}[Proof of \Cref{thm:biharmonictrees}]
    Let $n$ be any positive integer that is divisible by $12$. We consider the pair of rooted trees $G$ and $H$ where $G$ is a root connected to three paths of length $n/3$ and $H$ is a root connected to two paths of length $n/4$ and one path of length $n/2$; see~\Cref{fig:treesexample} for a picture of $G$ and $H$. Observe that these graphs are both trees and both have $n+1$ vertices. We will show that $G$ and $H$ are indistinguishable by $\lfloor n/8\rfloor$ iterations of the WL test, but are distinguishable by a single iteration of the Sparse-Biharmonic-WL test.
    \par 
    First, we show that these graphs are distinguishable by one iteration of Sparse-Biharmonic-WL. First, observe that because $G$ and $H$ are both trees, then all edges in either graph is a cut edge; accordingly, we can use~\Cref{lem:biharmonic_cut_edge} to compute the biharmonic distance of all edges in each graph. In particular, consider the edge $e$ connecting the root of $H$ to the path of length $n/2$. The squared biharmonic distance of $e$ is $B(e)^2 = \frac{(n/2)(n/2+1)}{n+1}$; any edge $e'$ in $G$ has biharmonic distance at most $B(e')^2 \leq \frac{(n/3)(2n/3+1)}{n+1} < \frac{(n/2)(n/2+1)}{n+1} = B(e)^2$. Therefore, the Sparse-Biharmonic-WL color of the root $\wlcolor{B}{1}(r_H)$ of $H$ contains an edge with biharmonic distance $B(e) = \sqrt{\frac{(n/2)(n/2+1)}{n+1}}$. As there is no edge in $G$ with biharmonic distance $\sqrt{\frac{(n/2)(n/2+1)}{n+1}}$, there is no node in $G$ with the same Sparse-Biharmonic-WL color as $r_H$. Therefore, one iteration of Sparse-Biharmonic-WL distinguishes $G$ and $H$.
    \par 
    Next, we need to show that $G$ and $H$ cannot be distinguished in $\lfloor\frac{n}{8}\rfloor$ iterations of the WL test. First, we observe that for any $k < \lfloor\frac{n}{8}\rfloor$, the $k$-hop neighborhoods of the nodes in $G$ and $H$ are of one of three types: 
    \begin{enumerate}
        \item \underline{Nodes that are distance $r<k$ from a leaf of a tree} The $k$-hop neighborhoods of these nodes are the node connected to a path of length $r$ (the path connecting the node to the leaf) and a path of length $k$.

        \item \underline{Nodes that are distance $r<k$ from the root} The $k$-hop neighborhoods of these nodes are the node connected to a path of length $k$ and a path of length $r$ connected to two paths of length $k-r$. 

        \item \underline{Nodes that are distance $>k$ from both a leaf and the root} The $k$-hop neighborhood of these nodes are the node connected to two paths of length $k$. 
    \end{enumerate}
    As $k<\lfloor\frac{n}{8}\rfloor$, there are no nodes of that are both distance $<k$ to a leaf and a root, as the distance between any leaf and a root in either tree is $\frac{n}{4}$
    \par 
    For any $0<r<k$, in both graphs, there are three nodes of distance exactly $r$ to a leaf and distance exactly $r$ to the root. For $r=0$, there is one node of distance $r$ to the root (the root itself) and three nodes of distance $r=0$ to the leaves (the leaves themselves.) The remaining $n-6k$ nodes of the graph are at distance $>k$ from both a leaf to a root. As there is a bijection from the nodes of $G$ to the nodes of $H$ such that paired nodes have isomorphic $k$-hop neighborhoods, then by~\Cref{lem:isomorphic_neighborhoods_imply_same_wl}, we conclude that $G$ and $H$ are indistinguishable by $k$ iterations of the WL test for all $k<\lfloor\frac{n}{8}\rfloor$. 
\end{proof}

\subsection{Example of \Cref{thm:biharmonictrees}} \label{apx:treeexample}

In \Cref{thm:biharmonictrees}, we asserted that there are trees that Sparse-Biharmonic-WL can distinguish in one iteration. However, this does not generalize to all non-isomorphic trees. We provide one such counterexample in  \Cref{fig:counterexampletrees} where it would take both Sparse-Biharmonic-WL and $1$-WL $\Omega(n)$ iterations to distinguish these two trees.

\begin{figure}[ht]
    \includegraphics[scale=0.3]{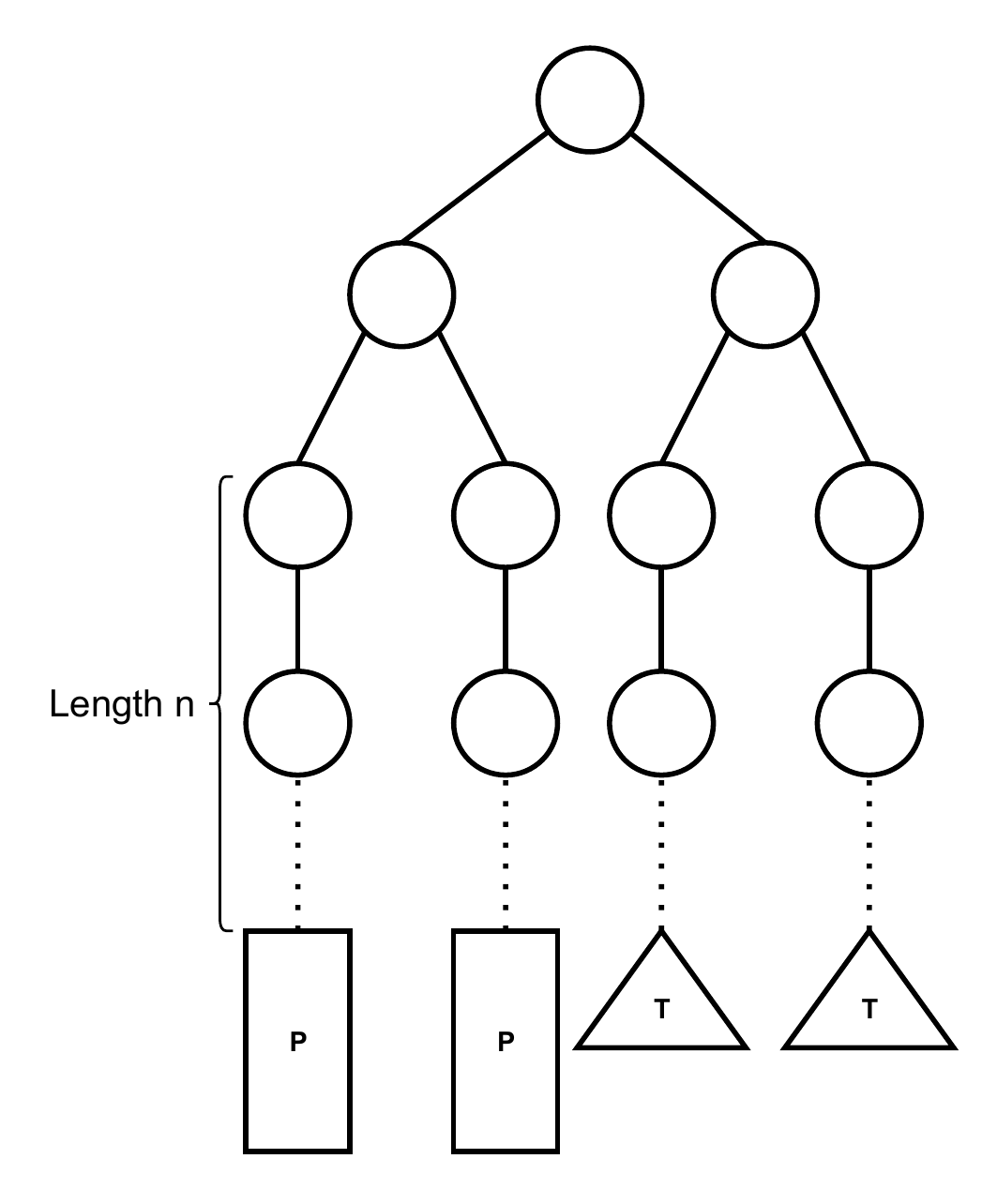}
    \includegraphics[scale=0.3]{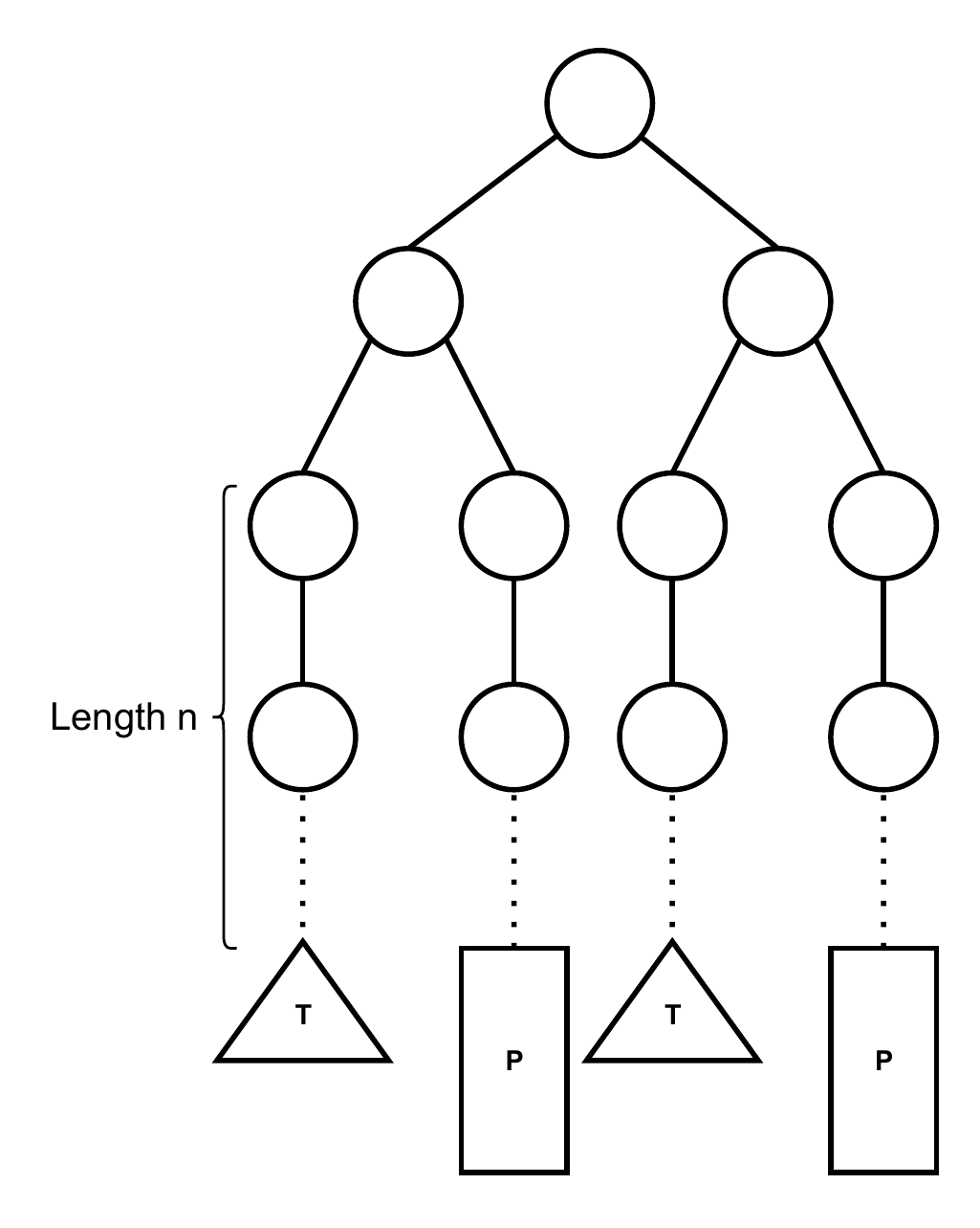}
    \centering
    \caption{Two non-isomorphic trees $G$ and $H$ with $n$ vertices that Sparse-Biharmonic-WL takes $o(n)$ iterations to distinguish. Let $T$ be the complete tree consisting of $n$ nodes and $P$ be a path of $n$ nodes.}
    \label{fig:counterexampletrees}
\end{figure}

\begin{theorem}
\label{thm:biharmonic_wl_lower_bound}
    There exist pairs of graph $G$ and $H$ with $n$ nodes that cannot be distinguished in $o(n)$ iterations of the Sparse-Biharmonic-WL test. 
\end{theorem}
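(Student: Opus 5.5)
The plan is to build two non-isomorphic trees out of a long path with small identical-size gadgets attached, placed so that every node of one tree has a partner in the other with an isomorphic $r$-hop neighborhood for $r=\Theta(n)$, and so that this isomorphism also preserves the biharmonic edge labels. Fix two non-isomorphic rooted trees $T_1,T_2$ with the same constant number $c$ of nodes, for example a path on $4$ nodes and a star $K_{1,3}$, each rooted at a leaf. Let $P$ be a path with nodes $0,\dots,m$, and fix positions $0<a<b<m/2$ with $a$, $b-a$, $m/2-b$ and $a$ all $\Theta(m)$ (for instance $a=m/8$, $b=3m/8$). Define
\[
G:\ T_1@a,\ T_1@b,\ T_2@(m-b),\ T_2@(m-a), \qquad H:\ T_1@a,\ T_2@b,\ T_1@(m-b),\ T_2@(m-a),
\]
where $T_i@p$ means the root of a copy of $T_i$ is joined by an edge to path node $p$. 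Both graphs are trees on $n=m+1+4c$ nodes.

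The first step is non-isomorphism. Since $m$ is large compared with $c$, any isomorphism must map the long path onto itself, so it is either the identity or the reflection $p\mapsto m-p$. The reflection sends $G$ to $T_2@a,T_2@b,T_1@(m-b),T_1@(m-a)$, which is not $H$, and the identity clearly does not work either.

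The second step computes all labels using \Cref{lem:biharmonic_cut_edge}, which applies because every edge of a tree is a cut edge. Each edge gets $B^2=s(n-s)/n$, where $s$ is the size of one side.
\begin{itemize}
\item For a path edge $(p,p+1)$, the side containing $0$ has size $p+1$ plus $c$ times the number of gadgets attached at positions $\le p$. All gadgets have size $c$ and the attachment positions are the same in $G$ and $H$, so this label is the same in both graphs.
\item Because $s(n-s)$ is symmetric under $s\mapsto n-s$, the label is also invariant under reflecting the path.
\item For an edge inside a gadget, or the edge joining a gadget to the path, the split depends only on the gadget's internal structure and $c$, not on where the gadget is attached.
\end{itemize}

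The third step is the bijection $\sigma:V_G\to V_H$. Path nodes go to path nodes. Gadget nodes at $a$ and $m-a$ go to themselves. The gadget $T_1@b$ in $G$ goes to $T_1@(m-b)$ in $H$, and $T_2@(m-b)$ in $G$ goes to $T_2@b$ in $H$. For a path node $p$ within distance $r$ of $b$ or $m-b$, set $\sigma(p)=m-p$; otherwise set $\sigma(p)=p$. With $r<\min(a,\,b-a,\,m/2-b)/2=\Theta(n)$, every $r$-hop ball contains at most one attachment point, and by the second step $\sigma$ restricted to that ball is an isomorphism that preserves edge labels. For balls near $b$ this uses reflection invariance; elsewhere it is literally the identity. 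Nodes inside gadgets are handled the same way.

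To finish I will invoke the edge-label-preserving version of \Cref{lem:isomorphic_neighborhoods_imply_same_wl}, namely \Cref{lem:color_preserving_isomorphic_neighborhoods_imply_same_wl}: nodes with label-preserving isomorphic $r$-hop neighborhoods receive equal Sparse-Biharmonic-WL colors for $r$ iterations. Since $\sigma$ is a bijection, the color multisets agree for $o(n)$ iterations.

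The main obstacle is that biharmonic labels on path edges encode global position, since $s$ grows along the path. A naive swap of gadgets would therefore be detected in one iteration. The symmetric placement together with the identity $s(n-s)=(n-s)s$ is exactly what defeats this, and I expect the careful verification of the second and third steps near the attachment points, including boundary edges of the balls, to be the delicate part.
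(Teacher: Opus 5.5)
Your proposal is correct and follows the same strategy as the paper's argument for this result: two non-isomorphic trees assembled from blocks (the paper's figure uses complete trees and paths, while you use constant-size gadgets at mirror-symmetric positions on a long path), biharmonic edge labels read off from the cut-edge formula of \Cref{lem:biharmonic_cut_edge}, and the conclusion drawn from the label-preserving neighborhood lemma \Cref{lem:color_preserving_isomorphic_neighborhoods_imply_same_wl}. One wording fix: the global $\sigma$ restricted to a ball is not itself an isomorphism near the edge of the reflected window (for the center $v=b+r$ it sends $v\mapsto m-b-r$ but the neighbor $b+r+1\mapsto b+r+1$), so for each center $v$ use the reflection $p\mapsto m-p$ (with the matching gadget map) on all of $B_r(v)$ when $\sigma(v)$ is the mirror image of $v$, and the identity on all of $B_r(v)$ otherwise; your bounds on $r$ guarantee both choices work, and this per-center isomorphism is all the lemma requires.
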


To prove this, we will use a variant of~\Cref{lem:biharmonic_cut_edge} for the sparse $\psi$ WL test. For an edge positional encoding $\psi$ and two graphs $G$ and $H$, we define a \textit{\textbf{$\boldsymbol{\psi}$-preserving isomorphism}} as an isomorphism $\sigma:V_G\to V_H$ such that for each edge $(u,v)\in E_G$, $\psi(u,v) = \psi(\sigma(u),\sigma(v))$. In the following lemma, when we say a $\psi$-preserving isomorphism between neighborhoods, we define $\psi$ with respect to the entire graphs $G$ and $H$, and not with respect to the neighborhoods.

\begin{lemma}
\label{lem:color_preserving_isomorphic_neighborhoods_imply_same_wl}
    Let $G$ and $H$ be graphs, and let $v\in V_G$ and $u \in V_H$. Let $\psi$ be an edge positional encoding. If there is a $\psi$-preserving isomorphism between the $k$-hop neighborhoods of $v$ and $x$, then the WL colors $\wlcolor{\psi}{l}(v) = \wlcolor{\psi}{l}(x)$ for all $0\leq l \leq k$. 
\end{lemma}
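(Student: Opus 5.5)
We prove the stronger statement by induction on $l$, from which the claim for all $0\le l\le k$ follows. Fix the $\psi$-preserving isomorphism $\sigma:B_k(v)\to B_k(x)$ with $\sigma(v)=x$ (writing $x$ for $u$ as in the statement). The claim is: for every $0\le l\le k$ and every $w\in B_{k-l}(v)$, the Sparse-$\psi$-WL colors satisfy $\wlcolor{\psi}{l}(w)=\wlcolor{\psi}{l}(\sigma(w))$. Taking $w=v$, which lies in $B_{k-l}(v)$ for every $l\le k$, gives the lemma.

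The base case $l=0$ is immediate, since all vertices share the initial color. For the inductive step, assume the claim for $l-1$, and let $w\in B_{k-l}(v)$, so $d(v,w)\le k-l\le k-1$. We need two facts.

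\emph{Neighborhoods are preserved.} Every neighbor $z$ of $w$ in $G$ satisfies $d(v,z)\le k-l+1\le k$. By the definition of $E_k(v)$, which includes every edge with one endpoint at distance $\le k-1$ from $v$, the edge $\{w,z\}$ lies in the $k$-hop neighborhood. Hence $N_G(w)$ is exactly the neighborhood of $w$ inside $(B_k(v),E_k(v))$. The same argument applies to $\sigma(w)$ in $H$, because $\sigma$ preserves distances to the center and $d(x,\sigma(w))\le k-1$. Therefore $\sigma$ restricts to a bijection $N_G(w)\to N_H(\sigma(w))$.

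\emph{Colors and edge features are preserved.} Each neighbor $z$ lies in $B_{k-l+1}(v)=B_{k-(l-1)}(v)$. The inductive hypothesis therefore gives $\wlcolor{\psi}{l-1}(z)=\wlcolor{\psi}{l-1}(\sigma(z))$, and likewise $\wlcolor{\psi}{l-1}(w)=\wlcolor{\psi}{l-1}(\sigma(w))$. Since $\sigma$ is $\psi$-preserving, with $\psi$ computed in the whole graphs, we have $\psi((w,z))=\psi((\sigma(w),\sigma(z)))$.

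Combining the two facts, the multisets $\multiset{(\psi((w,z)),\wlcolor{\psi}{l-1}(z)) : z\in N(w)}$ and the corresponding multiset at $\sigma(w)$ coincide. The previous colors of $w$ and $\sigma(w)$ also agree. So the hashes agree, giving $\wlcolor{\psi}{l}(w)=\wlcolor{\psi}{l}(\sigma(w))$ and completing the induction.

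The main obstacle is the bookkeeping in the first fact: checking that the truncated neighborhood $E_k(v)$ contains all edges incident to vertices at distance $\le k-1$, which is precisely why the radius must shrink by one at each iteration. The argument also needs $\sigma$ to preserve distance from the center. This holds because $\sigma$ is a graph isomorphism of the neighborhoods fixing the center: any path to the center through vertices of distance $\le k-1$ uses only edges in $E_k$, so these distances are determined inside the neighborhood.
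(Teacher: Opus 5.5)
Your proof is correct and follows essentially the same route as the paper's: induction on $l$ for the strengthened claim that $\wlcolor{\psi}{l}(w)=\wlcolor{\psi}{l}(\sigma(w))$ for every $w$ within distance $k-l$ of the center, using that $\sigma$ bijects neighbors and preserves $\psi$. You also make explicit two points the paper's proof leaves implicit, namely that every edge at a vertex of distance at most $k-1$ lies in $E_k(v)$ and that $\sigma$ preserves distance to the center; together these justify the neighbor bijection $N_G(w)\to N_H(\sigma(w))$.
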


\begin{proof}
    We will actually prove a stronger result. If there is a $\psi$-preserving isomorphism $\sigma$ between the $k$-hop neighborhoods of $u$ and $v$, then for all $0 \leq l \leq k$ and for all vertices $u$ that are at most $k-l$ hops away from $v$, then $\wlcolor{\psi}{l}(v) = \wlcolor{\psi}{l}(\sigma(u)))$. We will prove this by induction on $l$. As $u$ is 0 hops from itself, then this implies the theorem.
    \par 
    For the base case of $l=0$, this is true by the definition of the sparse $\psi$ WL test.
    \par 
    Now assume this is true for some $l\geq 0$; we will prove it is true for $l+1$. Consider a vertex $u$ that is at distance at most $k-(l+1)$ from $v$. We claim that $\wlcolor{\psi}{l+1}(u) = \wlcolor{\psi}{l+1}(\sigma(u))$. The color of $v$ is defined 
    $$
        \wlcolor{\psi}{l+1}(u) = (\wlcolor{\psi}{l}(u), \multiset{ (\wlcolor{\psi}{l}(u), \psi(u,w)) : (u,w)\in E_G }).
    $$
    By the inductive hypothesis, we know that $\wlcolor{\psi}{l}(u) = \wlcolor{\psi}{l}(\sigma(u))$ as $u$ is at most distance $k-(l+1)<k-l$ from $v$.
    \par 
    Moreover, as $\sigma$ is an isomorphism, then the neighbors of $u$ are $\{\sigma(w) : (u,w)\in E_G \} = \{y : (\sigma(u), y)\in E_H\}$. Moreover, any neighbor of $u$ is at most distance $k-l$ from $v$, so $\wlcolor{\psi}{l}(w) = \wlcolor{\psi}{l}(\sigma(w))$. Finally, as $\sigma$ is $\psi$-preserving, we know that $\psi(u,w) = \psi(\sigma(u),\sigma(w))$. Therefore, we conclude that 
    \begin{align*}
        \wlcolor{\psi}{l+1}(u) =& (\wlcolor{\psi}{l}(u), \multiset{ (\wlcolor{\psi}{l}(u), \psi(u,w)) : (u,w)\in E_G }) \\
        =& (\wlcolor{\psi}{l}(\sigma(u)), \multiset{ (\wlcolor{\psi}{l}(\sigma(u)), \psi(\sigma(u),\sigma(w))) : (u,w)\in E_G }) \\
        =& (\wlcolor{\psi}{l}(\sigma(u)), \multiset{ (\wlcolor{\psi}{l}(\sigma(u)), \psi(\sigma(u), y)) : (\sigma(u),y)\in E_H }) \\
        =& \wlcolor{\psi}{l+1}(\sigma(u)) \hfill\qedhere
    \end{align*}
\end{proof}

\subsection{Proof of \Cref{thm:distinct k-harmonic}} \label{apx:distinct k-harmonic}

\begin{theorem} \label{thm:distinguishableatmost}
Let $G$ and $H$ be two graphs such that the Laplacians $L_G$ and $L_H$ have $t_G$ and $t_H$ distinct non-zero eigenvalues respectively. Let $u,v\in V_G$ and $x,y\in V_H$. Then, either
\begin{enumerate}
    \item [(1)] the $k$-harmonic distances $H^k(u,v) = H^k(x,y)$ for all $k\in\mathbb{R}$, or
    \item [(2)] the $k$-harmonic distances $H^k(u,v) = H^k(x,y)$ for at most $t_G+t_H$ values of $k\in\mathbb{R}$.
\end{enumerate}
\end{theorem}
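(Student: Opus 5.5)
The plan is to write the squared $k$-harmonic distances as exponential sums in $k$ and then invoke \Cref{lem:buunded_zeros_of_exponentials}. Since $H^k\geq 0$, we have $H^k(u,v)=H^k(x,y)$ if and only if the squares agree, so it suffices to compare squares. Using the spectral decomposition $L_G=\sum_{i=1}^{t_G}\lambda_{G,i}P_{G,i}$ over the distinct non-zero eigenvalues, we get $(L_G^+)^k=\sum_{i=1}^{t_G}\lambda_{G,i}^{-k}P_{G,i}$ for every real $k$. Hence
\[
H^k_G(u,v)^2=\sum_{i=1}^{t_G} a_i\,(\lambda_{G,i}^{-1})^k, \qquad a_i=(1_u-1_v)^TP_{G,i}(1_u-1_v)\geq 0.
\]
The same holds in $H$, with eigenvalues $\mu_j$ and coefficients $b_j=(1_x-1_y)^TP_{H,j}(1_x-1_y)$.

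Next, consider
\[
f(k)=H^k_G(u,v)^2-H^k_H(x,y)^2=\sum_i a_i(\lambda_{G,i}^{-1})^k-\sum_j b_j(\mu_j^{-1})^k .
\]
Any eigenvalue common to both spectra gets its two terms merged into a single term with base $\lambda^{-1}$ and coefficient $a_i-b_j$. Then every term with a zero coefficient is discarded. What remains is $f(k)=\sum_{\ell=1}^{s}c_\ell\beta_\ell^{k}$ with distinct positive bases $\beta_\ell$, nonzero $c_\ell$, and $s\leq t_G+t_H$.

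The argument then splits into two cases. If $s=0$, then $f\equiv 0$, which is case (1). If $s\geq 1$, \Cref{lem:buunded_zeros_of_exponentials} shows $f$ has at most $s\leq t_G+t_H$ real zeros, which is case (2).

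The main point needing care is the lemma's hypothesis. The lemma must be applied with distinct bases, since otherwise cancellations could make a "nonzero-looking" sum vanish identically. That is exactly why the bases are merged before coefficients are checked. With distinct bases, the standard statement (Jameson; or the Descartes-type rule for exponential sums) actually gives at most $s-1$ zeros unless the sum is identically zero, so the bound $t_G+t_H$ is comfortably satisfied.

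Two smaller points should be noted. First, the zero eigenvalue is excluded because the pseudoinverse kills the kernel. Second, because the bases are positive reals, $\beta^k$ is well defined for all $k\in\R$.
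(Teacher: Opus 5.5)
Your proposal is correct and follows essentially the same route as the paper: both write the squared $k$-harmonic distance as $\sum_i \lambda_i^{-k}(1_u-1_v)^T P_i(1_u-1_v)$ over the distinct non-zero eigenvalues, form the difference $f(k)$, and apply \Cref{lem:buunded_zeros_of_exponentials}. You are more careful than the paper in one respect: you merge shared eigenvalues and discard zero coefficients before invoking the lemma. The paper instead asserts that $f$ is not identically zero whenever the spectra or the $p_i$ differ, which fails in degenerate cases such as $u=v$, $x=y$, although the dichotomy itself still holds there.
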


\begin{proof}
Suppose that the Laplacian of $G$ has $t_G$ distinct non-zero eigenvalues. Let $L_G$ be the Laplacian of $G$, let $0<\lambda_1 < \lambda_2< \cdots < \lambda_{t}$ be its nonzero distinct Laplacian eigenvalues, let $U_i$ be the matrix with columns that are an orthogonal basis for the eigenspace of $U_i$. Then the Laplacian of $G$ is
\[
L_G = \sum_{i=1}^t \lambda_i U_iU_i^T.
\]
Therefore, the $k$-harmonic distance between two vertices $u$ and $u$ is 
\begin{align*}
(1_u - 1_v)^T(L_G^+)^k (1_u - 1_v)=& (1_u - 1_v)^T\left(\sum_{i=2}^n{\lambda_i^{-k}U_iU_i^T}\right)(1_u - 1_v) \\
=& \sum_{i=1}^t{\lambda_i^{-k}(U_i^T(1_u - 1_v))^TU_i^T(1_u - 1_v)} \\
=& \sum_{i=1}^t{\lambda_i^{-k}p^2_i(u,v)},
\end{align*}
where $p_i(x,y) = \|U_i^T(1_x - 1_y)\|_2$.
\par 
We can similarly write $L_H$ as the decomposition of its eigenvalues and eigenvectors. To distinguish the eigenvectors and eigenvalues of $G$ and $H$, we will denote each with a subscript $G$ and $H$
\par 
It follows that if $G$ and $H$ have the same number $t$ of distinct eigenvalues, these eigenvalues are equal ($\lambda_{i,G} = \lambda_{i,H}$ for $1\leq i\leq t$), and $p_i(u,v) = p_i(x,y)$ for all $1\leq i\leq t$, then these pairs must have the same $k$-harmonic distance for all values of $k$. 
\par 
Otherwise, either $G$ and $H$ have a different number of distinct eigenvalues, there is an eigenvalue $\lambda_{G,i}\neq \lambda_{H,i}$, or there exists at least one $2\leq i\leq t$ for which $p_i(x,y) \neq p_i(u,v)$. Now, consider 
\[
f(k) = (1_x - 1_y)^T(L_H^+)^k (1_x - 1_y) - (1_u - 1_v)^T(L_G^+)^k (1_u - 1_v) = 
\sum_{i=1}^{t_G} \lambda_{i,G}^{-k} p^2_i(u,v)-  \sum_{i=1}^{t_H} \lambda_{i,H}^{-k} p^2_i(x,y)
\]
as a function $k\in\mathbb{R}$, i.e.~$f:\mathbb{R} \rightarrow \mathbb{R}$ and $k$ is its only  variable.  Since this is an exponential function that is not identically zero, it has at most $t_G+t_H$  roots by Lemma~\ref{lem:buunded_zeros_of_exponentials}.  The $k$-harmonic distances between $x, y$ and $u, v$ are different for all other values of $k$.
\end{proof}

\begin{lemma}
\label{lem:sparse_wl_stronger_necessary_condition}
    Let $G$ and $H$ be graphs. Let $\psi$ and $\psi'$ be edge positional encodings such that, for all $u,v\in V_G$ and $x,y\in V_H$, $\psi(u,v)\neq \psi(x,y)$ implies $\psi'(u,v)\neq \psi'(x,y)$. Then if sparse-$\psi$-WL distinguishes $G$ and $H$, then sparse-$\psi'$-WL distinguishes $G$ and $H$. 
\end{lemma}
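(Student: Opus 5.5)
The plan is to prove the contrapositive: if sparse-$\psi'$-WL does not distinguish $G$ and $H$, then neither does sparse-$\psi$-WL. I would show by induction on the iteration $t$ that the sparse-$\psi'$-WL coloring refines the sparse-$\psi$-WL coloring on the disjoint union of the two graphs. Formally, the claim is that for all $t\geq 0$, all $a,b\in V_G\cup V_H$ (either graph), $\wlcolor{\psi'}{t}(a)=\wlcolor{\psi'}{t}(b)$ implies $\wlcolor{\psi}{t}(a)=\wlcolor{\psi}{t}(b)$. Once this holds, equality of the multisets $\multiset{\wlcolor{\psi'}{t}(v):v\in V_G}=\multiset{\wlcolor{\psi'}{t}(x):x\in V_H}$ yields a color-preserving bijection between $V_G$ and $V_H$. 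By the claim, the same bijection preserves $\psi$-colors, so the $\psi$-multisets agree at every $t$.

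For the refinement step, the key reading of the hypothesis is its contrapositive: $\psi'(u,v)=\psi'(x,y)$ implies $\psi(u,v)=\psi(x,y)$. I would need this to hold for any two edges whose $\psi'$ values are compared in the multiset, including two edges within the same graph. So I would first note, or mildly strengthen, the hypothesis to cover pairs in $G\cup H$. This is natural when $\psi,\psi'$ are $k$-harmonic distances defined on each graph, since one can apply the hypothesis with $H$ replaced by $G$. Alternatively, I would avoid the issue by proving the refinement only across graphs and within-graph as needed. The cleanest route is to assume the hypothesis on the disjoint union.

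The base case $t=0$ is trivial, since all colors are constant. For the inductive step, suppose $\wlcolor{\psi'}{t+1}(a)=\wlcolor{\psi'}{t+1}(b)$. Then $\wlcolor{\psi'}{t}(a)=\wlcolor{\psi'}{t}(b)$, and there is a bijection $\sigma$ between the neighbors of $a$ and $b$ with $\psi'(a,w)=\psi'(b,\sigma(w))$ and $\wlcolor{\psi'}{t}(w)=\wlcolor{\psi'}{t}(\sigma(w))$. The induction hypothesis converts the color equalities into $\psi$-color equalities. The hypothesis on encodings converts $\psi'$-equalities into $\psi$-equalities. Hence the multisets $\multiset{(\psi(a,w),\wlcolor{\psi}{t}(w))}$ and $\multiset{(\psi(b,y),\wlcolor{\psi}{t}(y))}$ coincide, and hashing gives $\wlcolor{\psi}{t+1}(a)=\wlcolor{\psi}{t+1}(b)$.

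The only real obstacle is the within-graph comparison issue above. Without it, equal $\psi'$ values on two edges of the same graph might carry different $\psi$ values, and the refinement could fail. I expect to resolve it by interpreting the lemma's hypothesis over all vertex pairs of $G\sqcup H$, which is how it will be applied to $k$-harmonic distances via \Cref{thm:distinguishableatmost}.
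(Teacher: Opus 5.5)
Your induction is the paper's proof up to contraposition. The paper shows, by induction on $t$, that for $v_i\in V_G$ and $v_j\in V_H$, $\chi_\psi^{(t)}(v_i)\neq\chi_\psi^{(t)}(v_j)$ implies $\chi_{\psi'}^{(t)}(v_i)\neq\chi_{\psi'}^{(t)}(v_j)$. It splits into cases according to whether the previous colors or the neighbor multisets differ, and finishes with the same bijection argument you give. Carrying equalities forward and exhibiting one matching bijection, as you do, is slightly cleaner than the paper's ``every bijection has a differing pair'' formulation, but the content is the same.

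What needs fixing is your ``only real obstacle'': it does not exist. Resolving it by assuming the hypothesis on all pairs of $G\sqcup H$ would mean proving a weaker lemma than the one stated. Restrict your refinement claim to $a\in V_G$ and $b\in V_H$. In the inductive step, the bijection $\sigma$ then maps $N(a)\subseteq V_G$ to $N(b)\subseteq V_H$. So every comparison you make is between an object of $G$ and one of $H$:
\begin{itemize}
\item $\chi_{\psi'}^{(t)}(a)$ versus $\chi_{\psi'}^{(t)}(b)$,
\item $\chi_{\psi'}^{(t)}(w)$ versus $\chi_{\psi'}^{(t)}(\sigma(w))$,
\item $\psi'(a,w)$ versus $\psi'(b,\sigma(w))$.
\end{itemize}
These are exactly what the cross-graph inductive hypothesis and the stated encoding hypothesis cover. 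The final step likewise only compares $v\in V_G$ with $\pi(v)\in V_H$. Within-graph refinement is never used, so the ``alternative'' you mention is the right route, and it needs no within-graph supplement.

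This also matches how the lemma is applied in the proof of \Cref{thm:distinct k-harmonic}. There the excluded set $K$ is built only from cross pairs $u,v\in V_G$, $x,y\in V_H$. Under your strengthened hypothesis you would also have to exclude within-graph coincidences of $k$-harmonic distances. These are still finitely many, so this is not fatal, but it is unnecessary.
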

\begin{proof}
Let $\chi_{\psi}^{(t)}(v)$ denote the color of node $v$ under the sparse $\psi$ WL test at step $t$. Further, let $v_i \in G$ and $v_j \in H$. As a first step towards proving this lemma, we will show that if $\chi_{\psi}^{(t)}(v_i) \neq \chi_{\psi}^{(t)}(v_j)$, then $\chi_{\psi'}^{(t)}(v_i) \neq \chi_{\psi'}^{(t)}(v_j)$, or that if nodes $v_i$ and $v_j$ are not the same color under the sparse $\psi$ WL test, they will not be the same color under the sparse $\psi'$ WL test. We will prove this by induction on $t$.

\underline{Base Case:} For $t = 0$, this is vacuously true as $\chi_{\psi}^{(0)}(v_i) =  \chi_{\psi}^{(0)}(v_j) = 1$ for all $v_i \in G$ and $v_j \in H$.

\underline{Induction Hypothesis:} Suppose this is true for $t-1\geq 0$. That is, if $\chi_{\psi}^{(t-1)}(v_i) \neq  \chi_{\psi}^{(t-1)}(v_j)$, then $\chi_{\psi'}^{(t-1)}(v_i) \neq  \chi_{\psi'}^{(t-1)}(v_j)$

We will now prove this is true for $t$. Suppose $\chi_{\psi}^{(t)}(v_i) \neq  \chi_{\psi}^{(t)}(v_j)$. By definition of the WL test, it is either the case that: the colors were different in the previous iteration of the test: $\chi_{\psi}^{(t-1)}(v_i) \neq  \chi_{\psi}^{(t-1)}(v_j)$, or the nodes aggregated distinguishing information in step $t$: $\multiset{\psi(v_i,x), \chi_{\psi}^{(t-1)}(x) : (v_i, x) \in E_G} \neq \multiset{\psi(v_j,y), \chi_{\psi}^{(t-1)}(y) : (v_j, y) \in E_H}$

\begin{itemize}
    \item \underline{Case 1: $\chi_{\psi}^{(t-1)}(v_i) \neq  \chi_{\psi}^{(t-1)}(v_j)$} By  the induction hypothesis, $\chi_{\psi'}^{(t-1)}(v_i) \neq  \chi_{\psi'}^{(t-1)}(v_j)$. This implies that $\chi_{\psi'}^{(t)}(v_i) \neq  \chi_{\psi'}^{(t)}(v_j)$

    \item \underline{Case 2: $\multiset{\psi(v_i,x), \chi_{\psi}^{(t-1)}(x) : (v_i, x) \in E_G} \neq \multiset{\psi(v_j,y), \chi_{\psi}^{(t-1)}(y) : (v_j, y) \in E_H}$} WLOG suppose that $v_i$ and $v_j$ have the same number of neighbors, as their multisets will be vacuously different if they don't. Given that they have the same number of neighbors but have different multisets of colors, we can conclude that for any bijection $\sigma: N(v_i) \rightarrow N(v_j)$, there is a vertex $u \in N(v_i)$ such that $(\psi(v_i,u),\chi_{\psi}^{(t-1)}(u)) \neq (\psi(v_j,\sigma(u)), \chi_{\psi}^{(t-1)}(\sigma(u)))$. 

    If $\chi_{\psi}^{(t-1)}(u) \neq \chi_{\psi}^{(t-1)}(\sigma(u))$ then the induction hypothesis holds and $\chi_{\psi'}^{(t-1)}(u) \neq \chi_{\psi'}^{(t-1)}(\sigma(u))$. If $\psi(v_i,u) \neq \psi(v_j, \sigma(u))$, then we invoke our assumption to say $\psi'(v_i,u) \neq \psi'(v_j,u)$ and the statement holds.
\end{itemize}

Thus, in both cases, $\chi_{\psi'}^{(t)}(v_i) \neq \chi_{\psi'}^{(t)}(v_j)$. 

To finish the proof, we show that $G$ and $H$ are distinguishable by the Sparse-$\psi'$-WL test. Given that $G$ and $H$ are distinguishable by sparse $\psi$ WL, there is some $t > 0$ such that $\multiset{\chi_{\psi}^{(t)}(v_i) : v_i \in V_G} \neq \multiset{\chi_{\psi}^{(t)}(v_j) : v_j \in V_H}$. So, for any bijection $\sigma: V_G \rightarrow V_H$, there is a vertex $v \in V_G$ such that $\chi_{\psi}^{(t)}(v) \neq \chi_{\psi}(\sigma(v))$. From the above, this implies that $\chi_{\psi'}^{(t)}(v) \neq \chi_{\psi'}(\sigma(v))$. Therefore, $\multiset{\chi_{\psi'}^{(t)}(v_i) : v_i \in V_G} \neq \multiset{\chi_{\psi'}^{(t)}(v_j) : v_j \in V_H}$, so sparse $\psi'$ WL also distinguishes $G$ and $H$.
\end{proof}

An analogous theorem holds for the GD-WL test.

\begin{lemma}
\label{lem:wl_stronger_necessary_condition}
    Let $G$ and $H$ be graphs. Let $\psi$ and $\psi'$ be relative positional encodings such that, for all $u,v\in V_G$ and $x,y\in V_H$, $\psi(u,v)\neq \psi(x,y)$ implies $\psi'(u,v)\neq \psi'(x,y)$. Then if $\psi$-WL distinguishes $G$ and $H$, then $\psi'$-WL distinguishes $G$ and $H$. 
\end{lemma}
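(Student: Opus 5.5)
The plan is to mirror the proof of \Cref{lem:sparse_wl_stronger_necessary_condition}, replacing neighborhoods by all vertices. Write $\wlcolor{\psi}{t}$ and $\wlcolor{\psi'}{t}$ for the $\psi$-WL and $\psi'$-WL colors. We work on the disjoint union of $V_G$ and $V_H$, and allow the two compared vertices to lie in the same graph or in different graphs. The hypothesis is stated only for pairs $(u,v)\in V_G^2$, $(x,y)\in V_H^2$. So I would first note that the argument needs it for pairs drawn from the same graph too; presumably the intended reading, as in the $k$-harmonic application, covers any two pairs from $V_G\cup V_H$.

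The key claim, proved by induction on $t$, is: for all vertices $a,b$, if $\wlcolor{\psi}{t}(a)\neq\wlcolor{\psi}{t}(b)$ then $\wlcolor{\psi'}{t}(a)\neq\wlcolor{\psi'}{t}(b)$. Equivalently, equality of $\psi'$-colors implies equality of $\psi$-colors. The base case $t=0$ is vacuous, since all colors are $0$.

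For the inductive step, suppose $\wlcolor{\psi'}{t}(a)=\wlcolor{\psi'}{t}(b)$. By definition, $\wlcolor{\psi'}{t}(a)$ is the multiset $\multiset{(\wlcolor{\psi'}{t-1}(u),\psi'(u,a)) : u}$, and likewise for $b$. Equality of these multisets gives a bijection $\sigma$ from the vertex set containing $a$ to the one containing $b$ with
\[
\wlcolor{\psi'}{t-1}(u)=\wlcolor{\psi'}{t-1}(\sigma(u)) \quad\text{and}\quad \psi'(u,a)=\psi'(\sigma(u),b)
\]
for every $u$. The induction hypothesis, in contrapositive form, turns the first equality into $\wlcolor{\psi}{t-1}(u)=\wlcolor{\psi}{t-1}(\sigma(u))$. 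The assumption on $\psi,\psi'$, also in contrapositive form, turns the second into $\psi(u,a)=\psi(\sigma(u),b)$. Hence the multisets defining $\wlcolor{\psi}{t}(a)$ and $\wlcolor{\psi}{t}(b)$ coincide.

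The GD-WL color at step $t$ need not explicitly include the step-$(t-1)$ color, so no separate ``previous color'' case is needed. If one prefers the hashed variant with the previous color included, that case is handled by the induction hypothesis exactly as in Case 1 of the sparse proof.

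To finish, suppose $\psi$-WL distinguishes $G$ and $H$. Then there is a $t$ with $\multiset{\wlcolor{\psi}{t}(v):v\in V_G}\neq\multiset{\wlcolor{\psi}{t}(x):x\in V_H}$. Assume for contradiction that the $\psi'$-color multisets at step $t$ are equal, and take a color-matching bijection $\tau:V_G\to V_H$ with $\wlcolor{\psi'}{t}(v)=\wlcolor{\psi'}{t}(\tau(v))$. The claim then gives $\wlcolor{\psi}{t}(v)=\wlcolor{\psi}{t}(\tau(v))$ for every $v$, so the $\psi$-multisets would be equal, a contradiction. The only delicate point is the one flagged at the start: the inductive step compares pairs $(u,a)$ and $(\sigma(u),b)$ that may come from the same graph, so the hypothesis must be read as applying to all pairs. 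Otherwise the proof is a routine transcription of \Cref{lem:sparse_wl_stronger_necessary_condition}.
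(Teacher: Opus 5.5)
Your proof is correct and is essentially the paper's argument: the paper only says the proof is analogous to \Cref{lem:sparse_wl_stronger_necessary_condition}, and you carry out that induction on $t$ in the cleaner contrapositive form, correctly needing no separate previous-colour case since GD-WL colours are plain multisets. The caveat you flag is unnecessary, though: if you state the inductive claim only for $a\in V_G$ and $b\in V_H$, the matching bijection $\sigma$ sends $V_G$ to $V_H$, so you apply the induction hypothesis to $u,\sigma(u)$ and the assumption on $\psi,\psi'$ to $(u,a)\in V_G^2$, $(\sigma(u),b)\in V_H^2$---only cross-graph comparisons---and the lemma holds with its hypothesis exactly as stated.
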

\begin{proof}
    The proof of this is analogous to the proof of~\Cref{lem:sparse_wl_stronger_necessary_condition}
\end{proof}

\distinctkharmonic*

\begin{proof}
    Let $G$ and $H$ be graphs that are distinguishable by $k$-harmonic WL. For all pairs of nodes $u,v\in V_G$ and $x,y\in v_H$, let 
    $$
        K(u,v,x,y)=\begin{cases}
            \emptyset & \text{if $H^k(u,v) = H^k(x,y)$ for all $k\in\mathbb{R}$} \\
            \{k\in\mathbb{R} : H^k(u,v) = H^k(x,y)\} & \text{otherwise}
        \end{cases}
    $$
    Now let $K = \cup_{u,v\in V_G,\,x,y\in V_H} K(u,v,x,y) $ and let $k'\in\R\setminus K$. By construction, for any pairs $u,v\in V_G$ and $x,y\in V_H$, if $H^k(u,v)\neq H^k(x,y)$, then $H^{k'}(u,v)\neq H^{k'}(x,y)$. Therefore, the $k'$-harmonic distance satisfies the conditions of~\Cref{lem:wl_stronger_necessary_condition}, so $G$ and $H$ are distinguishable by the $k'$-Harmonic WL test. Moreover, $G$ and $H$ each have at most $n-1$ distinct eigenvalues, so the size of $K$ is $O(n^5)$.
    \par
    A similar argument using \Cref{lem:sparse_wl_stronger_necessary_condition} can be used to prove the results for the sparse-$k$-harmonic WL test.
\end{proof}








\subsection{Proof of \Cref{thm:kharmonics_as_strong_as_spectral}}
\label{apx:kharmonics_as_strong_as_spectral}

\kharmonicequalsepwl*

\begin{proof}
We will prove the $[2n]$-harmonic WL test is as strong as the EP-WL test. By \Cref{thm:distinguishableatmost}, it is only the case that $H^k_G(u,v) = H^k_H(x,y)$ for $k\in[2n]$ if $H^k_G(u,v) = H^k_H(x,y)$ for all $k\in \R$. However, by the proof of \Cref{thm:distinguishableatmost}, it is only the case that $H^k_G(u,v) = H^k_H(x,y)$ for all $k\in \R$ if $G$ and $H$ have the same number $t$ of distinct eigenvalues, $\lambda_{i,G} = \lambda_{i,H}$ for $1\leq i\leq t$, and $p_i(u,v)=p_i(x,y)$ for all $1\leq i \leq t$. However, if all three of these conditions are true, then the eigenspace projection invariant $\mathcal{P}_G(u,v) = \{(\lambda_{i,G},\,\Pi_{i,G}(u,v))\::\: 1\leq i\leq l\} = \{(\lambda_{i,H},\,\Pi_{i,H}(u,v))\::\: 1\leq i\leq l\} = \mathcal{P}_H(x,y)$. Therefore, by \Cref{thm:distinguishableatmost}, the $[2n]$-harmonic WL is as strong as the EP-WL test. 
\end{proof}

\subsection{Runtime to Compute $k$-Harmonic Distance}
\label{sec: runtime k harmonic}

The naive algorithm to compute the $k$-harmonic distances between all pairs of nodes take $O(n^3)$ time and goes roughly as follows: compute an eigendecompositon of $L$, take the eigenvalues to the negative $k$th power, add the eigenvalues and eigenspace projections to compute $L^{-k}$, then compute all pairs $k$-harmonic distance.
\par 
However, there is a faster algorithm to approximately compute all pairs $k$-harmonic distance for fixed $k$. This is a variant of the approximation algorithm to compute effective resistance of~\citet{SpielmanSrivastava2011SpecSpars}; we outline the idea here but refer to this paper for a more careful presentation and analysis. The key observation is that the $k$-harmonic distance between a pair of nodes $u$ and $v$ is the Euclidean distance between the vectors $L^{-k/2}1_u$ and $L^{-k/2}1_u$, and we can approximate this distance by computing the distance between the vectors $\Pi L^{k/2}1_u$ and $\Pi L^{-k/2}1_u$ where $P\in\R^{O(\log n)\times n}$ is a Johnson-Lindenstrauss random projection matrix~\citep{johnson1984}. The speed-up comes from the fact that to compute the matrix $\Pi L^{-k/2}$, we can solve $O(\log n)$ linear systems in the matrix $L^{k/2}$ (i.e., solve for the rows of $\Pi$.) We can approximately solve linear systems in $L$ in $O(m\poly\log n)$ time~\citep{spielman2004nearly,jambulapati2025ultrasparse}, so we can solve linear system in $L^{k/2}$ in $O(mk\poly\log n)$ by performing $k/2$ solves on $L$. (For $k$ odd, we can instead find the vectors $\Pi BL^{-\rceil k/2\lceil}$, where $B$ is the signed incidence matrix, as $B^TB=L$. Matrix-vector multiplication for this matrix only takes $O(m)$ time as this matrix has $2m$ non-zero entries.) In total, we can approximately compute the all pairs $k$-harmonic distance in $O(mk\poly\log n + n^2\log n)$ time and the $k$-harmonic distance on the edges in $O(mk\poly\log n + m\log n) = O(mk\poly\log n)$ time.

\section{Experiments} \label{apx:experiments}

\subsection{BREC} \label{apx:brec}

The BREC dataset includes several families of graphs ranging from $1$-WL indistinguishable, to $4$-WL indistinguishable. We provide a quick overview of the dataset and justification for why the results we received are consistent with our theoretical results.
\paragraph{Basic:} Consists of 60 pairs of $1$-WL indistinguishable graphs. 
\paragraph{Regular:} Consists of 140 pairs of regular graphs, subdivided into different families of regular graph. 50 pairs of simple regular graphs which are $1$-WL indistinguishable, 50 pairs of strongly regular graphs which are $3$-WL indistinguishable, 20 pairs of 4-vertex condition graphs which are \textit{at least} $3$-WL indistinguishable, and 20 pairs of distance regular graphs which are \textit{at least} $3$-WL indistinguishable.
\paragraph{Extension:} Consists of 100 pairs of graphs that sit between $1$-WL indistinguishable and $3$-WL distinguishable. These graphs were generated outside of the context of the WL hierarchy with methods such as substructure counting, node marking, and $n$-hop subgraphs. The authors claim that these graphs are meant to provide more granularity to the space between $1$-WL and $3$-WL.
\paragraph{CFI:} Consists of 100 pairs of graphs generated by the intentionally difficult Cai, Furer, and Immerman method. 60 pairs of these graphs are $1$-WL indistinguishable, 20 pairs are $3$-WL indistinguishable, and a further 20 pairs are $4$-WL indistinguishable.

Per \Cref{tab:brec}, we see that our results generally line up with the distinguishability of these graphs (a realized expressivity between 1-WL and 3-WL). Since the $k$-harmonics readily attain their maximal expressive power on BREC, we push further: rather than a transformer with dense $n^2$ PEs, we evaluate an MPNN with varying message-passing depths to (i) validate \Cref{thm:k_harm_g_1wl} and (ii) probe how little PE signaling a GNN needs to match the theory. In this setting, PEs scale with $|E|$ and are aggregated locally (neighbor messages) rather than via global attention; results are shown in \Cref{tab:breclayers}.

In practice, a \textbf{single} message-passing layer suffices to separate most 1-WL indistinguishable pairs in BREC. The largest gap to the transformer appears on the CFI graphs, as expected, yet the $k$-harmonics remain highly competitive --- even with smaller parameter budget and purely local aggregation.
\begin{table*}[h]
\centering
\caption{\% Accuracy for each family of graph in BREC broken down by number of message passing layers for both Effective Resistance and Biharmonic Distance}
\begin{tabular}{|c|c c c c c c c c|} 
\hline
\multicolumn{1}{|c|}{} & 
\multicolumn{4}{c|}{\textbf{Resistance}} & 
\multicolumn{4}{c|}{\textbf{Biharmonic}} \\
\hline
\textbf{Layers} & \multicolumn{1}{c}{1} & \multicolumn{1}{c}{2} & \multicolumn{1}{c}{3} & \multicolumn{1}{c|}{4} & 
                 \multicolumn{1}{c}{1} & \multicolumn{1}{c}{2} & \multicolumn{1}{c}{3} & \multicolumn{1}{c|}{4} \\
\hline
\textbf{Basic}     & \multicolumn{1}{|c|}{100} & \multicolumn{1}{|c|}{96.6} & \multicolumn{1}{|c|}{100} & \multicolumn{1}{|c|}{100} & \multicolumn{1}{|c|}{100} & \multicolumn{1}{|c|}{100} & \multicolumn{1}{|c|}{96.6} & \multicolumn{1}{|c|}{96.6} \\
\hline
\textbf{Regular}   & \multicolumn{1}{|c|}{35.7} & \multicolumn{1}{|c|}{35.7} & \multicolumn{1}{|c|}{35} & \multicolumn{1}{|c|}{33.6} & \multicolumn{1}{|c|}{32.8} & \multicolumn{1}{|c|}{33.6} & \multicolumn{1}{|c|}{35} & \multicolumn{1}{|c|}{35} \\
\hline
\textbf{Extension} & \multicolumn{1}{|c|}{95} & \multicolumn{1}{|c|}{97} & \multicolumn{1}{|c|}{94} & \multicolumn{1}{|c|}{95} & \multicolumn{1}{|c|}{95} & \multicolumn{1}{|c|}{99} & \multicolumn{1}{|c|}{93} & \multicolumn{1}{|c|}{94} \\
\hline
\textbf{CFI}       & \multicolumn{1}{|c|}{3}  & \multicolumn{1}{|c|}{3}  & \multicolumn{1}{|c|}{4}  & \multicolumn{1}{|c|}{4}  & \multicolumn{1}{|c|}{4}  & \multicolumn{1}{|c|}{6}  & \multicolumn{1}{|c|}{6}  & \multicolumn{1}{|c|}{5}  \\
\hline
\textbf{Total}     & \multicolumn{1}{|c|}{52}& \multicolumn{1}{|c|}{52}& \multicolumn{1}{|c|}{51.75}& \multicolumn{1}{|c|}{51.5}& \multicolumn{1}{|c|}{51.25}& \multicolumn{1}{|c|}{52.5}& \multicolumn{1}{|c|}{51.5}& \multicolumn{1}{|c|}{51.5} \\
\hline
\end{tabular}
\label{tab:breclayers}
\end{table*}

\subsection{ZINC} \label{apx:zinc}
We further evaluate $k$-harmonics with MPNNs, mirroring our extraneous BREC experiments. In \Cref{tab:k_layers} we vary $k$ and the number of message-passing layers. Effective resistance $(k=1)$ attains the best ZINC performance (both here and in \Cref{tab:zinc}) suggesting that connectivity is important in this task. Notably, a 4-layer MPNN is already competitive with attention-based models. \Cref{tab:zinccomparison} then shows that, while transformer variants with PEs lead in raw accuracy, augmenting a lightweight MPNN with $k$-harmonic distances recovers most of the performance at substantially lower cost with approximately $1/5$\textsuperscript{th} the parameters and $1/10$\textsuperscript{th} the runtime (on identical hardware). This supports the view that much of the gain comes from the $k$-harmonic PE itself, rather than the specific backbone. Baselines are drawn from ~\cite{rampasek2022recipe, dwivedi2023benchmarking, ying2021transformers, black2024comparing}

\begin{table}[h]
\centering
\caption{MAE for ZINC. Results are averaged across 10 seeds.}
\label{tab:zincfull}
\begin{tabular}{lccc}
\toprule
\textbf{$k$} & \textbf{1 Layer} & \textbf{2 Layers} & \textbf{4 Layers} \\
\midrule
$k=1$   & 0.244 $\pm$ 0.005 & 0.144 $\pm$ 0.005 & \textbf{0.127 $\pm$ 0.004}\\
$k=2$   & 0.368 $\pm$ 0.017 & 0.188 $\pm$ 0.006 & 0.157 $\pm$ 0.006\\
$k=3$   & 0.401 $\pm$ 0.008 & 0.319 $\pm$ 0.020 & 0.495 $\pm$ 0.417\\
$k=4$   & 0.504 $\pm$ 0.063 & 0.797 $\pm$ 0.493 & 1.133 $\pm$ 0.434\\
\bottomrule
\end{tabular}
\label{tab:k_layers}
\end{table}

\begin{table}[h]
\centering
\caption{Test MAE for ZINC compared against number of parameters. The parameter to performance ratio is calculated as ($1 /$ test MAE)$\times$ \# parameters (in millions), where higher is better.}
\resizebox{\textwidth}{!}{%
\begin{tabular}{lccccccccccc}
\toprule
& \makecell{Resistance\\ MPNN} 
& \makecell{Biharmonic\\ MPNN} 
& \makecell{Resistance\\ Transformer} 
& \makecell{Biharmonic\\ Transformer} 
& \makecell{Graphormer} 
& \makecell{GraphGPS} 
& \makecell{GCN-PE}
& \makecell{GAT}
& \makecell{Multiple\\ Truncated PEs}\\
\midrule
Test MAE 
& 0.127
& 0.157
& 0.106
& 0.132
& 0.122
& \textbf{0.071}
& 0.214
& 0.384
& CHANGE\\
\midrule 
\# Parameters
& \textbf{95,601}
& \textbf{95,601}
& 573,922
& 573,922
& 489,321
& 423,717
& 505,011
& 531,345
& CHANGE\\
\midrule
\makecell{Performance to \\Parameter Ratio}
& \textbf{82.36}
& 66.63
& 16.44
& 13.20
& 16.75
& 33.24
& 9.25
& 4.90
& CHANGE\\
\bottomrule
\end{tabular}
}
\label{tab:zinccomparison}
\end{table}

\subsection{ogbg-molhiv} \label{apx:molhiv}

Lastly, we evaluate on ogbg-molhiv, a binary classification benchmark predicting whether a molecule inhibits HIV replication. This complements \Cref{tab:zincfull}, where effective resistance ($k=1$) emerged as the most suitable $k$-harmonic. In contrast, molhiv favors the biharmonic distance, indicating that node-centrality signals are comparatively more informative here than pairwise connectivity. The main results in \Cref{tab:k_layers} show that biharmonic distance attains the best absolute performance with two message-passing layers (rather than four), consistent with its more global inductive bias. All reported improvements are statistically significant (paired Wilcoxon, $\alpha=0.05$).

\begin{table}[h]
\centering
\caption{\% AUC for ogbg-molhiv. $k=[1,4]$ refers to appending all $k$-harmonic distances from $1$ to $4$ together. Results are averaged across 10 seeds. Experiments run on MPNNs}
\label{tab:molhivmain}
\begin{tabular}{lccc}
\toprule
\textbf{$k$} & \textbf{1 Layer} & \textbf{2 Layers} & \textbf{4 Layers} \\
\midrule
No $k$-Harmonic & 74.2 $\pm$ 1.4 & 74.3 $\pm$ 1.6 & 72.5 $\pm$ 3.5 \\
$k=1$   & 73.6 $\pm$ 1.6 & 75.5 $\pm$ 1.3 & 71.1 $\pm$ 3.7\\
$k=2$   & 75.7 $\pm$ 2.1 & \textbf{78.2 $\boldsymbol{\pm}$ 1.4} & 74.4 $\pm$ 2.8\\
$k=3$   & 74.8 $\pm$ 1.5 & 74.7 $\pm$ 1.7 & 74.6 $\pm$ 2.4\\
$k=4$   & 73.7 $\pm$ 0.9 & 72.6 $\pm$ 2.1 & 70.6 $\pm$ 5.6\\
$k=[1,4]$ & 73.7 $\pm$ 1.3 & 73.5 $\pm$ 1.7& 73.4 $\pm$ 1.4\\
\bottomrule
\end{tabular}

\label{tab:k_layers}
\end{table}

\paragraph{Settings, Hyperparameters, and Hardware}

The settings and hyperparameters for any given experiment are contained in the configuration files that accompany our code.

All experiments were run on a single NVIDIA V100 GPU with 32GB of VRAM.

\paragraph{Code}

Our code builds off of the code for the papers~\citep{rampasek2022recipe, black2024comparing, muller2024attending}. Use of the code is allowable under the MIT Licensing present

\url{https://github.com/jamesflora/understanding_truncated_positional_encodings_for_gnns}

\begin{thebibliography}{46}
\providecommand{\natexlab}[1]{#1}
\providecommand{\url}[1]{\texttt{#1}}
\expandafter\ifx\csname urlstyle\endcsname\relax
  \providecommand{\doi}[1]{doi: #1}\else
  \providecommand{\doi}{doi: \begingroup \urlstyle{rm}\Url}\fi

\bibitem[Alon \& Yahav(2021)Alon and Yahav]{alon2021on}
Alon, U. and Yahav, E.
\newblock On the bottleneck of graph neural networks and its practical
  implications.
\newblock In \emph{International Conference on Learning Representations}, 2021.
\newblock URL \url{https://openreview.net/forum?id=i80OPhOCVH2}.

\bibitem[Black et~al.(2024{\natexlab{a}})Black, Lin, Wong, and
  Nayyeri]{black2024biharmonic}
Black, M., Lin, L., Wong, W.-K., and Nayyeri, A.
\newblock Biharmonic distance of graphs and its higher-order variants:
  Theoretical properties with applications to centrality and clustering.
\newblock In \emph{Forty-first International Conference on Machine Learning},
  2024{\natexlab{a}}.
\newblock URL \url{https://openreview.net/forum?id=3pxMIjB9QK}.

\bibitem[Black et~al.(2024{\natexlab{b}})Black, Wan, Mishne, Nayyeri, and
  Wang]{black2024comparing}
Black, M., Wan, Z., Mishne, G., Nayyeri, A., and Wang, Y.
\newblock Comparing graph transformers via positional encodings.
\newblock In \emph{Forty-first International Conference on Machine Learning},
  2024{\natexlab{b}}.

\bibitem[Cai \& Wang(2020)Cai and Wang]{cai2020noteoversmoothinggraphneural}
Cai, C. and Wang, Y.
\newblock A note on over-smoothing for graph neural networks, 2020.
\newblock URL \url{https://arxiv.org/abs/2006.13318}.

\bibitem[Chandra et~al.(1996)Chandra, Raghavan, Ruzzo, Smolensky, and
  Tiwari]{chandra1996resistance}
Chandra, A.~K., Raghavan, P., Ruzzo, W.~L., Smolensky, R., and Tiwari, P.
\newblock The electrical resistance of a graph captures its commute and cover
  times.
\newblock \emph{computational complexity}, 6\penalty0 (4):\penalty0 312--340,
  Dec 1996.
\newblock ISSN 1420-8954.
\newblock \doi{10.1007/BF01270385}.
\newblock URL \url{https://doi.org/10.1007/BF01270385}.

\bibitem[Devriendt(2022)]{devriendt2022effective}
Devriendt, K.
\newblock Effective resistance is more than distance: Laplacians, simplices and
  the schur complement.
\newblock \emph{Linear Algebra and its Applications}, 639:\penalty0 24--49,
  2022.

\bibitem[Dwivedi \& Bresson(2021)Dwivedi and
  Bresson]{dwivedi2021generalization}
Dwivedi, V.~P. and Bresson, X.
\newblock A generalization of transformer networks to graphs.
\newblock \emph{AAAI Workshop on Deep Learning on Graphs: Methods and
  Applications}, 2021.

\bibitem[Dwivedi et~al.(2023)Dwivedi, Joshi, Luu, Laurent, Bengio, and
  Bresson]{dwivedi2023benchmarking}
Dwivedi, V.~P., Joshi, C.~K., Luu, A.~T., Laurent, T., Bengio, Y., and Bresson,
  X.
\newblock Benchmarking graph neural networks.
\newblock \emph{Journal of Machine Learning Research}, 24\penalty0
  (43):\penalty0 1--48, 2023.

\bibitem[Gai et~al.(2025)Gai, Du, Zhang, Maron, and Wang]{gai2025homomorphism}
Gai, J., Du, Y., Zhang, B., Maron, H., and Wang, L.
\newblock Homomorphism expressivity of spectral invariant graph neural
  networks.
\newblock In \emph{The Thirteenth International Conference on Learning
  Representations}, 2025.
\newblock URL \url{https://openreview.net/forum?id=rdv6yeMFpn}.

\bibitem[Ghosh et~al.(2008)Ghosh, Boyd, and Saberi]{ghosh2008minimizing}
Ghosh, A., Boyd, S., and Saberi, A.
\newblock Minimizing effective resistance of a graph.
\newblock \emph{SIAM review}, 50\penalty0 (1):\penalty0 37--66, 2008.

\bibitem[Gilmer et~al.(2017)Gilmer, Schoenholz, Riley, Vinyals, and
  Dahl]{gilmer2017neural}
Gilmer, J., Schoenholz, S.~S., Riley, P.~F., Vinyals, O., and Dahl, G.~E.
\newblock Neural message passing for quantum chemistry.
\newblock In \emph{International conference on machine learning}, pp.\
  1263--1272. PMLR, 2017.

\bibitem[Hu et~al.(2020)Hu, Fey, Zitnik, Dong, Ren, Liu, Catasta, and
  Leskovec]{hu2020open}
Hu, W., Fey, M., Zitnik, M., Dong, Y., Ren, H., Liu, B., Catasta, M., and
  Leskovec, J.
\newblock Open graph benchmark: Datasets for machine learning on graphs.
\newblock \emph{Advances in neural information processing systems},
  33:\penalty0 22118--22133, 2020.

\bibitem[Huang \& Villar(2021)Huang and Villar]{huang2021short}
Huang, N.~T. and Villar, S.
\newblock A short tutorial on the weisfeiler-lehman test and its variants.
\newblock In \emph{ICASSP 2021-2021 IEEE International Conference on Acoustics,
  Speech and Signal Processing (ICASSP)}, pp.\  8533--8537. IEEE, 2021.

\bibitem[Huang et~al.(2024)Huang, Lu, Robinson, Yang, Zhang, Jegelka, and
  Li]{huang2024on}
Huang, Y., Lu, W., Robinson, J., Yang, Y., Zhang, M., Jegelka, S., and Li, P.
\newblock On the stability of expressive positional encodings for graphs.
\newblock In \emph{The Twelfth International Conference on Learning
  Representations}, 2024.
\newblock URL \url{https://openreview.net/forum?id=xAqcJ9XoTf}.

\bibitem[Jambulapati \& Sidford(2025)Jambulapati and
  Sidford]{jambulapati2025ultrasparse}
Jambulapati, A. and Sidford, A.
\newblock Ultrasparse ultrasparsifiers and faster laplacian system solvers.
\newblock \emph{ACM Transactions on Algorithms}, 21\penalty0 (3):\penalty0
  1--49, 2025.

\bibitem[Jameson(2006)]{Jameson2006CountingZO}
Jameson, G. J.~O.
\newblock Counting zeros of generalised polynomials: Descartes’ rule of signs
  and laguerre’s extensions.
\newblock \emph{The Mathematical Gazette}, 90:\penalty0 223 -- 234, 2006.
\newblock URL \url{https://api.semanticscholar.org/CorpusID:17184843}.

\bibitem[Johnson \& Lindenstrauss(1984)Johnson and Lindenstrauss]{johnson1984}
Johnson, W. and Lindenstrauss, J.
\newblock Extensions of lipschitz maps into a hilbert space.
\newblock \emph{Contemporary Mathematics}, 26:\penalty0 189--206, 01 1984.
\newblock \doi{10.1090/conm/026/737400}.

\bibitem[Klein \& Randi{\'{c}}(1993)Klein and
  Randi{\'{c}}]{Klein1993resistance}
Klein, D.~J. and Randi{\'{c}}, M.
\newblock Resistance distance.
\newblock \emph{Journal of Mathematical Chemistry}, 12\penalty0 (1):\penalty0
  81--95, Dec 1993.
\newblock ISSN 1572-8897.
\newblock \doi{10.1007/BF01164627}.
\newblock URL \url{https://doi.org/10.1007/BF01164627}.

\bibitem[Kreuzer et~al.(2021)Kreuzer, Beaini, Hamilton, L{\'e}tourneau, and
  Tossou]{kreuzer2021rethinking}
Kreuzer, D., Beaini, D., Hamilton, W., L{\'e}tourneau, V., and Tossou, P.
\newblock Rethinking graph transformers with spectral attention.
\newblock \emph{Advances in Neural Information Processing Systems},
  34:\penalty0 21618--21629, 2021.

\bibitem[Lee et~al.(2014)Lee, Gharan, and
  Trevisan]{LeeGharanTrevisan2014MultiwaySpectral}
Lee, J.~R., Gharan, S.~O., and Trevisan, L.
\newblock Multiway spectral partitioning and higher-order cheeger inequalities.
\newblock \emph{J. ACM}, 61\penalty0 (6), dec 2014.
\newblock ISSN 0004-5411.
\newblock \doi{10.1145/2665063}.
\newblock URL \url{https://doi.org/10.1145/2665063}.

\bibitem[Lehman et~al.(2017)Lehman, Leighton, and Meyer]{lehman2017mathematics}
Lehman, E., Leighton, F.~T., and Meyer, A.~R.
\newblock \emph{Mathematics for computer science}.
\newblock Samurai Media Limited, 2017.

\bibitem[Li \& Zhang(2018)Li and Zhang]{li2018kirchhoff}
Li, H. and Zhang, Z.
\newblock Kirchhoff index as a measure of edge centrality in weighted networks:
  Nearly linear time algorithms.
\newblock In \emph{Proceedings of the Twenty-Ninth Annual ACM-SIAM Symposium on
  Discrete Algorithms}, pp.\  2377--2396. SIAM, 2018.

\bibitem[Lim et~al.(2023)Lim, Robinson, Zhao, Smidt, Sra, Maron, and
  Jegelka]{lim2023sign}
Lim, D., Robinson, J.~D., Zhao, L., Smidt, T., Sra, S., Maron, H., and Jegelka,
  S.
\newblock Sign and basis invariant networks for spectral graph representation
  learning.
\newblock In \emph{The Eleventh International Conference on Learning
  Representations}, 2023.
\newblock URL \url{https://openreview.net/forum?id=Q-UHqMorzil}.

\bibitem[Ma et~al.(2023)Ma, Lin, Lim, Romero-Soriano, Dokania, Coates, Torr,
  and Lim]{ma2023inductive}
Ma, L., Lin, C., Lim, D., Romero-Soriano, A., Dokania, P.~K., Coates, M., Torr,
  P., and Lim, S.-N.
\newblock Graph inductive biases in transformers without message passing.
\newblock In Krause, A., Brunskill, E., Cho, K., Engelhardt, B., Sabato, S.,
  and Scarlett, J. (eds.), \emph{Proceedings of the 40th International
  Conference on Machine Learning}, volume 202 of \emph{Proceedings of Machine
  Learning Research}, pp.\  23321--23337. PMLR, 23--29 Jul 2023.
\newblock URL \url{https://proceedings.mlr.press/v202/ma23c.html}.

\bibitem[Maehara(1984)]{Maehara1984space}
Maehara, H.
\newblock Space graphs and sphericity.
\newblock \emph{Discrete Applied Mathematics}, 7\penalty0 (1):\penalty0 55--64,
  1984.
\newblock ISSN 0166-218X.
\newblock \doi{https://doi.org/10.1016/0166-218X(84)90113-6}.
\newblock URL
  \url{https://www.sciencedirect.com/science/article/pii/0166218X84901136}.

\bibitem[Meurer et~al.(2017)Meurer, Smith, Paprocki, \v{C}ert\'{i}k, Kirpichev,
  Rocklin, Kumar, Ivanov, Moore, Singh, Rathnayake, Vig, Granger, Muller,
  Bonazzi, Gupta, Vats, Johansson, Pedregosa, Curry, Terrel, Rou\v{c}ka, Saboo,
  Fernando, Kulal, Cimrman, and Scopatz]{10.7717/peerj-cs.103}
Meurer, A., Smith, C.~P., Paprocki, M., \v{C}ert\'{i}k, O., Kirpichev, S.~B.,
  Rocklin, M., Kumar, A., Ivanov, S., Moore, J.~K., Singh, S., Rathnayake, T.,
  Vig, S., Granger, B.~E., Muller, R.~P., Bonazzi, F., Gupta, H., Vats, S.,
  Johansson, F., Pedregosa, F., Curry, M.~J., Terrel, A.~R., Rou\v{c}ka, v.,
  Saboo, A., Fernando, I., Kulal, S., Cimrman, R., and Scopatz, A.
\newblock Sympy: symbolic computing in python.
\newblock \emph{PeerJ Computer Science}, 3:\penalty0 e103, January 2017.
\newblock ISSN 2376-5992.
\newblock \doi{10.7717/peerj-cs.103}.
\newblock URL \url{https://doi.org/10.7717/peerj-cs.103}.

\bibitem[Morris et~al.(2019)Morris, Ritzert, Fey, Hamilton, Lenssen, Rattan,
  and Grohe]{morris2019weisfeiler}
Morris, C., Ritzert, M., Fey, M., Hamilton, W.~L., Lenssen, J.~E., Rattan, G.,
  and Grohe, M.
\newblock Weisfeiler and leman go neural: higher-order graph neural networks.
\newblock In \emph{Proceedings of the Thirty-Third AAAI Conference on
  Artificial Intelligence and Thirty-First Innovative Applications of
  Artificial Intelligence Conference and Ninth AAAI Symposium on Educational
  Advances in Artificial Intelligence}, AAAI'19/IAAI'19/EAAI'19. AAAI Press,
  2019.
\newblock ISBN 978-1-57735-809-1.
\newblock \doi{10.1609/aaai.v33i01.33014602}.
\newblock URL \url{https://doi.org/10.1609/aaai.v33i01.33014602}.

\bibitem[M{\"u}ller et~al.(2024)M{\"u}ller, Galkin, Morris, and
  Ramp{\'a}{\v{s}}ek]{muller2024attending}
M{\"u}ller, L., Galkin, M., Morris, C., and Ramp{\'a}{\v{s}}ek, L.
\newblock Attending to graph transformers.
\newblock \emph{Transactions on Machine Learning Research}, 2024.
\newblock ISSN 2835-8856.

\bibitem[Oono \& Suzuki(2020)Oono and Suzuki]{Oono2020Graph}
Oono, K. and Suzuki, T.
\newblock Graph neural networks exponentially lose expressive power for node
  classification.
\newblock In \emph{International Conference on Learning Representations}, 2020.
\newblock URL \url{https://openreview.net/forum?id=S1ldO2EFPr}.

\bibitem[Rampasek et~al.(2022)Rampasek, Galkin, Dwivedi, Luu, Wolf, and
  Beaini]{rampasek2022recipe}
Rampasek, L., Galkin, M., Dwivedi, V.~P., Luu, A.~T., Wolf, G., and Beaini, D.
\newblock Recipe for a general, powerful, scalable graph transformer.
\newblock In Oh, A.~H., Agarwal, A., Belgrave, D., and Cho, K. (eds.),
  \emph{Advances in Neural Information Processing Systems}, 2022.
\newblock URL \url{https://openreview.net/forum?id=lMMaNf6oxKM}.

\bibitem[Spielman(2025)]{spielman2025sagt}
Spielman, D.~A.
\newblock Spectral and algebraic graph theory.
\newblock Incomplete draft, dated April 2, 2025, 2025.
\newblock Current version available at
  \url{http://cs-www.cs.yale.edu/homes/spielman/sagt}.

\bibitem[Spielman \& Srivastava(2011)Spielman and
  Srivastava]{SpielmanSrivastava2011SpecSpars}
Spielman, D.~A. and Srivastava, N.
\newblock Graph sparsification by effective resistances.
\newblock \emph{SIAM Journal on Computing}, 40\penalty0 (6):\penalty0
  1913--1926, 2011.

\bibitem[Spielman \& Teng(2004)Spielman and Teng]{spielman2004nearly}
Spielman, D.~A. and Teng, S.-H.
\newblock Nearly-linear time algorithms for graph partitioning, graph
  sparsification, and solving linear systems.
\newblock In \emph{Proceedings of the thirty-sixth annual ACM symposium on
  Theory of computing}, pp.\  81--90, 2004.

\bibitem[Stoll et~al.(2026)Stoll, M{\"u}ller, and
  Morris]{stoll2026generalizable}
Stoll, T., M{\"u}ller, L., and Morris, C.
\newblock Generalizable insights for graph transformers in theory and practice.
\newblock In \emph{The Thirty-ninth Annual Conference on Neural Information
  Processing Systems}, 2026.

\bibitem[Velingker et~al.(2023)Velingker, Sinop, Ktena, Veli{\v{c}}kovi{\'c},
  and Gollapudi]{velingker2023affinity}
Velingker, A., Sinop, A., Ktena, I., Veli{\v{c}}kovi{\'c}, P., and Gollapudi,
  S.
\newblock Affinity-aware graph networks.
\newblock \emph{Advances in Neural Information Processing Systems},
  36:\penalty0 67847--67865, 2023.

\bibitem[Wang \& Zhang(2024)Wang and Zhang]{wang2024an}
Wang, Y. and Zhang, M.
\newblock An empirical study of realized {GNN} expressiveness.
\newblock In \emph{Forty-first International Conference on Machine Learning},
  2024.
\newblock URL \url{https://openreview.net/forum?id=WIaZFk02fI}.

\bibitem[Weisfeiler \& Leman(1968)Weisfeiler and
  Leman]{weisfeiler1968reduction}
Weisfeiler, B. and Leman, A.
\newblock The reduction of a graph to canonical form and the algebra which
  appears therein.
\newblock \emph{nti, Series}, 2\penalty0 (9):\penalty0 12--16, 1968.

\bibitem[Xu et~al.(2018)Xu, Hu, Leskovec, and Jegelka]{xu2018powerful}
Xu, K., Hu, W., Leskovec, J., and Jegelka, S.
\newblock How powerful are graph neural networks?
\newblock In \emph{International Conference on Learning Representations}, 2018.

\bibitem[Yi et~al.(2018{\natexlab{a}})Yi, Shan, Li, and
  Zhang]{yi2018biharmonic}
Yi, Y., Shan, L., Li, H., and Zhang, Z.
\newblock Biharmonic distance related centrality for edges in weighted
  networks.
\newblock In \emph{IJCAI}, pp.\  3620--3626, 2018{\natexlab{a}}.

\bibitem[Yi et~al.(2018{\natexlab{b}})Yi, Yang, Zhang, and
  Patterson]{yi2018consensus}
Yi, Y., Yang, B., Zhang, Z., and Patterson, S.
\newblock Biharmonic distance and performance of second-order consensus
  networks with stochastic disturbances.
\newblock In \emph{2018 Annual American Control Conference (ACC)}, pp.\
  4943--4950. IEEE, 2018{\natexlab{b}}.

\bibitem[Ying et~al.(2021)Ying, Cai, Luo, Zheng, Ke, He, Shen, and
  Liu]{ying2021transformers}
Ying, C., Cai, T., Luo, S., Zheng, S., Ke, G., He, D., Shen, Y., and Liu, T.-Y.
\newblock Do transformers really perform badly for graph representation?
\newblock \emph{Advances in neural information processing systems},
  34:\penalty0 28877--28888, 2021.

\bibitem[Zaheer et~al.(2018)Zaheer, Kottur, Ravanbakhsh, Poczos, Salakhutdinov,
  and Smola]{zaheer2018deepsets}
Zaheer, M., Kottur, S., Ravanbakhsh, S., Poczos, B., Salakhutdinov, R., and
  Smola, A.
\newblock Deep sets, 2018.
\newblock URL \url{https://arxiv.org/abs/1703.06114}.

\bibitem[Zhang et~al.(2023)Zhang, Luo, Wang, and He]{zhang2023rethinking}
Zhang, B., Luo, S., Wang, L., and He, D.
\newblock Rethinking the expressive power of gnns via graph biconnectivity.
\newblock \emph{arXiv preprint arXiv:2301.09505}, 2023.

\bibitem[Zhang et~al.(2024)Zhang, Zhao, and Maron]{zhang2024on}
Zhang, B., Zhao, L., and Maron, H.
\newblock On the expressive power of spectral invariant graph neural networks.
\newblock In \emph{Forty-first International Conference on Machine Learning},
  2024.
\newblock URL \url{https://openreview.net/forum?id=kmugaw9Kfq}.

\bibitem[Zhou et~al.(2024)Zhou, Zhou, Wang, Li, and Zhang]{zhou2024towards}
Zhou, J., Zhou, C., Wang, X., Li, P., and Zhang, M.
\newblock Towards stable, globally expressive graph representations with
  laplacian eigenvectors.
\newblock \emph{arXiv preprint arXiv:2410.09737}, 2024.

\bibitem[Zopf(2022)]{zopf20221wlexpressivenessalmostneed}
Zopf, M.
\newblock 1-wl expressiveness is (almost) all you need, 2022.
\newblock URL \url{https://arxiv.org/abs/2202.10156}.

\end{thebibliography}
\end{document}